\def\Secref#1{Section~\ref{#1}}
\def\eqref#1{equation~\ref{#1}}
\def\Algref#1{Alg.~\ref{alg:#1}}
\def\1{\bm{1}}
\def\rd{{\textnormal{d}}}
\def\mI{{\bm{I}}}
\DeclareMathAlphabet{\mathsfit}{\encodingdefault}{\sfdefault}{m}{sl}
\SetMathAlphabet{\mathsfit}{bold}{\encodingdefault}{\sfdefault}{bx}{n}
\def\calA{{\mathcal{A}}}
\def\calJ{{\mathcal{J}}}
\def\calL{{\mathcal{L}}}
\def\calN{{\mathcal{N}}}
\def\calO{{\mathcal{O}}}
\def\calW{{\mathcal{W}}}
\def\sR{{\mathbb{R}}}
\newcommand{\E}{\mathbb{E}}
\newcommand{\R}{\mathbb{R}}
\newcommand{\Var}{\mathrm{Var}}
\newcommand{\Cov}{\mathrm{Cov}}
\DeclareMathOperator*{\argmin}{arg\,min}
\DeclareMathOperator{\sign}{sign}
\def\eq#1{(\ref{eq:#1})}
\def\Secref#1{Sec.~\ref{sec:#1}}
\def\Propref#1{Prop.~\ref{prop:#1}}
\def\Lemmaref#1{Lemma~\ref{lm:#1}}
\newcommand{\fracpartial}[2]{\frac{\partial #1}{\partial  #2}}
\newcommand{\fracdiff}[2]{\frac{\rd #1}{\rd  #2}}
\newcommand{\br}[1]{\left[#1\right]}
\newcommand{\pr}[1]{\left(#1\right)}
\def\dt{{\rd t}}
\def\dWt{{\rd W_t}}
\newcommand{\norm}[1]{\|#1\|}
\def\Limp{{\calL_{\normalfont\text{implicit}}}}
\def\Lexp{{\calL_{\normalfont\text{explicit}}}}
\def\ut{{u_t^\theta}}
\newcommand{\eg}{{\ignorespaces\emph{e.g.,}}{ }}
\newcommand{\ie}{{\ignorespaces\emph{i.e.,}}{ }}
\newtheorem*{remark}{Remark}
\newcommand\numberthis{\addtocounter{equation}{1}\tag{\theequation}}
\colorlet{color1}{green!50!black}
\colorlet{color2}{orange!95!black}
\colorlet{color3}{red!80!black}
\colorlet{color4}{red!65!black}
\colorlet{color5}{blue!75!green}
\colorlet{blueee}{blue!50!black}
\definecolor{nicerred}{HTML}{99292A} %
\definecolor{niceyellow}{HTML}{D89A3C} %
\definecolor{nicerblue}{HTML}{417481} %
\colorlet{niceryellow}{niceyellow!80!black} %
\definecolor{amaranth}{rgb}{0.9, 0.17, 0.31}
\newcommand{\cgreen}[1]{{\color{color1} #1}}
\newcommand{\clgray}[1]{\ignorespaces{\color{lightgray} #1}}
\newcommand{\cyellow}[1]{\ignorespaces{\color{niceryellow} #1}}
\newcommand{\cgreenblue}[1]{\ignorespaces{\color{nicerblue} #1}}
\definecolor{amethyst}{HTML}{9966CC}
\definecolor{bostonred}{HTML}{CC0000}
\newcommand{\cmark}{{\ding{51}}}%
\newcommand{\xmark}{{\ding{55}}}%
\newcommand{\specialcell}[2][c]{%
  \begin{tabular}[#1]{@{}c@{}}#2\end{tabular}}
\newcommand{\specialcellr}[2][r]{%
  \begin{tabular}[#1]{@{}r@{}}#2\end{tabular}}
\newtheorem*{rep@theorem}{\rep@title}
\newcommand{\newreptheorem}[2]{%
\newenvironment{rep#1}[1]{%
 \def\rep@title{#2 \ref{##1}}%
 \begin{rep@theorem}}%
 {\end{rep@theorem}}}
\newcommand*\rel@kern[1]{\kern#1\dimexpr\macc@kerna}
\newcommand*\widebar[1]{%
  \begingroup
  \def\mathaccent##1##2{%
    \rel@kern{0.8}%
    \overline{\rel@kern{-0.8}\macc@nucleus\rel@kern{0.2}}%
    \rel@kern{-0.2}%
  }%
  \macc@depth\@ne
  \let\math@bgroup\@empty \let\math@egroup\macc@set@skewchar
  \mathsurround\z@ \frozen@everymath{\mathgroup\macc@group\relax}%
  \macc@set@skewchar\relax
  \let\mathaccentV\macc@nested@a
  \macc@nested@a\relax111{#1}%
  \endgroup
}
\title{Generalized Schr{\"o}dinger Bridge Matching}
\author{Guan-Horng Liu$^{\normalfont\text{1}}$\thanks{Work done in part as a research intern at FAIR, Meta.}~,~~Yaron Lipman$^{\normalfont\text{2,3}}$,~~Maximilian Nickel$^{\normalfont\text{3}}$,~~Brian Karrer$^{\normalfont\text{3}}$,\\
\textbf{Evangelos A. Theodorou}$^{\normalfont\text{1}}$,~~\textbf{Ricky T. Q. Chen}$^{\normalfont\text{3}}$ \\
$^{1}$Georgia Institute of Technology~~%
$^{\normalfont\text{2}}$Weizmann Institute of Science~~%
$^{\normalfont\text{3}}$FAIR, Meta\\
\texttt{\{ghliu, evangelos.theodorou\}@gatech.edu}\\
\texttt{\{ylipman, maxn, briankarrer, rtqichen\}@meta.com}
}
\newcommand{\Sch}[1]{Schr{\"o}dinger}
\newcommand{\SB}[1]{Schr{\"o}dinger Bridge}
\newcommand{\GSB}[1]{Generalized Schr{\"o}dinger Bridge}
\newcommand{\GBM}[1]{Generalized Bridge Matching}
\newcommand{\GSBM}[1]{Generalized Schr{\"o}dinger Bridge Matching}
\newcommand{\citesth}[1]{{\clgray{(cite something)}}}
\begin{document}

\maketitle

\begin{abstract}
    Modern distribution matching algorithms for training diffusion or flow models directly \emph{prescribe} the time evolution of the marginal distributions between two boundary distributions.
    In this work, we consider a generalized distribution matching setup, where these marginals are only \emph{implicitly} described as a solution to some task-specific objective function.
    The problem setup, known as the Generalized Schr{\"o}dinger Bridge (GSB),
    appears prevalently in many scientific areas both within and without machine learning.
    We propose \textbf{\GSBM{} (GSBM)}, a new matching algorithm inspired by recent advances,
    generalizing them beyond kinetic energy minimization and to account for task-specific state costs.
    We show that such a generalization can be cast as solving \emph{conditional} stochastic optimal control, for which efficient variational approximations can be used, and further debiased with the aid of path integral theory.
    Compared to prior methods for solving GSB problems, our GSBM algorithm better preserves a feasible transport map between the boundary distributions throughout training, thereby enabling stable convergence and significantly improved scalability.
    We empirically validate our claims on an extensive suite of experimental setups, including crowd navigation, opinion depolarization, LiDAR manifolds, and image domain transfer.
    Our work brings new algorithmic opportunities for training diffusion models enhanced with task-specific optimality structures.
    Code available at \url{
    https://github.com/facebookresearch/generalized-schrodinger-bridge-matching
    }
\end{abstract}

\section{Introduction} \label{sec:1}

The distribution matching problem---of learning transport maps that match specific distributions---is a ubiquitous problem setup that appears in many areas of machine learning, with extensive applications in
optimal transport \citep{peyre2017computational},
domain adaptation \citep{wang2018deep},
and generative modeling \citep{sohl2015deep,chen2018neural}.
The tasks are often cast as learning mappings,
$X_0 \mapsto X_1$,
such that $X_0 \sim \mu$ $X_1 \sim \nu$ follow the (unknown) laws of two distributions $\mu,\nu$.
For instance, diffusion models\footnote{
    We adopt constant $\sigma \in \sR$ throughout the paper but note that all analysis
    generalize to time-dependent $\sigma_t$.
} construct the mapping as the solution to a
stochastic differential equation (SDE) whose drift $u_t^\theta(X_t): \sR^d\times[0,1]\rightarrow \sR^d$
is parameterized by $\theta$:
\begin{align}
   \rd X_t = u_t^\theta(X_t) \dt + \sigma \dWt, \quad X_0 \sim \mu, X_1 \sim \nu. \label{eq:1}
\end{align}
The marginal density $p_t$ induced by \eq{1} evolves as
the Fokker Plank equation (FPE; \cite{risken1996fokker}),%
\begin{align}
\fracpartial{}{t}p_t(X_t) = - \nabla \cdot (u_t^\theta(X_t)~p_t(X_t)) + \frac{\sigma^2}{2} \Delta p_t(X_t),\quad
   p_0 = \mu, p_1 = \nu, \label{eq:2}
\end{align}
and prescribing $p_t$ with fixed $\sigma$ uniquely determines a family of parametric SDEs in \eq{1}.
Indeed, modern successes of diffusion models in synthesizing high-fidelity data
\citep{song2021score,dhariwal2021diffusion}
are attributed, partly, to constructing $p_t$ as a mixture of tractable conditional probability paths~\citep{liu2023flow,albergo2023stochastic,tong2023improving}.
These tractabilities enable scalable algorithms that ``\emph{match $u_t^\theta$ given $p_t$}'',
hereafter referred to as \emph{matching algorithms}.\looseness=-1

Alternatively, one may also specify $p_t$ implicitly as the optimal solution to some objective function, with examples such as optimal transport~(OT; \cite{villani2009optimal}), or the \SB{} problem (SB; \cite{schr1931uber,fortet1940resolution,de2021diffusion}).
SB generalizes standard diffusion models to arbitrary $\mu$ and $\nu$ with fully nonlinear stochastic processes and, among all possible SDEs that match between $\mu$ and $\nu$,
seeks the \emph{unique} $u_t$ that minimizes the kinetic energy.

While finding the transport with minimal kinetic energy can be motivated from statistical physics or
entropic optimal transport \citep{peyre2019computational,vargas2021solving}, %
with kinetic energy often being correlated with sampling efficiency in generative modeling~\citep{chen2022likelihood,shaul2023kinetic},
it nevertheless limits the flexibility in the {design} of ``optimality''.
Indeed, kinetic energy corresponds to the squared-Euclidean cost in OT, which, despite its popularity, is merely one among numerous alternatives available for deployment \citep{di2017optimal}.
On one hand, it remains debatable whether the $\ell_2$ distance defined in the original data space
(\eg pixel space for images), as opposed to other metrics,
are best suited for quantifying the optimality of transport maps.
On the other hand, distribution matching in general scientific domains,
such as population modeling \citep{ruthotto2020machine}, robot navigation \citep{liu2018mean},
or molecule simulation \citep{noe2020machine}, often involves more complex optimality conditions that require more general algorithms to handle.

To this end, we advocate a generalized setup for distribution matching,
previously introduced as the \GSB{} problem
(GSB; \cite{chen2015optimal,chen2023density,liu2022deep}):
\begin{align}
   \cgreenblue{\min_\theta \int_0^1\E_{p_t}\br{\frac{1}{2}\norm{u_t^\theta(X_t)}^2 + V_t(X_t)}\dt }
   \quad \text{\cyellow{subject to \eq{1} or, equivalently, \eq{2}}}.
   \label{eq:3}
\end{align}
GSB is a distribution matching problem---as it still seeks a
diffusion model \eq{1} that transports $\mu$ to $\nu$. Yet, in contrast to standard SB,
the objective of GSB involves an additional state cost $V_t$ which
affects the solution by quantifying a penalty---or equivalently a reward---similar to general decision-making problems. This state cost can also include distributional properties of the random variable $X_t$.
Examples of $V_t$ include, \eg the mean-field interaction in opinion propagation \citep{gaitonde2021polarization},
quantum potential \citep{philippidis1979quantum}, or a geometric prior.

Solving GSB problems involves addressing two distinct aspects:
\cgreenblue{\textbf{optimality}} \eq{3} and \cyellow{\textbf{feasibility}} \eq{2}.
Within the set of feasible solutions that satisfy \eq{2},
GSB considers the one with the lowest objective in \eq{3} to be optimal.
Therefore, it is essential to develop algorithms that
search \emph{within} the feasible set for the optimal solution.
Unfortunately, existing methods that approximate solutions to \eq{3}
either require relaxing feasibility \citep{koshizuka2023neural}, or,
following the design of Sinkhorn algorithms \citep{cuturi2013sinkhorn},
prioritize optimality over feasibility.
While an exciting line of new matching algorithms \citep{peluchetti2023diffusion,shi2023diffusion}
has been developed for SB, \ie when $V_t := 0$, to ensure that the solutions are proximity to the feasible set throughout training,
it remains unclear whether, or how, these ``\textit{SB Matching}'' (SBM) algorithms can be extended
to handle nontrivial $V_t$.

We propose \textbf{{Generalized Schr{\"o}dinger Bridge Matching (GSBM)}},
a new matching algorithm that generalizes SBM to nontrivial $V_t$.
We discuss how such a generalization can be tied to a
\emph{conditional stochastic optimal control} (CondSOC) problem,
from which existing SBM algorithms can be derived as special cases.
We develop scalable solvers to the CondSOC using Gaussian path approximation, further debiased with path integral theory \citep{kappen2005path}.
GSBM inherits a similar algorithmic characterization to its ancestors~\citep{liu2023flow,shi2023diffusion},
in that, during the optimization process, the learned $p_0^\theta$ and $p_1^\theta$ induced by the subsequent solutions of $\ut$ remain close to the boundary marginals $(\mu,\nu)$ and preserve them exactly under stricter theoretical conditions; see Sec.~\ref{sec:3.1} for details.
This distinguishes GSBM from prior methods (\eg \citet{liu2022deep}) that learn approximate solutions to the same problem \eq{3} but whose subsequent solutions only approach $(\mu,\nu)$ after final convergence.
This further results in a framework that relies solely on samples from $\mu,\nu$---without knowing their densities---and enjoys stable convergence,
making it suitable for high dimensional applications.
A note on the connection to stochastic optimal control and related works \citep{liu2022deep} can be found in \Cref{app:review}.
Summarizing, we present the following contributions:\looseness=-1
\vspace{-5pt}
\begin{itemize}[leftmargin=10pt]
   \item
    We propose GSBM, a new matching algorithm for learning diffusion models between two distributions that also respect some task-specific optimality structures in \eq{3} via specifying $V_t$.
   \item
    GSBM casts recent matching methods as conditional stochastic optimal control problems, from which nontrivial $V_t$ can be incorporated and solved at scale via Gaussian path approximation.
   \item
    Compared to prior methods, \eg \citet{liu2022deep}, that also provide approximate solutions to \eq{3},
    GSBM enjoys stable convergence, improved scalability, and,
    crucially, maintains a transport map that remains \emph{much} closer to the distribution boundaries throughout the entire training.
   \item
   Through extensive experiments, we showcase GSBM's remarkable capabilities across a variety of distribution matching problems, ranging from standard crowd navigation and 3D navigation over LiDAR manifolds, to high-dimensional opinion modeling and unpaired image translation.
\end{itemize}

\section{Preliminaries: Matching Diffusion Models given Prob. Paths} \label{sec:2}

\begin{figure}[t]
    \vskip -0.05in
    \begin{minipage}[t]{0.46\textwidth} %
       \begin{algorithm}[H]
          \caption{\texttt{match} (implicit)}
          \label{alg:iml}
          \begin{algorithmic}
          \Require $p_t$ s.t. $p_0 = \mu$, $p_1 = \nu$
          \vphantom{$p_t := \E_{p_{0,1}}[p_{t|0,1}]$ s.t. $p_0 {=} \mu$, $p_1 {=} \nu$,}
          \Repeat
             \State Sample $X_0\sim p_0$, $X_t \sim p_t$, $X_1 \sim p_1$
             \vphantom{Sample $X_0, X_1\sim p_{0,1}$, $X_t \sim p_{t|0,1}$}
             \State \vphantom{Compute $u_{t|0,1}$ given $(X_0,X_t,X_1)$}
             \State Take gradient step w.r.t. $\Limp(\theta)$
          \Until{converges}
          \State \Return{$\nabla s_t^{\theta^\star}$}
          \end{algorithmic}
       \end{algorithm}
    \end{minipage}
    \hfill
    \begin{minipage}[t]{0.53\textwidth}
       \begin{algorithm}[H]
          \caption{\texttt{match} (explicit)}
          \label{alg:exp}
          \begin{algorithmic}
          \Require $p_t := \E_{p_{0,1}}[p_{t|0,1}]$ s.t. $p_0 {=} \mu$, $p_1 {=} \nu$, \cgreen{$u_{t|0,1}$}
          \Repeat
             \State Sample $X_0, X_1\sim p_{0,1}$, $X_t \sim p_{t|0,1}$
             \State \cgreen{Compute $u_{t|0,1}$ given $(X_0,X_t,X_1)$}
             \State Take gradient step w.r.t. $\Lexp(\theta)$
          \Until{converges}
          \State \Return{$u_t^{\theta^\star}$}
          \end{algorithmic}
       \end{algorithm}
    \end{minipage}
    \vskip -0.05in
\end{figure}

As mentioned in Sec.~\ref{sec:1}, the goal of matching algorithms
is to learn a SDE parametrized with $u_t^\theta$ such that
its FPE \eq{2} matches some prescribed marginal $p_t$ for all $t\in[0,1]$.
In this section, we review two classes of matching algorithms
that will play crucial roles in the development of our GSBM.

\textbf{Entropic Action Matching (implicit).$\quad$}
This is a recently proposed matching method \citep{neklyudov2023action} that learns the unique gradient field
governing an FPE prescribed by $p_t$.
Specifically, let $s_t^\theta(X_t): \sR^d \times [0,1] \rightarrow \sR$ be a parametrized function,
then the unique gradient field $\nabla s_t^\theta(X_t)$
---by which the probability density of the pushforward from $\mu$ at time $0$ to $t$ \text{matches} $p_t$---minimizes
\begin{align}
   \Limp(\theta) := \E_\mu\br{s_0^\theta(X_0)} - \E_\nu\br{s_1^\theta(X_1)}
    + \int_0^1 \E_{p_t}\br{
        \fracpartial{s_t^\theta}{t} + \frac{1}{2}\norm{\nabla s_t^\theta}^2 + \frac{\sigma^2}{2}\Delta s_t^\theta
    } \dt.
      \label{eq:4}
\end{align}
\citet{neklyudov2023action} showed that $\Limp(\theta)$ \emph{implicitly matches} a unique gradient field $\nabla s_t^\star$
with least kinetic energy, \ie
it is equivalent to minimizing $\int_0^1 \E_{p_t}\br{\frac{1}{2}\norm{\nabla s_t^\theta - \nabla s_t^\star}^2} \dt$,
where
\begin{align*}
    \nabla s_t^\star {=} \argmin_{u_t}
    \int_0^1\E_{p_t}\br{\frac{1}{2}\norm{u_t(X_t)}^2 }\dt \text{~ subject to \eq{2}}.
\end{align*}

\textbf{Bridge \& Flow Matching (explicit).$\quad$}
If the $p_t$ can be factorized into $p_t := \E_{p_{0,1}}[p_t(X_t|x_0, x_1)]$,
where the conditional density $p_t(X_t|x_0, x_1)$---denoted with the shorthand $p_{t | 0, 1}$---is associated with an SDE,
$\rd X_t = u_t(X_t|x_0,x_1) \dt + \sigma \dWt$,
then it can be shown \citep{lipman2023flow,albergo2023building,liu2023i2sb} that the minimizer of (see \Cref{app:bm} for the derivation)
\begin{align}
   \Lexp(\theta)
      := \int_0^1 \E_{p_{0,1}} {  \E_{p_{t|0,1}}\br{\frac{1}{2}\norm{u^\theta_t(X_t) - u_t(X_t|x_0,x_1)}^2} \dt},
    \label{eq:5}
\end{align}
similar to $\nabla s_t^{\theta^\star}$,
also satisfies the FPE prescribed by $p_t$.
In other words, $u^{\theta^\star}_t$ preserves $p_t$ for all $t$.

\textbf{Implicit \textit{vs.} explicit matching losses.$\quad$}
While Entropic Action Matching presents a general matching method with the least assumptions,
its implicit matching loss $\Limp$ \eq{4} scales unfavorably to high-dimensional applications,
due to the need to approximate the Laplacian (using \eg \citet{hutchinson1989stochastic}), and also introduces unquantifiable bias when optimizing over a restricted family of functions such as deep neural networks.
The explicit matching loss $\Lexp$ \eq{5} offers a computationally efficient alternative but requires additional information, namely $u_t(X_t|x_0,x_1) \equiv u_{t|0,1}$.
Remarkably, in both cases, the minimizers preserve the prescribed marginal $p_t$.
Hence, if $p_0 = \mu$ and $p_1 = \nu$, these matching algorithms shall \emph{always} return
a feasible solution---a diffusion model that matches between $\mu$ and $\nu$.
We summarize the aforementioned two methods in Alg.~\ref{alg:iml} and \ref{alg:exp}.

\section{\GSBM{} (GSBM)} \label{sec:3}

We propose \textbf{\GSBM{} (GSBM)}, a novel matching algorithm that,
in contrast to those in Sec.~\ref{sec:2} assuming prescribed $p_t$, concurrently optimizes $p_t$ to minimize \eq{3} subject to the feasibility constraint \eq{2}.
All proofs can be found in \Cref{app:proof}.

\subsection{Alternating optimization scheme} \label{sec:3.1}

Let us revisit the GSB problem \eq{3}, particularly its FPE constraint in \eq{2}.
Recent advances in dynamic optimal transport \citep{liu2023flow} and SB \citep{peluchetti2023diffusion}
propose a decomposition of this dynamical constraint into two components:
the marginal $p_{t}$, $t\in(0,1)$, and the joint coupling between boundaries $p_{0,1}$,
and employ alternating optimization between $\ut$ and $p_t$.
Specifically, these optimization methods,
which largely inspired recent variants of SB Matching \citep{shi2023diffusion} and our GSBM,
generally obey the following recipe, \emph{alternating} between two stages:
\vspace*{-3pt}
\begin{enumerate}[leftmargin=1.7cm]
   \item[Stage 1:] Optimize the objective, in our case, \eq{3} w.r.t. the drift $u_t^\theta$ given fixed $p_{t\in[0,1]}$.
   \item[Stage 2:] Optimize \eq{3} w.r.t. the marginals $p_{t\in(0,1)}$ given the coupling $p^\theta_{0,1}$ defined by $u_t^\theta$. \label{i1}
\end{enumerate}
\vspace*{-3pt}

Notice particularly that the optimization posed in Stage 1 resembles the matching algorithms in \Secref{2}.
We make the connection concrete in the following proposition:
\vspace*{2pt}
\begin{restatable}[Stage 1]{prop}{propone}\label{prop:1}
   The unique minimizer to Stage 1 coincides with $\nabla s_t^\star(X_t)$.
\end{restatable}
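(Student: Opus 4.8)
The plan is to show that Stage~1---minimizing \eqref{eq:3} over $u_t^\theta$ with the marginals $p_{t\in[0,1]}$ held fixed---reduces exactly to the least-kinetic-energy matching problem already characterized in \Secref{2}, whose unique solution is $\nabla s_t^\star$. First I would observe that once $p_t$ is fixed for all $t\in[0,1]$, the state-cost term $\int_0^1 \E_{p_t}[V_t(X_t)]\,\dt$ in \eqref{eq:3} is a constant, independent of $\theta$; hence minimizing \eqref{eq:3} in Stage~1 is equivalent to minimizing $\int_0^1 \E_{p_t}[\tfrac12\norm{u_t^\theta(X_t)}^2]\,\dt$. The nontrivial content is then that among all drifts $u_t^\theta$ whose induced SDE \eqref{eq:1} satisfies the FPE \eqref{eq:2} for the prescribed $p_t$, the one minimizing the kinetic energy is precisely $\nabla s_t^\star$, and it is unique.

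The key steps, in order, would be: (i) reduce Stage~1 to kinetic-energy minimization subject to \eqref{eq:2}, as above; (ii) recall the Helmholtz-type decomposition argument: any feasible drift can be written as $u_t = \nabla s_t^\star + w_t$ where $w_t$ is the ``divergence-free relative to $p_t$'' correction, i.e.\ $\nabla\cdot(p_t w_t)=0$, which is exactly the freedom left after fixing $p_t$ in \eqref{eq:2}; (iii) show $\E_{p_t}[\langle \nabla s_t^\star, w_t\rangle]=0$ by integration by parts against $\nabla\cdot(p_t w_t)=0$, so that $\E_{p_t}[\norm{u_t}^2] = \E_{p_t}[\norm{\nabla s_t^\star}^2] + \E_{p_t}[\norm{w_t}^2] \ge \E_{p_t}[\norm{\nabla s_t^\star}^2]$ with equality iff $w_t\equiv 0$ ($p_t$-a.e.); (iv) conclude that $\nabla s_t^\star$ is the unique minimizer, and invoke the result of \citet{neklyudov2023action} quoted in \Secref{2} that $\nabla s_t^\star = \argmin_{u_t}\int_0^1\E_{p_t}[\tfrac12\norm{u_t}^2]\,\dt$ subject to \eqref{eq:2}, matching the displayed characterization. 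Alternatively, and perhaps more cleanly given what the excerpt grants us, I would note that $\Limp(\theta)$ from \eqref{eq:4} is literally this constrained kinetic-energy objective (its minimizer is stated in \Secref{2} to be $\nabla s_t^\star$ with least kinetic energy), so Stage~1 and the implicit matching loss share the same minimizer by construction; the work is just to verify the state-cost term drops out and the constraint sets coincide.

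The main obstacle I expect is making the ``constraint set'' identification rigorous: I need to argue that ranging over parametric drifts $u_t^\theta$ that are feasible for \eqref{eq:2} with the \emph{fixed} $p_t$ is the same optimization (same feasible set, same objective up to an additive constant) as the one defining $\nabla s_t^\star$ in \Secref{2}. This requires care about whether the parametrization is expressive enough to realize $\nabla s_t^\star$ (so that the infimum is attained), and about uniqueness---which follows from strict convexity of $u\mapsto\E_{p_t}[\norm{u}^2]$ on the affine feasible set, giving at most one minimizer, hence exactly $\nabla s_t^\star$. A secondary technical point is justifying the integration by parts in step~(iii), which needs mild decay/integrability of $p_t$, $s_t^\star$, and $w_t$; I would state these as standing regularity assumptions rather than belabor them, consistent with the paper's level of rigor.
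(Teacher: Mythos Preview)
Your proposal is correct and aligns with the paper's approach: the paper's proof is precisely your ``alternative'' route, observing that once $p_t$ is fixed the term $\int_0^1\E_{p_t}[V_t(X_t)]\,\dt$ is independent of $u_t^\theta$, so Stage~1 becomes exactly the implicit matching problem of \Secref{2}, whose unique minimizer is $\nabla s_t^\star$. The paper's proof is only two sentences and does not spell out the Helmholtz decomposition or the orthogonality argument (your steps (ii)--(iv)); it simply defers to the characterization already stated for $\Limp$, so your additional detail is more than the paper provides but entirely consistent with it.
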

\vspace*{-2pt}
This may seem counter-intuitive at first glance, both the kinetic energy and $V_t$ show up in \eq{3}.
This is due to the fact that $\ut$ no longer affects the value of $\E_{p_t}[V_t(X_t)]$ once $p_t$ is fixed, and out of all $\ut$ whose probability density matches $p_t$, the gradient field is the unique minimizer of kinetic energy.\footnote{
    Notably, the absence of $V_t$ in optimization was previously viewed as an issue for Sinkhorn methods aimed at solving \eq{3} \citep{liu2022deep}. Yet, in GSBM, it appears naturally from how the optimization is decomposed.
}
We emphasize that the role of Stage~1 is to learn a $\ut$ given the prescribed $p_t$ and provides a better coupling $p_{0,1}^\theta$ which is then used to refine $p_t$ in Stage~2. Therefore, the explicit matching loss \eq{5} provides just the same algorithmic purpose.
Though it only upper-bounds the objective of Stage~1,
its solution often converges stably from a measure-theoretic perspective (see \Cref{app:bm} for more explanations).
In practice, we find that $\Lexp$ performs as effectively as $\Limp$, while exhibiting significantly improved scalability, hence better suited for high-dimensional applications.

\newcommand{\annoCell}[2]{{\specialcell{#1 \\[-3pt] {\scriptsize (#2)} }}}
\newcommand{\annoCellr}[2]{{\specialcellr{#1 \\[-3pt] {\scriptsize (#2)} }}}

We now present our main result, which shows that Stage 2 of GSBM can be cast as a variational problem,
where $V_t$ appears through optimizing a conditional controlled process.
\begin{restatable}[{Stage 2}; Conditional stochastic optimal control; CondSOC]{prop}{proptwo}\label{prop:2}
    Let the minimizer to Stage~2 be factorized by
    $p_t(X_t) = \int p_t(X_t|x_0, x_1) p^\theta_{0,1}(x_0,x_1) \rd x_0 \rd x_1$
    where $p_{0,1}^\theta$
    is the boundary coupling induced by solving \eq{1} with $\ut$.
    Then, $p_t(X_t|x_0, x_1) \equiv p_{t|0,1}$ solves %
    :
    \begin{subequations}
        \label{eq:6}
        \begin{align}
        \min_{u_{t|0,1}} \calJ := \int_0^1\E_{p_t(X_t | x_0, x_1)} \br{\frac{1}{2}\norm{u_t(X_t|x_0, x_1)}^2 + V_t(X_t)}\dt \label{eq:6a} \\
        \text{s.t. }  \rd X_t = u_t(X_t|x_0, x_1)\dt + \sigma \dWt, \quad {X_0 = x_0,~~X_1 = x_1}
        \label{eq:6b}
        \end{align}
    \end{subequations}
\end{restatable}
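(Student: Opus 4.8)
The crucial point is that in Stage~2 the boundary coupling $p^\theta_{0,1}$ is held fixed, so the objective \eq{3} disintegrates over the endpoint pair $(X_0,X_1)$ into a family of mutually independent subproblems. The plan is to (i)~represent every admissible candidate as a mixture of bridges over $p^\theta_{0,1}$, (ii)~show that the Stage~2 objective then splits into a term that does not depend on the bridges plus $\E_{p^\theta_{0,1}}[\calJ]$, with $\calJ$ the CondSOC cost in \eq{6a}, and (iii)~conclude by minimizing under the integral sign. Concretely, any path measure $\mathbb{P}$ whose marginals obey \eq{2} and whose boundary coupling is $p^\theta_{0,1}$ factorizes as $\mathbb{P} = \int\mathbb{P}^{x_0,x_1}\,p^\theta_{0,1}(\rd x_0\,\rd x_1)$, where $\mathbb{P}^{x_0,x_1}$ is the bridge pinned at $X_0{=}x_0,\,X_1{=}x_1$, with marginal densities $p_{t|0,1}$ and drift $u_t(\cdot\mid x_0,x_1)$; its marginals $p_t = \int p_{t|0,1}\,p^\theta_{0,1}(\rd x_0\,\rd x_1)$ are exactly what Stage~2 optimizes.

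\textbf{Decomposing the objective.} The state cost splits \emph{exactly} by the tower property over $(X_0,X_1)$: $\int_0^1\E_{p_t}[V_t(X_t)]\,\dt = \E_{p^\theta_{0,1}}\!\big[\int_0^1\E_{p_{t|0,1}}[V_t(X_t)]\,\dt\big]$. For the kinetic term I would use Girsanov together with the chain rule for relative entropy under the $\sigma$-algebra generated by $(X_0,X_1)$: writing $\mathbb{W}$ for the driftless reference diffusion of diffusivity $\sigma^2$ started at $\mu$, and $\mathbb{W}^{x_0,x_1}$ for its Brownian bridges, one has $\tfrac1{2\sigma^2}\int_0^1\E_{p_t}[\norm{u_t}^2]\,\dt = \KL(\mathbb{P}\,\|\,\mathbb{W}) - \KL(p_0\,\|\,w_0)$ and $\KL(\mathbb{P}\,\|\,\mathbb{W}) = \KL(p^\theta_{0,1}\,\|\,w_{0,1}) + \E_{p^\theta_{0,1}}[\KL(\mathbb{P}^{x_0,x_1}\,\|\,\mathbb{W}^{x_0,x_1})]$. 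Since $p^\theta_{0,1}$ (hence $p_0$) is frozen, the first two divergences are constants, while $\sigma^2\KL(\mathbb{P}^{x_0,x_1}\,\|\,\mathbb{W}^{x_0,x_1})$ equals $\tfrac12\int_0^1\E_{p_{t|0,1}}[\norm{u_t(\cdot\mid x_0,x_1)}^2]\,\dt$ minus the reference-bridge action, a quantity that does not depend on $\mathbb{P}^{x_0,x_1}$. Hence the Stage~2 objective equals $C + \E_{p^\theta_{0,1}}[\calJ]$ with $C$ not depending on $\{p_{t|0,1}\}$. (A shortcut avoiding path-space machinery: the drift of the mixture's marginal flow is $\bar u_t(x)=\E[\,u_t(x\mid X_0,X_1)\mid X_t=x\,]$, so by conditional Jensen the averaged conditional kinetic energy majorizes $\int_0^1\E_{p_t}[\tfrac12\norm{u_t}^2]\,\dt$, the gap being eliminated by the Markovian projection performed in Stage~1.)

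\textbf{Conclusion.} Because $p^\theta_{0,1}$ is a fixed probability measure and each $\calJ$ is no smaller than the infimum of $\calJ$ over bridges from $x_0$ to $x_1$, the infimum of $\E_{p^\theta_{0,1}}[\calJ]$ is attained only when, for $p^\theta_{0,1}$-almost every $(x_0,x_1)$, the conditional process $p_{t|0,1}$ minimizes $\calJ$ among all controlled diffusions \eq{6b} pinned at $X_0{=}x_0,\,X_1{=}x_1$---that is, $p_{t|0,1}$ solves \eq{6}; otherwise one substitutes a measurably-selected minimizer on a set of positive $p^\theta_{0,1}$-measure and strictly lowers the objective, a contradiction. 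This is precisely the asserted factorization of the Stage~2 minimizer.

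\textbf{Main obstacle.} The real work is making the path-measure calculus rigorous: the measurable disintegration of $\mathbb{P}$ into genuine bridge measures, the relative-entropy chain rule under $\sigma(X_0,X_1)$, and---most delicately---a principled reading of the \emph{raw} kinetic cost in \eq{6a}, which diverges under hard endpoint pinning (the Brownian-bridge drift $\tfrac{x_1-X_t}{1-t}$ alone contributes a term of order $\int_0^1\tfrac{\dt}{1-t}=\infty$). Thus \eq{6} is meaningful only modulo that $(x_0,x_1)$-dependent but control-independent (infinite) constant, equivalently in the controlled-bridge form that the subsequent path-integral analysis makes explicit, where the cost is finite and the minimizer unambiguous. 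A lesser caveat: if one reads Stage~2 as optimizing \eq{3} verbatim rather than the disintegrated surrogate above, one should record that the surrogate upper-bounds \eq{3}, with equality recovered after the Stage~1 projection; the remaining steps are routine.
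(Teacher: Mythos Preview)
Your proposal is correct and, in several respects, more careful than the paper's own argument. The paper takes a far more direct route: it simply applies Fubini's theorem to pull the fixed coupling $p_{0,1}^\theta$ out of both the objective in \eq{3} and each term of the FPE constraint \eq{2}, writing $\int_0^1\E_{p_t}[\tfrac12\norm{u_t}^2+V_t]\,\dt = \E_{p_{0,1}}\big[\int_0^1\E_{p_{t|0,1}}(\tfrac12\norm{u_t}^2+V_t)\,\dt\big]$ and, term by term, $\partial_t p_{t|0,1} = -\nabla\cdot(u_t\,p_{t|0,1}) + \tfrac{\sigma^2}{2}\Delta p_{t|0,1}$ with Dirac boundaries, then concludes by ``collecting terms'' that the problem decouples over $(x_0,x_1)$. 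Notably the paper keeps the \emph{same} symbol $u_t$ in both the marginal and the conditional problem and never invokes Girsanov, the relative-entropy chain rule, or measurable selection. Your path-measure route buys a cleaner separation between the marginal drift and the bridge drift (which genuinely differ), and your ``main obstacle'' paragraph flags a real issue the paper does not confront: the raw kinetic integral in \eq{6a} is formally infinite under hard pinning at $X_1=x_1$, so \eq{6} is only meaningful modulo that divergent but control-independent constant. The paper's later path-integral reformulation (Proposition~\ref{prop:4}) implicitly sidesteps this, but the proof of Proposition~\ref{prop:2} itself does not. Your Jensen ``shortcut'' is also closer in spirit to the paper's surrounding discussion of $\Lexp$ versus $\Limp$ than the Fubini-only argument the paper actually gives.
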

\begin{wrapfigure}[13]{r}{0.47\textwidth} %
    \vskip -0.17in
    \captionsetup{type=table}
    \caption{
        Solutions to \eq{6} w.r.t. different $V_t$,
        and how they link to different methods, including
        Rectified flow \citep{liu2023flow}, DSBM \citep{shi2023diffusion},
        and our GSBM.
    }
    \vskip -0.1in
    \label{tb:V}
    \centering
        \setlength{\tabcolsep}{5pt}
        \begin{tabular}{rccc}
            \toprule
            Method & $V_t(x)$ & $p_t(X_t|x_0,x_1)$ \\
            \midrule
            \annoCellr{RecFlow}{$\sigma{=}0$}
            & 0 &
            \annoCell{straight line}{\Lemmaref{3} with $\alpha,\sigma{\rightarrow}0$}
            \\
            \annoCellr{DSBM}{$\sigma{>}0$}
            & 0 &
            \annoCell{Brownian bridge}{\Lemmaref{3} with $\alpha{\rightarrow}0$}
            \\
            \midrule
            \multirow{2}{*}{\annoCellr{\textbf{GSBM}}{$\sigma{\ge}0$}}
             & quadratic & \Lemmaref{3} \\[1pt]
             & arbitrary & \Secref{3.2} \\
            \bottomrule
          \end{tabular}
\end{wrapfigure}
Note that \eq{6} differs from \eq{3} only in the boundary conditions,
where the original distributions $\mu,\nu$ are replaced by the two end-points
$(x_0,x_1)$ drawn from the coupling induced by $\ut$.
Generally, the solution to \eq{6} is not known in closed form, except in special cases.
In \Lemmaref{3}, we show such a case when $V(x)$ is quadratic and $\sigma {>} 0$.
Note that as $V(x)$ vanishes,
$p_{t|0,1}$ in \Lemmaref{3} collapses to the Brownian bridge and
GSBM recovers the matching algorithm appearing in prior works~\citep{liu2022rectified,shi2023diffusion} that approximate OT/SB, as summarized in \Cref{tb:V}.
This suggests that our \Propref{2} directly generalizes them to nontrivial $V_t$.
\begin{restatable}[Analytic solution to \eq{6} for quadratic $V$ and $\sigma>0$]{lm}{lmthree}\label{lm:3}
    Let $V(x) := \alpha \|\sigma x\|^2$, $\alpha,\sigma > 0$, then the optimal solution to \eq{6} follows a Gaussian path
    $X_t^\star \sim \calN(c_t x_0 + e_t x_1, \gamma_t^2\mI_d)$, where
    \vspace*{-2pt}
    \begin{align*}
        c_t = \frac{\sinh(\eta (1{-}t))}{\sinh \eta},~~~
        e_t = \cosh(\eta (1{-}t)) - c_t \cosh\eta,~~~
        \gamma_t = \sigma \sqrt{\frac{\sinh(\eta (1{-}t))}{\eta} e_t},~~~
        \eta = \sigma \sqrt{2\alpha}.
    \end{align*}
    These coefficients
    recover Brownian bridges as $\alpha \rightarrow 0$ and,
    if further $\sigma \rightarrow 0$, straight lines.
\end{restatable}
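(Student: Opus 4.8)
The plan is to treat \eq{6} with $V(x) = \alpha\norm{\sigma x}^2 = \alpha\sigma^2\norm{x}^2$ as a pinned linear--quadratic stochastic control problem and solve it through the Hamilton--Jacobi--Bellman (HJB) equation and its Hopf--Cole linearization. Writing $J_t(x)$ for the optimal cost-to-go from state $x$ at time $t$ to the terminal condition $X_1 = x_1$, it satisfies $\partial_t J_t - \frac12\norm{\nabla J_t}^2 + \frac{\sigma^2}{2}\Delta J_t + V_t = 0$, with optimal feedback $u_t^\star(x) = -\nabla J_t(x)$. Substituting $J_t = -\sigma^2\log\phi_t$ turns this into the \emph{linear} backward equation $\partial_t\phi_t + \frac{\sigma^2}{2}\Delta\phi_t = \alpha\norm{x}^2\phi_t$ with terminal data $\phi_1 = \delta_{x_1}$, so by Feynman--Kac $\phi_t(x) = \E\br{\delta_{x_1}(X_1)\exp(-\alpha\int_t^1\norm{X_s}^2\rd s)}$ for a driftless $\sigma$-Brownian motion started at $(t,x)$. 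Equivalently, this identifies the optimally controlled law as the Brownian bridge $X_0 = x_0 \to X_1 = x_1$ reweighted by $\exp(-\alpha\int_0^1\norm{X_t}^2\dt)$ --- the Doob $h$-transform/Schr{\"o}dinger-bridge viewpoint consistent with the rest of the paper.

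Next I would exploit that for the quadratic potential the Feynman--Kac expectation is the Euclidean harmonic-oscillator (Mehler) propagator, which is Gaussian: $\phi_t(x) \propto \exp(-\frac{1}{2\sigma^2}(a_t\norm{x}^2 - 2 b_t\, x\cdot x_1) + (\text{terms in }x_1))$, where $a_t$ solves the scalar Riccati ODE $\dot a_t = a_t^2 - \eta^2$ with $\eta := \sigma\sqrt{2\alpha}$ and the bridge boundary behavior $a_t \to \infty$ as $t\to1$ (hence $a_t = \eta\coth(\eta(1-t))$), and $b_t$ solves the linear equation $\dot b_t = a_t b_t$ with the normalization forced by $\phi_1 = \delta_{x_1}$ (hence $b_t = \eta/\sinh(\eta(1-t))$); the matrix Riccati degenerates to a scalar by isotropy. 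Consequently the optimal drift $u_t^\star(x) = \sigma^2\nabla\log\phi_t(x) = -(a_t x - b_t x_1)$ is affine, the controlled SDE $\rd X_t = -(a_t X_t - b_t x_1)\dt + \sigma\dWt$, $X_0 = x_0$, is linear, and therefore $X_t^\star$ is Gaussian, $X_t^\star \sim \calN(m_t, \gamma_t^2\mI_d)$, with isotropic covariance since both the dynamics and $V$ are rotationally symmetric. This justifies the ansatz; it remains to identify $m_t := c_t x_0 + e_t x_1$ and $\gamma_t^2$.

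Taking expectations in the controlled SDE gives $\dot m_t = -a_t m_t + b_t x_1$; combining this with the equations for $a_t, b_t$ yields the clean second-order linear ODE $\ddot m_t = \eta^2 m_t$, whose general solution is spanned by $\sinh(\eta t)$ and $\cosh(\eta t)$. Imposing $m_0 = x_0$ and $m_1 = x_1$ and separating the $x_0$- and $x_1$-contributions gives $c_t = \sinh(\eta(1-t))/\sinh\eta$ and $e_t = \sinh(\eta t)/\sinh\eta$, and the hyperbolic subtraction identity $\sinh(\eta - \eta(1-t)) = \sinh\eta\cosh(\eta(1-t)) - \cosh\eta\sinh(\eta(1-t))$ rewrites the latter as $e_t = \cosh(\eta(1-t)) - c_t\cosh\eta$ --- the stated form. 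For the variance, $\gamma_t^2$ obeys $\dot{(\gamma_t^2)} = -2a_t\gamma_t^2 + \sigma^2$ with the degenerate two-point data $\gamma_0 = \gamma_1 = 0$; solving it (equivalently, recognizing $\gamma_t^2$ as the diagonal of the Dirichlet Green's function on $[0,1]$ of $\frac1{\sigma^2}(-\partial_t^2 + \eta^2)$, i.e. the path-space covariance of the reweighted bridge) yields $\gamma_t^2 = \frac{\sigma^2}{\eta}\frac{\sinh(\eta t)\sinh(\eta(1-t))}{\sinh\eta} = \sigma^2\frac{\sinh(\eta(1-t))}{\eta}e_t$.

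Finally I would read off from the closed forms that $c_0 = 1$, $e_0 = 0$, $\gamma_0 = 0$ and $c_1 = 0$, $e_1 = 1$, $\gamma_1 = 0$ (all from $\sinh 0 = 0$), confirming $X^\star$ is a genuine bridge from $x_0$ to $x_1$, and then pass to the limits: as $\alpha\to0$ we have $\eta\to0$, $\sinh(\eta s)/\sinh\eta \to s$, $\cosh(\eta s)\to1$, so $c_t\to1-t$, $e_t\to t$, $\gamma_t^2\to\sigma^2 t(1-t)$ (Brownian bridge), and letting additionally $\sigma\to0$ kills $\gamma_t$ and leaves the straight line $X_t^\star = (1-t)x_0 + t x_1$. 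The step I expect to be the main obstacle is the computation underlying the second and third paragraphs: either evaluating the Mehler propagator / solving the scalar Riccati boundary-value problem (with the singular endpoint behavior $a_t\to\infty$ and the degenerate variance conditions $\gamma_0=\gamma_1=0$), or identifying the path-space covariance with a Dirichlet Green's function --- and then doing the bookkeeping with hyperbolic identities needed to land on the compact closed form stated in the lemma. Everything else is standard HJB/variational manipulation or direct substitution, and one can alternatively verify the stated $c_t,e_t,\gamma_t$ directly by checking they satisfy these ODEs with the correct boundary conditions.
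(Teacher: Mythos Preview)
Your proposal is correct and lands on the same optimal drift and the same closed forms as the paper, but the route differs in two places. The paper does not derive the optimal feedback from HJB/Riccati; it simply \emph{cites} the conditioned-diffusion result of Mazzolo (2022) to write down
\[
u_t(X_t|x_0,x_1)=\frac{\eta}{\sinh(\eta(1{-}t))}\,x_1-\frac{\eta}{\tanh(\eta(1{-}t))}\,X_t,
\]
which is exactly your $-(a_tX_t-b_tx_1)$ with $a_t=\eta\coth(\eta(1{-}t))$, $b_t=\eta/\sinh(\eta(1{-}t))$. From there the paper solves the first-order mean ODE by an integrating factor $g_t=\sinh\eta/\sinh(\eta(1{-}t))$ and similarly for the variance, whereas you eliminate $a_t,b_t$ to obtain the homogeneous second-order equation $\ddot m_t=\eta^2 m_t$ and read off $c_t,e_t$ directly from the boundary data, then invoke the Dirichlet Green's function of $\tfrac{1}{\sigma^2}(-\partial_t^2+\eta^2)$ for $\gamma_t^2$. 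Your approach is more self-contained (no external citation for the bridge drift) and the second-order reformulation is arguably cleaner; the paper's approach is shorter once one accepts the cited formula. The limit arguments and the boundary checks $c_0=1,e_1=1,\gamma_0=\gamma_1=0$ are carried out the same way in both.
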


\subsection{Approximate solutions to CondSOC for general nonlinear state cost} \label{sec:3.2}

We develop methods for searching the approximate solutions to CondSOC in \eq{6},
when $V_t$ is neither quadratic nor degenerate.
As we need to search for every pair $(x_0, x_1)$ drawn from $p_{0,1}^\theta$,
we seek efficient methods that are parallizable and {simulation-free}---outside of sampling from $p_{0,1}^\theta$.

\begin{figure}[t]
  \centering
  \begin{minipage}{0.43\textwidth}
      \centering
      \includegraphics[width=0.95\textwidth]{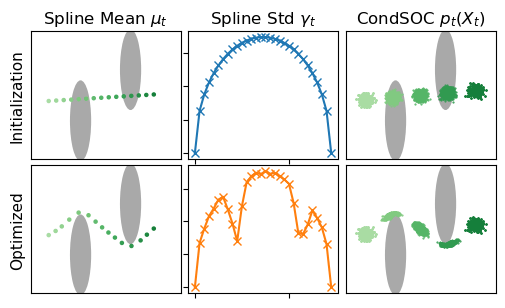}
      \vskip -0.1in
      \caption{
        Example of spline optimization (Alg.~\ref{alg:spline}) for $\mu_t\in\sR^2, \gamma_t\in\sR$,
        and the resulting CondSOC \eq{6} solution.
      }
      \label{fig:spline-opt}
  \end{minipage}
  \hfill
    \begin{minipage}{0.55\textwidth}
    \vspace{-1em}
      \begin{algorithm}[H]
         \caption{\texttt{SplineOpt}}
         \label{alg:spline}
         \begin{algorithmic}
         \Require $x_0, x_1, \{X_{t_k}\}$ where $0 {<} t_1 {<} \cdots {<} t_K {<} 1$
         \State Initialize $\mu_t$, $\gamma_t$ with \eq{9}
         \For{$m = 0$ \textbf{to} $M$}
            \State Sample $X_t^i \sim \calN(\mu_t, \gamma_t^2\mI_d)$ with \eq{7}
            \State Compute $V_t(X_t^i)$ and $u_t(X_t^i|x_0,x_1)$ with \eq{8}
            \State Estimate the objective $\calJ$ in \eq{6a}
            \State Take gradient step w.r.t control pts. $\{X_{t_k},\gamma_{t_k}\}$
         \EndFor
         \State \Return $p_{t|0,1}$ parametrized by optimized $\mu_t$, $\gamma_t$
         \end{algorithmic}
      \end{algorithm}
  \end{minipage}
  \vskip -0.1in
\end{figure}

\textbf{Gaussian path approximation.}
Drawing inspirations from \Lemmaref{3}, and since the boundary conditions of \eq{6} are simply two fixed points,
we propose to approximate the solution to \eq{6} as a Gaussian probability path pinned at $x_0, x_1$:
\begin{align}
   p_t(X_t|x_0,x_1) \approx \calN(\mu_t, \gamma_t^2\mI_d),
   \quad\text{s.t.}\quad \mu_0 = x_0,~~\mu_1 = x_1,~~\gamma_0 = \gamma_1 = 0,
   \label{eq:7}
\end{align}
where $\mu_t \in \sR^d$ and $\gamma_t \in \sR$
are respectively the time-varying mean and standard deviation.
An immediate result following from \eq{7} is
a closed-form conditional drift \citep{sarkka2019applied} (see \Cref{app:gpath} for the derivation):
\begin{align}
   u_t(X_t|x_0, x_1) = \partial_t \mu_t + a_t (X_t - \mu_t), \quad \text{where }
   a_t := \frac{1}{\gamma_t}\pr{\partial_t \gamma_t - \frac{\sigma^2}{2\gamma_t}}.
   \label{eq:8}
\end{align}
Notice that $a_t \in \sR$ is a time-varying scalar.
Hence, however complex $\mu_t,\sigma_t$ may be, the underlying SDE
(with the conditional drift $u_t(X_t|x_0, x_1)$ in \eq{8}) remains linear.
Also note that this drift, since it is a gradient field, is the kinetic optimal choice out of all drifts that generate $p_t(X_t|x_0,x_1)$.

\textbf{Spline optimization.}
To facilitate efficient optimization,
we parametrize $\mu_t,\sigma_t$ respectively as $d$- and $1$-D splines with
some control points $X_{t_k} \in \sR^d$ and $\gamma_{t_k} \in \sR$ sampled sparsely and uniformly
along the time steps $0 < t_1 < ... < t_K < 1$:
\begin{align}
   \mu_t := \mathrm{Spline}(t;~x_0, \{X_{t_k}\}, x_1) \qquad
   \gamma_t := \mathrm{Spline}(t;~\gamma_0{=}0, \{\gamma_{t_k}\}, \gamma_1{=}0).
   \label{eq:9}
\end{align}
Notice that the parameterization in \eq{9} satisfy the boundary in \eq{7},
hence remains as a feasible solution to \eq{6b} however $\{X_{t_k},\gamma_{t_k}\}$ change.
The number of control points $K$ is much smaller than discretization steps ($K{\le}$30 for all experiments).
This significantly reduces the memory complexity compared to prior works \citep{liu2022deep},
which require caching entire discretized SDEs.

Alg.~\ref{alg:spline} summarizes the spline optimization,
which, crucially, involves no simulation of an SDE~\eq{6b}.
This is because we optimize using just independent samples from $p_{t|0,1}$, which are known in closed form. Since the CondSOC problem relaxes distributional boundary constraints to just two end-points, our computationally efficient variational approximation holds out very well in practice. Furthermore, since we only need to optimize very few spline parameters, we did not find the need to consider amortized variational inference~\citep{kingma2014auto}. Finally, note that since we solve CondSOC for each pair $(x_0, x_1) \sim p_{0,1}^\theta$ and later marginalize to construct $p_t$, we need not explicitly consider mixture solutions for $p_{t|0,1}$, as long as we sample sufficiently many pairs of $(x_0, x_1)$.
In practice, we initialize $\{X_{t_k}\}$ from $\ut$,%
\footnote{This induces no computational overhead, as $\{X_{t_k}\}$ are intermediate steps when simulating $x_0,x_1 \sim p_{0,1}^\theta$.}
and $\gamma_{t_k} := \sigma \sqrt{t_k(1-t_k)}$ from the standard deviation of the Brownian bridge.
\Cref{fig:spline-opt} demonstrates a 2D example.

\textbf{Resampling using path integral theory.}
In cases where the family of Gaussian probability paths is not sufficient for modeling solutions of \eq{6},
a more rigorous approach involves the path integral theory \citep{kappen2005path},
which provides analytic expression to the optimal density of \eq{6} given any sampling distribution with sufficient support.
We discuss this in the following proposition.
\begin{restatable}[Path integral solution to \eq{6}]{prop}{propfour}\label{prop:4}
   Let $r(\bar X|x_0, x_1)$ be a distribution
   absolutely continuous w.r.t. the Brownian motion, denoted $q(\bar X|x_0)$,
   where we shorthand $\bar X \equiv X_{t\in[0,1]}$.
    Suppose $r$ is associated with an SDE
   $\rd X_t = v_t(X_t) \dt + \sigma \dWt$, $X_0 {=} x_0$, $X_1 {=} x_1$, $\sigma{>}0$.
   Then, the optimal, path-integral solution to~\eq{6} can be obtained via
   \begin{align}\label{eq:pi_weight}
        p^\star(\bar X|x_0, x_1) = \textstyle \frac{1}{Z} \omega(\bar X|x_0, x_1)r(\bar X|x_0, x_1),
   \end{align}
   where $Z$ is the normalization constant and $\omega$ is the importance weight:
   \begin{align}\label{eq:impt-w}
        \omega(\bar X|x_0, x_1) :=
         \exp\pr{-\int_0^1 \frac{1}{\sigma^2}\pr{V_t(X_t) + \frac{1}{2}\|v_t(X_t)\|^2} \dt - \int_0^1 \frac{1}{\sigma} v_t(X_t)^\top\dWt}.
   \end{align}
\end{restatable}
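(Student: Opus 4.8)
\quad The plan is to recognize \eq{6} as a \emph{bridge-pinned} stochastic control problem with quadratic control cost and additive noise, and to solve it along the classical path-integral route: (i) use Girsanov's theorem to rewrite the control-cost term as a relative entropy between path measures, reparametrized against the given reference $r$ rather than the Brownian bridge; and (ii) invoke the Gibbs variational principle (Donsker--Varadhan), which identifies the unique minimizing path measure with an exponential tilt of $r$. The tilt will be exactly the importance weight $\omega$ of \eqref{eq:impt-w}; allowing $r$ to be arbitrary (rather than the bridge of \Lemmaref{3}) changes nothing in the argument and is precisely what makes \eqref{eq:pi_weight} usable for importance sampling in practice.

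Concretely, I would first lift \eq{6} from an optimization over feedback controls $u_{t|0,1}$ to an optimization over path measures $p(\bar X \mid x_0, x_1)$ on $C([0,1];\sR^d)$ that are pinned at $x_0,x_1$ and absolutely continuous w.r.t.\ $r$; here $\sigma>0$ and the assumed absolute continuity of $r$ w.r.t.\ the Wiener measure $q(\bar X\mid x_0)$ guarantee that Girsanov applies and that this restricted class is lossless. For such a $p$ with drift $u_t$, I combine the algebraic identity $\tfrac12\norm{u_t}^2 = \tfrac12\norm{u_t - v_t}^2 + (u_t - v_t)^\top v_t + \tfrac12\norm{v_t}^2$ with $\KL(p \,\|\, r) = \tfrac{1}{2\sigma^2}\E_p[\int_0^1 \norm{u_t - v_t}^2\dt]$ and the vanishing of the $p$-expectation of $\int_0^1 (u_t - v_t)^\top \dWt$, to obtain
\[
   \E_p\br{\int_0^1 \tfrac12\norm{u_t}^2\dt}
   \;=\; \sigma^2\,\KL(p \,\|\, r)\;+\;\E_p\br{\int_0^1 \tfrac12\norm{v_t(X_t)}^2\dt \;+\; \int_0^1 \sigma\, v_t(X_t)^\top\dWt},
\]
where $W$ is the Brownian motion driving the reference SDE, i.e.\ $\sigma\dWt = \rd X_t - v_t(X_t)\dt$. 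Adding $\E_p[\int_0^1 V_t(X_t)\dt]$ and dividing by $\sigma^2$ gives $\tfrac{1}{\sigma^2}\calJ = \KL(p \,\|\, r) + \E_p[\Psi(\bar X)]$, where $\Psi := \tfrac{1}{\sigma^2}\int_0^1 \big(V_t(X_t) + \tfrac12\norm{v_t(X_t)}^2\big)\dt + \tfrac1\sigma\int_0^1 v_t(X_t)^\top\dWt$, i.e.\ $\Psi = -\log\omega(\bar X\mid x_0,x_1)$. The Gibbs principle then gives $\inf_p \{\KL(p \,\|\, r) + \E_p[\Psi]\} = -\log\E_r[e^{-\Psi}]$, attained uniquely at $p^\star = \tfrac1Z e^{-\Psi} r = \tfrac1Z\,\omega\,r$ with $Z = \E_r[\omega]$; since $p^\star \ll r$ and $r$ is pinned at $x_0,x_1$, so is $p^\star$. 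This is precisely the claimed expression \eqref{eq:pi_weight} with $\omega$ as in \eqref{eq:impt-w}.

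The part needing the most care is the endpoint $t=1$: a path measure with $X_1=x_1$ fixed is singular w.r.t.\ the unpinned Wiener measure, and its drift (like $v_t$) blows up near $t=1$, so the Girsanov reformulation above must be carried out on the filtrations $\calF_t$ for $t<1$ and the $t\uparrow1$ limits controlled---this requires uniform integrability of the relevant Radon--Nikodym martingales, which follows from mild conditions (e.g.\ a Novikov-type bound on $v_t$, and $V_t$ bounded below so that $Z<\infty$). For completeness I would also apply Girsanov a second time to $\rd p^\star/\rd r$, whose logarithmic density is the additive functional $-\Psi$, to read off that $p^\star$ is itself generated by an SDE with diffusion $\sigma$ and drift $v_t$ plus a $\sigma^2\nabla\log(\cdot)$ correction; this is not needed for the statement, which only concerns the form of the optimal \emph{path measure}, but it reconnects \eqref{eq:pi_weight} to the controlled-SDE picture of \eq{6}.
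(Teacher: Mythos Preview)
Your argument is correct and takes a genuinely different route from the paper's. The paper first recasts \eq{6} as an \emph{unpinned} SOC problem by replacing the terminal constraint $X_1=x_1$ with a hard terminal cost $-\log\mathbbm{1}_{x_1}(X_1)$, then invokes the classical path-integral result \citep{kappen2005path} to write the optimal law as an exponential tilt of the unconditioned Wiener measure $q(\bar X\mid x_0)$, and finally changes measure from $q$ to the proposal $r$ via Girsanov, the Radon--Nikodym derivative $q/r$ supplying the extra $v_t$-dependent terms in $\omega$. You instead work directly over path measures pinned at both endpoints and absolutely continuous w.r.t.\ $r$, convert the kinetic term into $\sigma^2\KL(p\,\|\,r)$ plus an additive functional of the path, and apply the Gibbs/Donsker--Varadhan principle in one stroke. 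Your route is more self-contained (no black-box appeal to the path-integral formula), makes optimality and uniqueness transparent, and sidesteps the somewhat formal indicator-terminal-cost device; the paper's route is shorter once the cited result is granted and keeps the link to the standard path-integral-control literature explicit. One small slip in your narrative: the stochastic integral whose $p$-expectation vanishes is $\int_0^1 v_t^\top \rd\tilde W_t$ with $\tilde W$ the $p$-Brownian motion, not $\int_0^1 (u_t-v_t)^\top \rd W_t$ (the latter has $p$-expectation $\tfrac1\sigma\E_p\int_0^1\|u_t-v_t\|^2\dt$); your displayed identity is nonetheless correct, since $(u_t-v_t)^\top v_t\,\dt = \sigma v_t^\top\dWt - \sigma v_t^\top\rd\tilde W_t$.
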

\begin{wrapfigure}[7]{r}{0.44\textwidth}
   \vspace{-2.4em}
   \begin{minipage}[t]{0.44\textwidth}
      \begin{algorithm}[H]
         \caption{\texttt{ImptSample}}
         \label{alg:pi}
         \begin{algorithmic}
         \Require $p_{t|0,1},\mu_t, \gamma_t$, $\sigma > 0$
         \State Sample $X_{t\in[0,1]}^i$ from \eq{6b} given \eq{8}
         \State Compute $\omega^i$ with \eq{impt-w} and $Z = \sum_i \omega^i$
         \State \Return Resample $p_{t|0,1}$ with \eq{pi_weight}.
         \end{algorithmic}
      \end{algorithm}
   \end{minipage}
\end{wrapfigure}
In practice,
$r(\bar X|x_0, x_1)$ can be any distribution,
but the closer it is to the optimal $p^\star$,
the lower the variance of the importance weights $\omega$ \citep{kappen2016adaptive}.
It is therefore natural to consider using the aforementioned Gaussian probability paths as $r(\bar X|x_0, x_1)$, properly optimized with Alg.~\ref{alg:spline}, then resampled following proportional to \eq{pi_weight}---this is equivalent in expectation to performing self-normalized importance sampling but algorithmically simpler and helps reduce variance when many samples have low weight values.
While path integral resampling may make $\Lexp$ less suitable
due to the change in the conditional drift governing $p^\star(\bar X|x_0, x_1)$
from \eq{8}, we still observe empirical, sometimes significant, improvement (see Sec.~\ref{sec:ablation}).
Overall, we propose path integral resampling as an optional step in our GSBM algorithm, as it requires sequential simulation.%
\footnote{
   We note that simulation of trajectories $\bar X {\equiv} X_{t\in[0,1]}$ from (\ref{eq:6a},\ref{eq:8})
   can be done efficiently by computing the covariance function,
   which requires merely solving an 1D ODE; see \Cref{app:gpath} for a detailed explanation. \label{ft:4}
}
In practice, we also find empirically that the Gaussian probability paths alone perform sufficiently well and is easy to use at scale.

\subsection{Algorithm outline \& convergence analysis} \label{sec:3.3}

\begin{algorithm}[t]
    \vskip -0.05in
   \caption{\GSBM{} (GSBM)}
   \label{alg:gsbm}
   \begin{algorithmic}[1]
   \State Initialize $u_t^\theta$
         and set $p_t$ as the solution to \eq{6} with
         independent coupling $p_{0,1} := \mu \otimes \nu$
   \Repeat
      \State $u_t^\theta \leftarrow$ \texttt{match(}$p_t$\texttt{)} or \texttt{match(}$p_t, u_{t|0,1}$\texttt{)}
         \Comment{Alg.~\ref{alg:iml} or \ref{alg:exp}}
      \State Sample $X_0, X_1, X_{t_k}$ from $u_t^\theta$ on timesteps $0{<}t_1 < {\cdots} < t_K{<}1$
      \State Optimize $p_{t|0,1},\mu_t, \gamma_t \leftarrow$ \texttt{SplineOpt(}$X_0, X_1, \{X_{t_k}\}$\texttt{)}
      \Comment{Alg.~\ref{alg:spline}}
      \State Determine $u_{t|0,1}$ from $\mu_t, \gamma_t$ using \eq{8}
      \If{apply path integral resampling} \Comment{optional step}
         \State $p_{t|0,1} \leftarrow$ \texttt{ImptSample(}$p_{t|0,1}$\texttt{)}
         \Comment{Alg.~\ref{alg:pi}}
      \EndIf
   \Until{converges}
   \end{algorithmic}
   \vskip -0.05in
\end{algorithm}

We summarize our GSBM in Alg.~\ref{alg:gsbm},
which, as previously sketched in \Secref{3.1}, alternates between Stage 1 (line 3) and Stage 2 (lines 4-9).
In contrast to DSBM \citep{shi2023diffusion}, which constructs analytic $p_{t|0,1}$ and $u_{t|0,1}$ when $V_t=0$,
our GSBM solves the underlying CondSOC \eq{6} with nontrivial $V_t$.
We stress that these computations are easily parallizable
and admit fast converge due to our efficient parameterization,
hence inducing little computational overhead (see \Secref{ablation}).

We note that \textbf{GSBM (Alg.~\ref{alg:gsbm}) remains functional even when $\sigma{=}0$},
although the GSB problem~\eq{3} was originally stated with $\sigma{>}0$ \citep{chen2015optimal}.
In such cases, the implicit and explicit matching still preserve $p_t$ and
correspond, respectively, to action \citep{neklyudov2023action} and flow matching \citep{lipman2023flow},
and CondSOC corresponds to
a generalized geodesic path accounting for $V_t$.

Finally, we provide convergence analysis in the following theorems.
\begin{restatable}[Local convergence]{thm}{thmfive}\label{thm:5}
   Let the intermediate result of Stage 1 and 2,
   after repeating $n$ times, be $\theta^{n}$.
   Then, its objective value in \eq{3}, $\calL(\theta^{n})$,
   is monotonically non-increasing as $n$ grows:
   \begin{align*}
      \calL({\theta^{n}}) \ge \calL({\theta^{n+1}}). %
   \end{align*}
\end{restatable}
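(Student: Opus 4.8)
The plan is to show that each stage of the alternating scheme is a descent step on the common objective $\calL$, so that composing them over one outer iteration cannot increase $\calL$. Write $\calL(\theta) = \int_0^1 \E_{p_t^\theta}\br{\frac12\norm{u_t^\theta}^2 + V_t}\dt$ where $p_t^\theta$ is the marginal generated by $u_t^\theta$ via the FPE \eq{2}. The key observation is that $\calL$ depends on $\theta$ only through two objects: the drift $u_t^\theta$ itself, and the path measure it induces (equivalently its marginals $p_t^\theta$ and coupling $p_{0,1}^\theta$). Stage 1 changes the drift while holding the marginals fixed; Stage 2 changes the marginals while holding the coupling fixed. I would analyze the two effects separately and then chain them.

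First I would handle Stage 1. Given $p_t^{n}$ (the marginals after iteration $n$), \Propref{1} states the Stage-1 minimizer is the gradient field $\nabla s_t^\star$, which is precisely the kinetic-energy minimizer among all drifts generating $p_t^{n}$. Crucially, since the marginals are held fixed, the state cost term $\int_0^1 \E_{p_t^{n}}[V_t]\dt$ is unchanged by Stage 1. Therefore replacing the current drift by $\nabla s_t^\star$ can only decrease (weakly) the kinetic term, hence decreases $\calL$. Let $\tilde\theta$ denote the parameter after Stage 1; then $\calL(\tilde\theta) \le \calL(\theta^n)$, with the induced marginals still equal to $p_t^n$ but with an updated coupling $p_{0,1}^{\tilde\theta}$. (If one uses the explicit surrogate $\Lexp$ rather than exactly hitting $\nabla s_t^\star$, the same conclusion holds since $\Lexp$ upper-bounds the Stage-1 objective and its minimizer also preserves $p_t^n$; I would remark on this but carry the argument with the exact minimizer for cleanliness.)

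Next I would handle Stage 2. Now the coupling $p_{0,1}^{\tilde\theta}$ is fixed, and by \Propref{2} the Stage-2 optimum factorizes as $p_t(X_t) = \int p_{t|0,1}\, p_{0,1}^{\tilde\theta}\,\rd x_0 \rd x_1$ where $p_{t|0,1}$ solves the CondSOC \eq{6} for each endpoint pair. The point is that the old marginals $p_t^n$ are themselves realizable by \emph{some} choice of conditional bridges compatible with the fixed coupling $p_{0,1}^{\tilde\theta}$ (namely the bridges of the current path measure), so the old configuration is feasible for the Stage-2 problem; the Stage-2 optimum therefore has objective no larger. Writing $\theta^{n+1}$ for the parameter after Stage 2 (which re-matches the drift to the new $p_t^{n+1}$, again by the matching guarantees of \Secref{2} preserving marginals and, by \Propref{1}/\Propref{2}, consistent with the CondSOC-optimal conditional drifts), we get $\calL(\theta^{n+1}) \le \calL(\tilde\theta) \le \calL(\theta^n)$, which is the claim. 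I would also note feasibility is preserved throughout: the spline parameterization \eq{9} and the matching losses both keep $p_0 = \mu$, $p_1 = \nu$, so every iterate is an admissible point of \eq{3}.

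The main obstacle is making rigorous the claim that ``the old marginals are feasible for the Stage-2 subproblem.'' One must verify that given the coupling $p_{0,1}^{\tilde\theta}$, the pre-Stage-2 marginals $p_t^n$ arise from a genuine choice of conditional path measures $\{p_{t|0,1}\}$ that is admissible in \eq{6} — i.e. that disintegrating the current (post-Stage-1) path measure over its endpoints yields bridges satisfying \eq{6b} with finite cost $\calJ$. This is essentially a disintegration-of-measure argument plus a check that the kinetic-plus-state cost of these bridges integrated against $p_{0,1}^{\tilde\theta}$ equals $\calL(\tilde\theta)$; the subtlety is that Stage 1 changed the coupling (from $p_{0,1}^n$ to $p_{0,1}^{\tilde\theta}$) while keeping marginals fixed, so one is comparing $\calL$ evaluated on two different path measures that share marginals. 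I would resolve this by noting that $\calL$ as an objective over path measures is the sum $\int_0^1\E_{p_t}[\frac12\norm{u_t}^2 + V_t]\dt$, that after Stage 1 the drift is the marginal-kinetic-optimal gradient field whose value depends only on $p_t^n$, and that the Stage-2 problem, being a minimization over conditional laws consistent with $p_{0,1}^{\tilde\theta}$, includes among its feasible points the conditional disintegration of \emph{any} path measure with marginals $p_t^n$ and that coupling — in particular one exists and attains exactly $\calL(\tilde\theta)$ — giving the required inequality.
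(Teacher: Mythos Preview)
Your two-stage descent argument is the same strategy the paper uses, but the paper organises it differently and your resolution of the ``main obstacle'' contains an error.

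On organisation: the paper cuts the loop \emph{after} Stage~1 rather than after Stage~2. That is, $\theta^n$ is taken to be the output of the matching step, so one full iteration is Stage~2 (solve CondSOC with the coupling $p_{0,1}^{\theta^n}$ induced by $u_t^{\theta^n}$) followed by Stage~1 (match to the new marginals $p_t^{\theta^{n+1}}$). This removes any need for your intermediate $\tilde\theta$ and the attendant worry that ``Stage~1 changed the coupling.'' The paper's chain is simply
\[
\calL(\theta^n)=\E_{p_{0,1}^{\theta^n}}\E_{p^{\theta^n}_{t|0,1}}\!\big[\tfrac12\|u_t^{\theta^n}\|^2+V_t\big]\ \ge\ \E_{p_{0,1}^{\theta^n}}\E_{p^\star_{t|0,1}}\!\big[\tfrac12\|u_t^{\theta^n}\|^2+V_t\big]=\E_{p_t^{\theta^{n+1}}}\!\big[\tfrac12\|u_t^{\theta^n}\|^2+V_t\big]\ \ge\ \calL(\theta^{n+1}),
\]
with the first inequality attributed to Stage~2 and the second to the kinetic optimality of $u_t^{\theta^{n+1}}$ in Stage~1. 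Crucially, the \emph{marginal} drift $u_t^{\theta^n}$---a function of $X_t$ alone---is carried through the Stage-2 step unchanged; the paper never introduces the conditional (Doob) drift of a disintegrated bridge.

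On the obstacle: your proposed resolution asserts that the conditional disintegration of the post-Stage-1 path measure is a feasible point of the CondSOC problem that attains \emph{exactly} $\calL(\tilde\theta)$. This is false. The drift of that disintegrated bridge is the Doob transform $u_t^{\tilde\theta}(x)+\sigma^2\nabla_x\log p(X_1{=}x_1\mid X_t{=}x)$, and averaging its squared norm over the coupling gives $\E_{p_t^n}\|u_t^{\tilde\theta}\|^2$ \emph{plus} the expected squared Doob correction (a Fisher-information term), which is strictly positive when $\sigma>0$. Hence the disintegration overshoots $\calL(\tilde\theta)$, and your chain $\calL(\theta^{n+1})\le[\text{CondSOC opt}]\le[\text{disintegration value}]=\calL(\tilde\theta)$ breaks at the last equality. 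The paper's device of keeping the fixed function $\tfrac12\|u_t^{\theta^n}(x)\|^2+V_t(x)$ in the integrand across the $p_{t|0,1}^{\theta^n}\to p_{t|0,1}^\star$ replacement is precisely what sidesteps this issue.
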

\begin{restatable}{thm}{thmsix}\label{thm:6}
   The optimal solution to GSB problem \eq{3} is a fixed point of GSBM, Alg.~\ref{alg:gsbm}.
\end{restatable}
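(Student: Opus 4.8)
The plan is to show that if $\theta^\star$ parametrizes an optimal solution to the GSB problem \eqref{eq:3}, then running one round of Stages 1 and 2 of Alg.~\ref{alg:gsbm} starting from $\theta^\star$ returns $\theta^\star$ again (up to the almost-everywhere equivalence of drifts that match the same marginal path). The argument decomposes exactly along the two-stage structure, using Propositions \ref{prop:1} and \ref{prop:2} as the workhorses, together with the elementary fact that an optimal point of a jointly-minimized objective is also optimal in each block coordinate.

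First I would set up notation: let $(u_t^\star, p_t^\star, p_{0,1}^\star)$ be the drift, marginals, and boundary coupling associated with an optimizer of \eqref{eq:3}, so that $p_t^\star$ satisfies the FPE \eqref{eq:2} with boundaries $\mu,\nu$ and $u_t^\star$ achieves the minimal value of $\calL$. The key structural observation is that the GSB objective \eqref{eq:3}, viewed as a function of the pair $(u_t, p_t)$ subject to feasibility, is minimized at $(u_t^\star, p_t^\star)$; hence it is in particular minimized over $u_t$ with $p_t = p_t^\star$ held fixed (this is the Stage 1 subproblem) and over $p_t$ with the coupling $p_{0,1}^\star$ held fixed (the Stage 2 subproblem).

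For the Stage 1 step: with $p_t = p_t^\star$ fixed, Proposition \ref{prop:1} tells us the unique minimizer of \eqref{eq:3} over drifts matching $p_t^\star$ is the gradient field $\nabla s_t^\star$. But $u_t^\star$ is itself a minimizer of \eqref{eq:3} among feasible drifts, and since $u_t^\star$ matches $p_t^\star$, it must already be this minimizer; therefore $\nabla s_t^\star = u_t^\star$ (as $L^2(p_t^\star)$ elements, for a.e. $t$), i.e. the optimal drift is necessarily a gradient field, and Stage 1 leaves it unchanged. For the Stage 2 step: Stage 1 reproduced the coupling $p_{0,1}^\star$ (since the drift, and hence the induced flow, is unchanged). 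Factorize $p_t^\star(X_t) = \int p_t^\star(X_t \mid x_0,x_1)\, p_{0,1}^\star(x_0,x_1)\, \rd x_0 \rd x_1$. Because $p_t^\star$ minimizes \eqref{eq:3} over feasible marginals with this coupling, Proposition \ref{prop:2} applies and states precisely that $p_t^\star(X_t\mid x_0,x_1)$ solves the CondSOC \eqref{eq:6} for each $(x_0,x_1)$. So the Stage 2 subproblem is already solved by the conditional marginals of $p_t^\star$, and Stage 2 returns $p_t^\star$, hence $\theta^\star$ is reproduced. Composing the two, $\theta^\star$ is a fixed point.

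The main obstacle I anticipate is the converse direction implicit in Proposition \ref{prop:2} as I am using it: Proposition \ref{prop:2} as stated says the \emph{minimizer} of Stage 2 factorizes with conditionals solving \eqref{eq:6}, but to conclude that $p_t^\star$ (which I only know is \emph{a} global GSB optimum) is \emph{the} Stage 2 minimizer for its own induced coupling, I need that the Stage 2 subproblem value at $p_t^\star$ cannot be strictly improved without changing the coupling — which follows because any strict improvement would yield a feasible solution to \eqref{eq:3} with strictly lower objective, contradicting global optimality, but this requires checking that re-optimizing the conditionals keeps the boundaries $\mu,\nu$ intact (it does, since the coupling $p_{0,1}^\star$ has the correct marginals and \eqref{eq:6b} pins the endpoints). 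A secondary subtlety is uniqueness/identifiability: "$\theta^\star$ is a fixed point" should be read modulo the equivalence class of parameters inducing the same $(p_t, u_t)$, since the matching losses only determine the drift up to $p_t$-null sets; I would state this caveat explicitly rather than claim pointwise fixedness in parameter space.
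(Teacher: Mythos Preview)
Your proposal is correct and takes a genuinely different route from the paper's proof. The paper argues each stage separately by citing external structural results: for Stage~2 it invokes the Forward--Backward SDE representation of \eqref{eq:3} from \citet{liu2022deep} to conclude that $p_{t|0,1}^\star$ solves the HJB/BSDE associated with \eqref{eq:6}; for Stage~1 it cites the known fact (again from \citet{liu2022deep}) that $u_t^\star$ is a gradient field, hence the implicit matching fixed point, and separately argues the explicit-matching case via the Markovian-projection interpretation of $\Lexp$. Your argument is more self-contained: you use only the elementary block-coordinate observation that a global minimizer of \eqref{eq:3} must also minimize each of the Stage~1 and Stage~2 subproblems, then read off the conclusions directly from Propositions~\ref{prop:1} and~\ref{prop:2}. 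This buys you independence from the FBSDE machinery and recovers the gradient-field property of $u_t^\star$ as a consequence rather than an input.

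One small gap worth patching: your Stage~1 argument, routed through Proposition~\ref{prop:1}, establishes fixedness only for the \emph{implicit} matching loss $\Limp$. Alg.~\ref{alg:gsbm} also allows the explicit loss $\Lexp$, whose minimizer is in general different (it is $u_t^\star$ of \eqref{eq:bm-u}, not $\nabla s_t^\star$). The paper handles this by noting that $\Lexp$ is a Markovian projection onto the reciprocal class defined by the solution of \eqref{eq:6}; since you have already shown $p_{t|0,1}^\star$ solves \eqref{eq:6}, the reciprocal class is the one generated by $u_t^\star$ itself, so the projection returns $u_t^\star$. You should add a line to this effect, or simply remark (as the paper does) that while $\Lexp$ and $\Limp$ need not share minimizers in general, they do at the equilibrium $p_t^\star$.
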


\section{Experiment} \label{sec:4}
\vspace*{-2pt}

We test out GSBM on a variety of distribution matching tasks, each entailing its own state cost $V_t(x)$.
By default, we use the explicit matching loss (\ref{eq:5}) without path integral resampling, mainly due to its scalability, but ablate their relative performances in Sec.~\ref{sec:ablation}.
Our GSBM is compared primarily to DeepGSB \citep{liu2022deep}, a Sinkhorn-inspired machine learning method that outperforms existing deep methods \citep{ruthotto2020machine,lin2021alternating}.
Other details are in \Cref{app:exp}.

\begin{figure}[t]
    \vskip -0.05in
    \begin{minipage}{0.45\textwidth}
        \centering
        \includegraphics[width=0.95\linewidth]{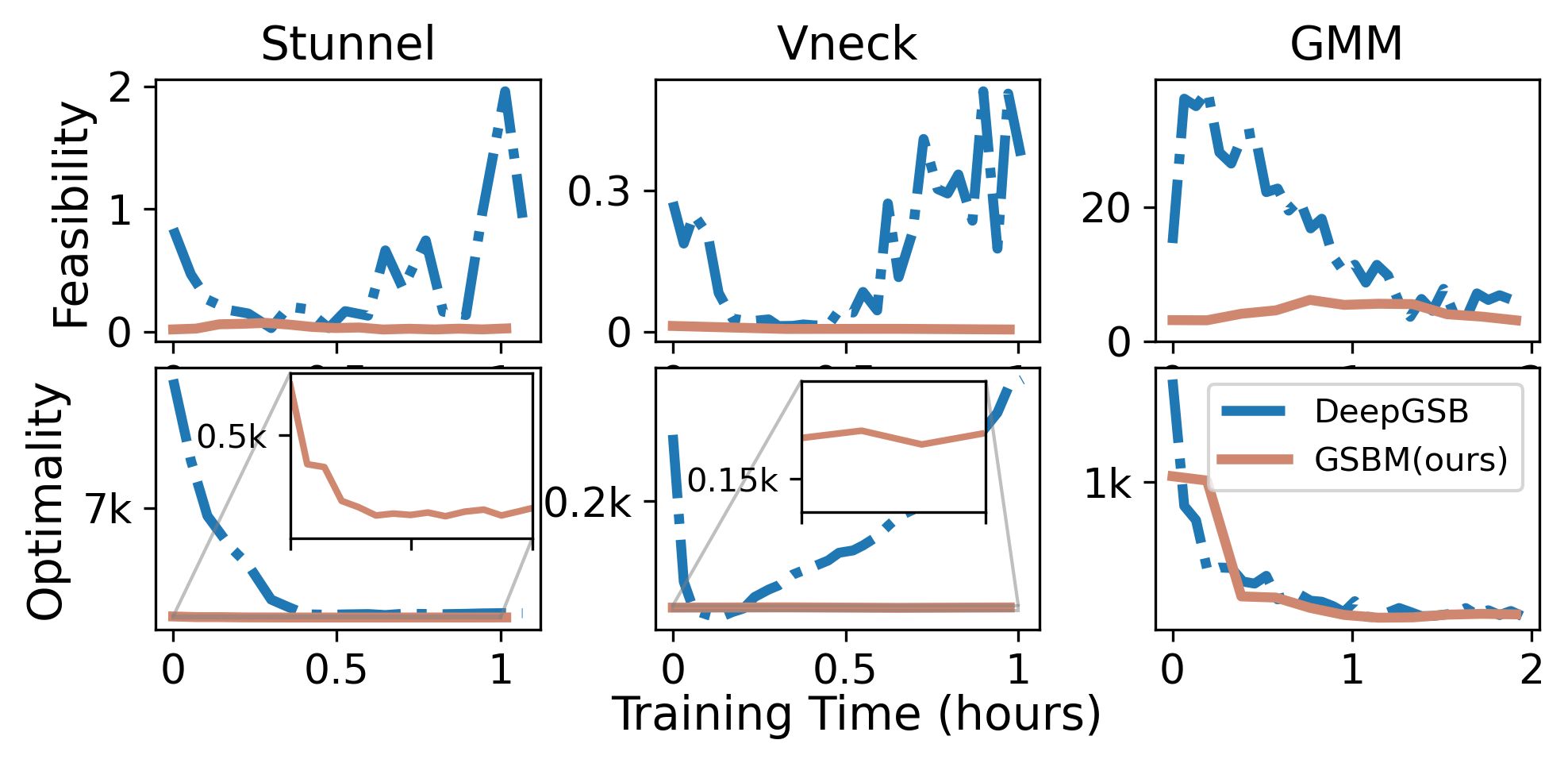}
        \vskip -0.12in
        \caption{
            Feasibility \textit{vs.} optimality
            on three crowd navigation tasks with mean-field cost.
        }
        \label{fig:2d-fvso}
    \end{minipage}
    \hfill
    \begin{minipage}{0.535\textwidth}
        \centering
        \includegraphics[width=\linewidth]{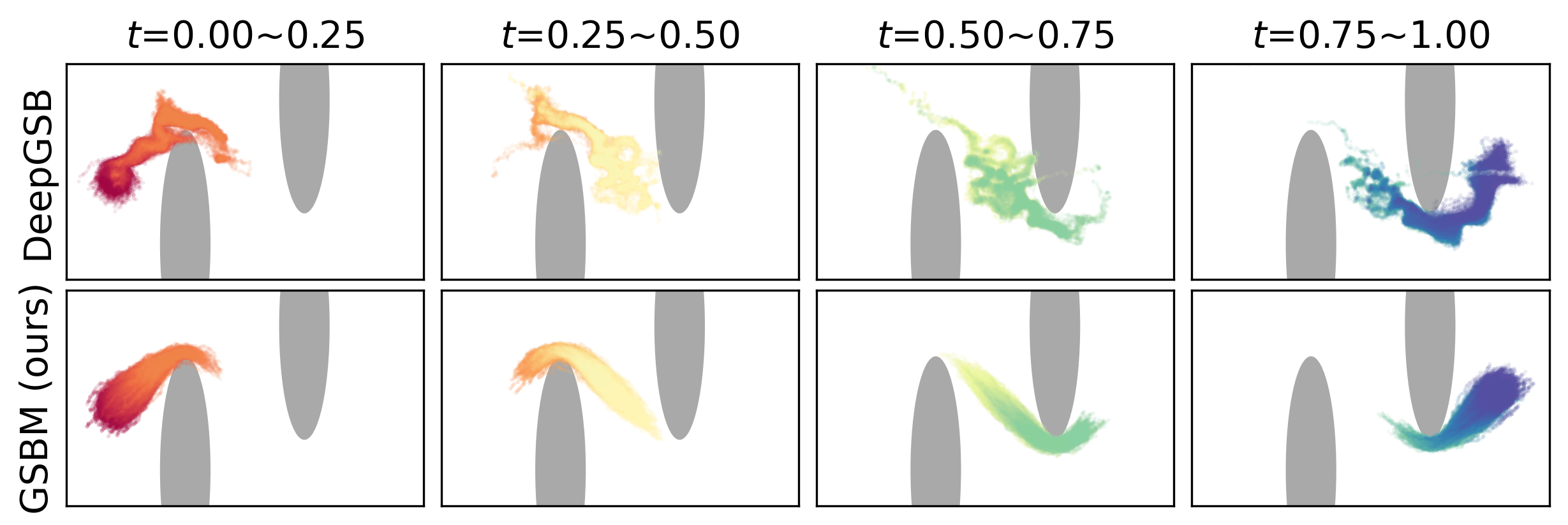}
        \vskip -0.1in
        \caption{
            Simulation of SDEs with the $\ut$ after long training.
            Notice how DeepGSB diverges drastically from our GSBM,
            which satisfies feasibility at all time.
        }
        \label{fig:diverge}
    \end{minipage}
    \vskip -0.05in
\end{figure}

\begin{figure}[t]
    \vskip -0.1in
    \centering
    \captionsetup{justification=centering}
    \begin{subfigure}[t]{0.2\linewidth}
        \includegraphics[width=\linewidth, trim={50px 0 50px 0}, clip]{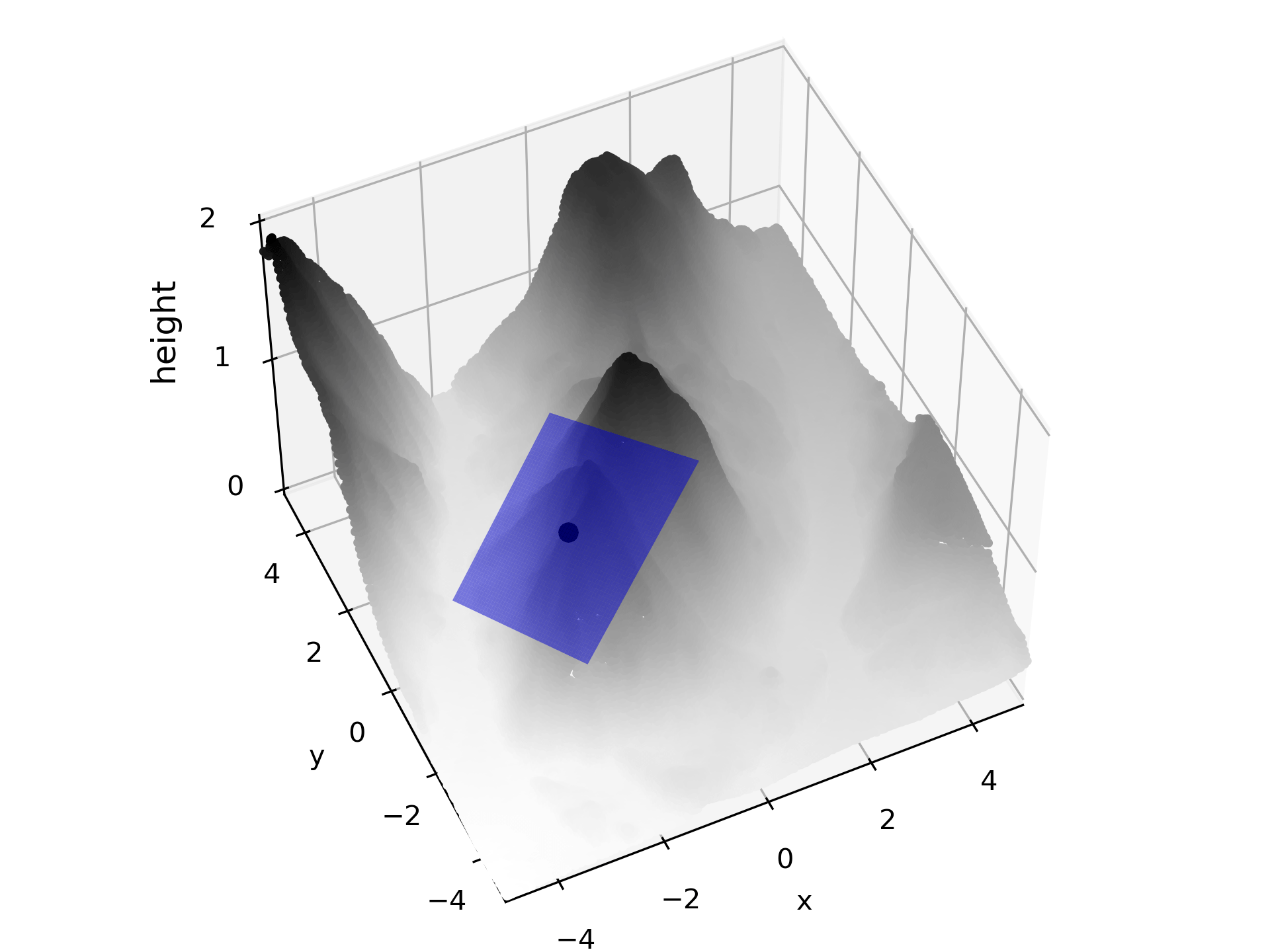}
        \caption*{LiDAR surface \\ \& a tangent plane}
    \end{subfigure}%
    \begin{subfigure}[t]{0.4\linewidth}
        \includegraphics[width=0.5\linewidth, trim={50px 0 50px 0}, clip]{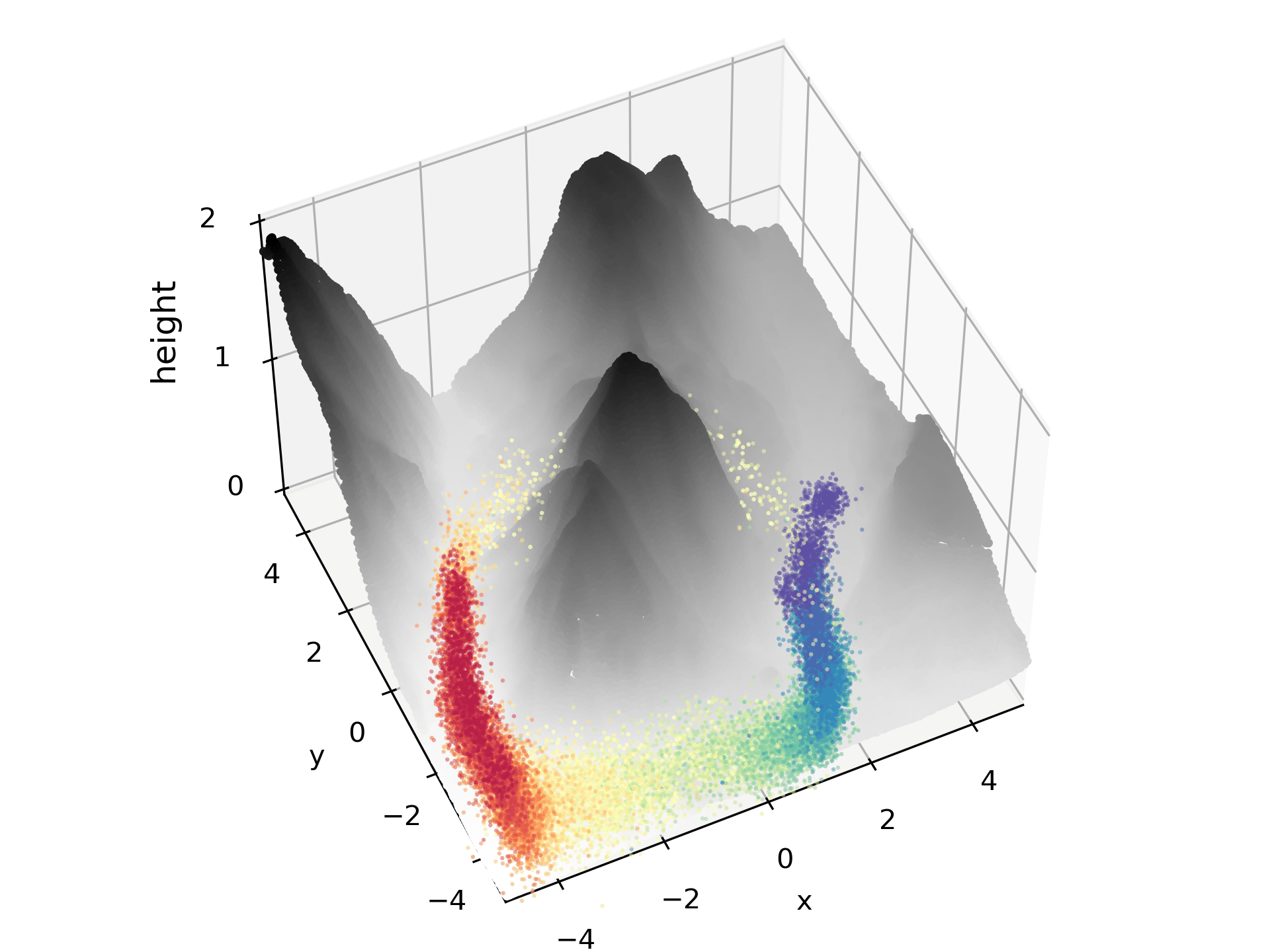}
        \includegraphics[width=0.45\linewidth]{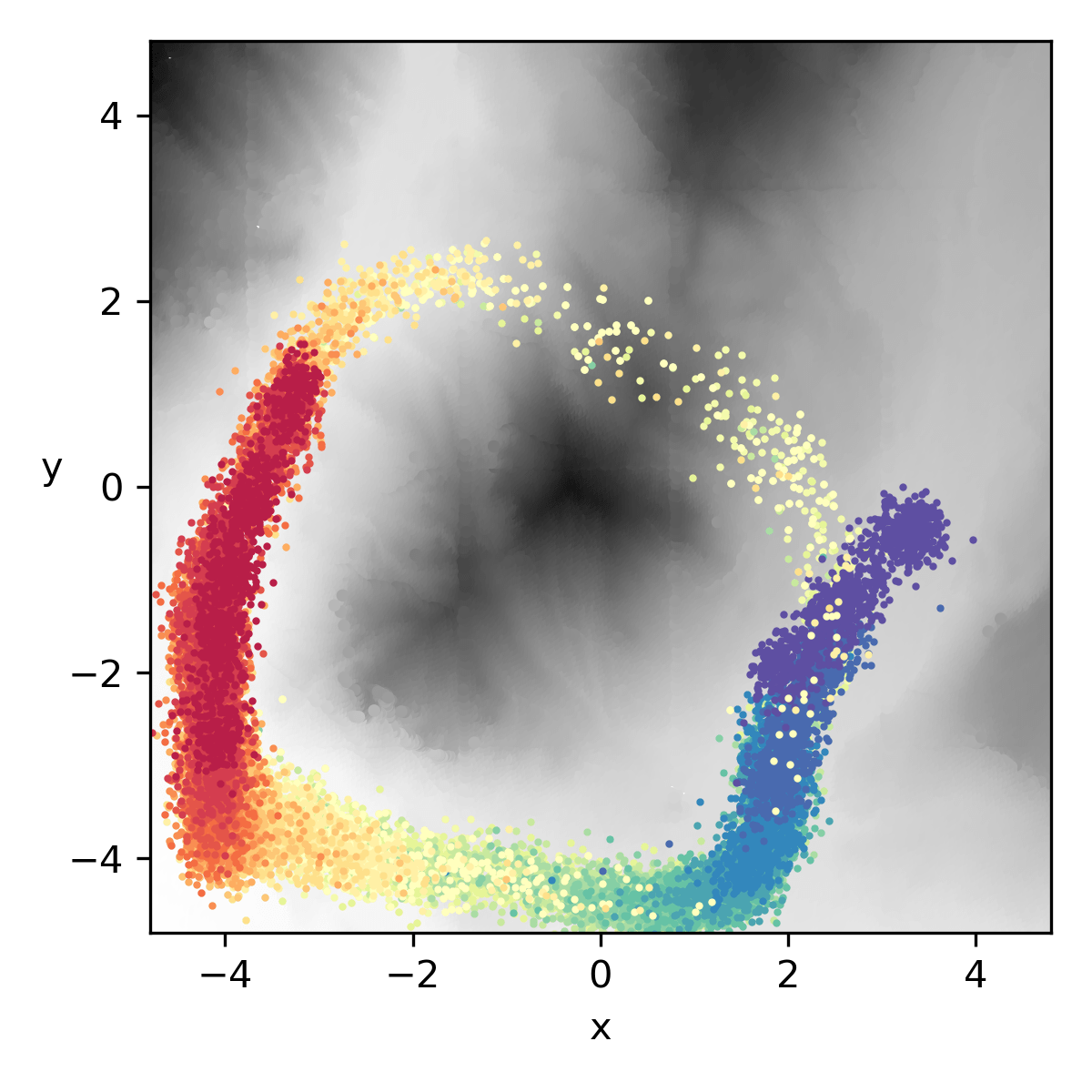}
        \caption*{
            GSBM; objective value \eq{3} $=$ 6209 \\
            (\textit{left}: 3D view, \textit{right}: 2D view)
        }
    \end{subfigure}%
    \begin{subfigure}[t]{0.2\linewidth}
        \includegraphics[width=0.9\linewidth]{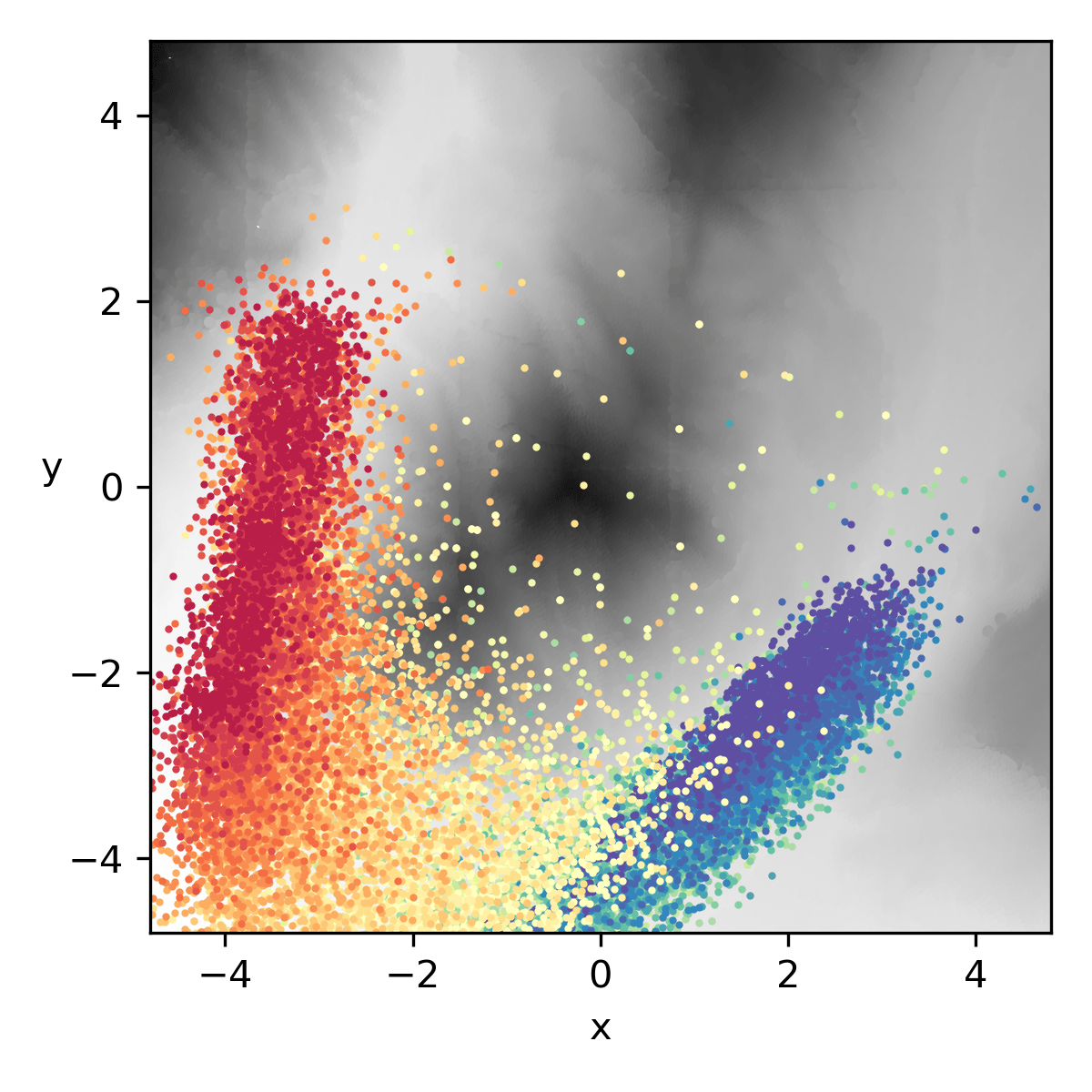}
        \caption*{DeepGSB; obj.=7747 \\ (2D view)}
    \end{subfigure}%
    \begin{subfigure}[t]{0.2\linewidth}
        \includegraphics[width=0.9\linewidth]{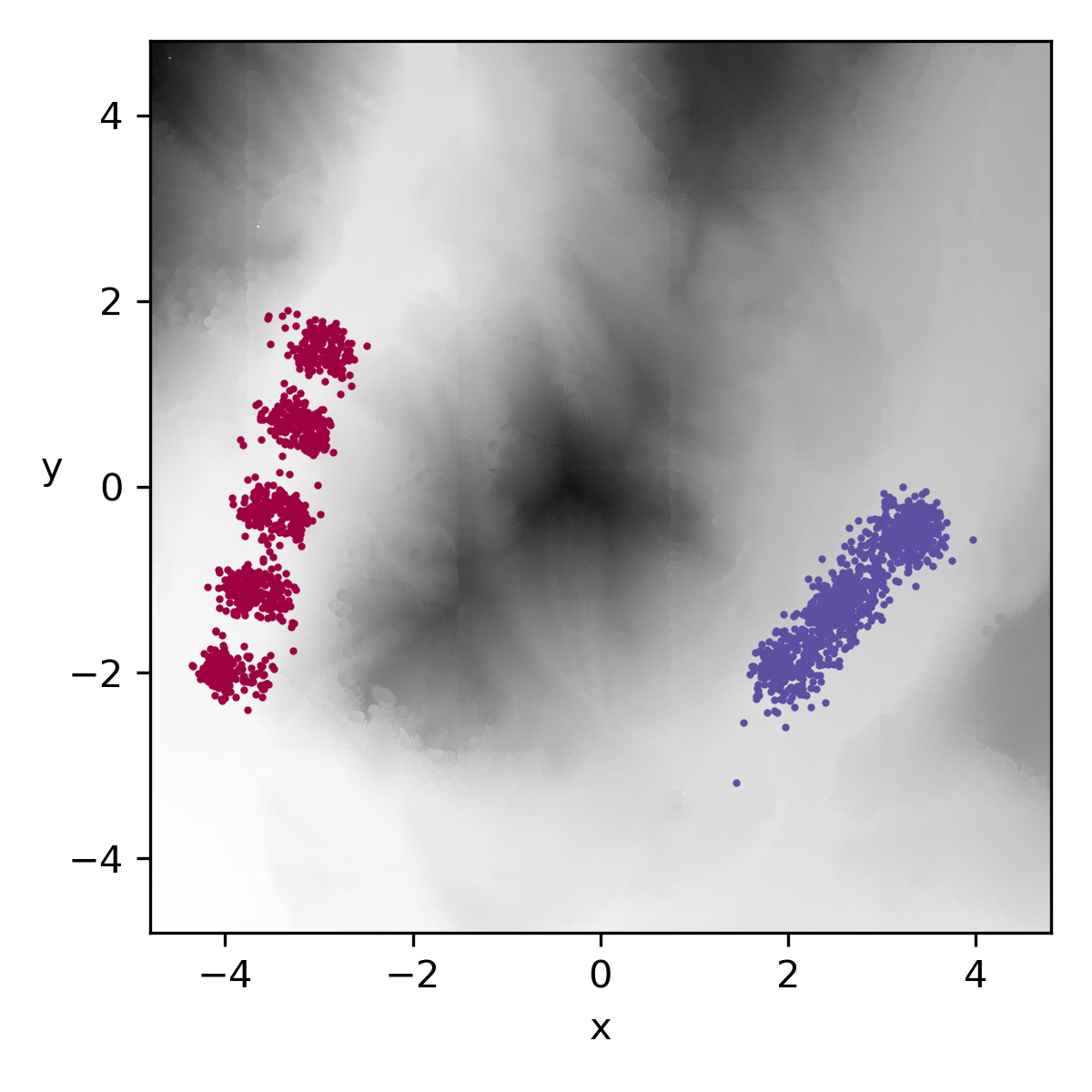}
        \caption*{{\color{bostonred}$p_0$} and {\color{amethyst}$p_1$} \\ (2D view)}
    \end{subfigure}
    \vskip -0.05in
    \caption{Crowd navigation over a LiDAR surface. Height is denoted by the grayscale color.}
    \label{fig:lidar_rainier}
\end{figure}

\subsection{Crowd navigation with mean-field and geometric state costs} \label{sec:crowd-nav}

We first validate our GSBM in solving crowd navigation, a canonical example for the GSB problem~\eq{3}.
Specifically, we consider the following two classes of tasks
(see \Cref{app:setup} for details):

\textbf{Mean-field interactions.$\quad$}
These are synthetic dataset in $\R^2$ introduced in DeepGSB,
where the state cost
$V_t$ consists of two components: an obstacle cost that assesses the physical constraints, and an ``mean-field'' interaction cost between individual agents and the population density $p_t$:
\begin{align}
    V_t(x) = L_{\text{obstacle}}(x) + L_{\text{interact}}(x; p_t),
    ~~L_{\text{interact}}(x; p_t) =
    \begin{cases}
        \log p_t(x)  & \text{(entropy)} \\
        \E_{y\sim p_t}[\frac{2}{\norm{x-y}^2 + 1}] & \text{(congestion)}
    \end{cases}.
    \label{eq:V-mf}
\end{align}
Both entropy and congestion costs are fundamental elements of mean-field games.
They measure the costs incurred for individuals to stay in densely crowded regions with high population density.

\textbf{Geometric surfaces defined by LiDAR.}~~~
A more realistic scenario involves navigation through a complex geometric surface.
In particular, we consider surfaces observed through LiDAR scans of Mt. Rainier~\citep{lidar_rainier}, thinned to 34,183 points (see Fig.~\ref{fig:lidar_rainier}).
We adopt the state cost
\begin{equation}\label{eq:V-lidar}
    V(x) = L_\text{manifold}(x) + L_\text{height}(x),~~L_\text{manifold}(x) = \norm{\pi(x) - x}^2,~~L_\text{height}(x) = \exp\left({\pi^{(z)}(x)}\right),
\end{equation}
where $\pi(x)$ projects $x$ to an approximate tangent plane fitted by a $k$-nearest neighbors (see Fig.~\ref{fig:lidar_rainier} and \Cref{app:lidar_details})
and $\pi^{(z)}(x)$ refers to the $z$-coordinate of $\pi(x)$, \ie its height.

\Cref{fig:2d-fvso} tracks the feasibility and optimality, measured by $\calW_2(p_1^\theta,\nu)$ and \eq{3}, of three mean-field tasks (Stunnel, Vneck, GMM).
On all tasks, our GSBM maintains feasible solutions throughout training while gradually improving optimality.
In contrast, due to the lack of convergence analysis,
training DeepGSB exhibits relative instability and occasional divergence (see Fig.~\ref{fig:diverge}).
As for the geometric state costs, Fig.~\ref{fig:lidar_rainier} demonstrates how our GSBM faithfully recovers the desired multi-modal distribution: It successfully identify two viable pathways with low state cost, one of which bypasses the saddle point.
In constrast, DeepGSB generates only uni-modal distributions with samples scattered over tall mountain regions with high state cost,
yielding a higher objective value \eq{3}.

\begin{figure}[t]
    \vskip -0.05in
    \centering
    \includegraphics[width=0.495\textwidth]{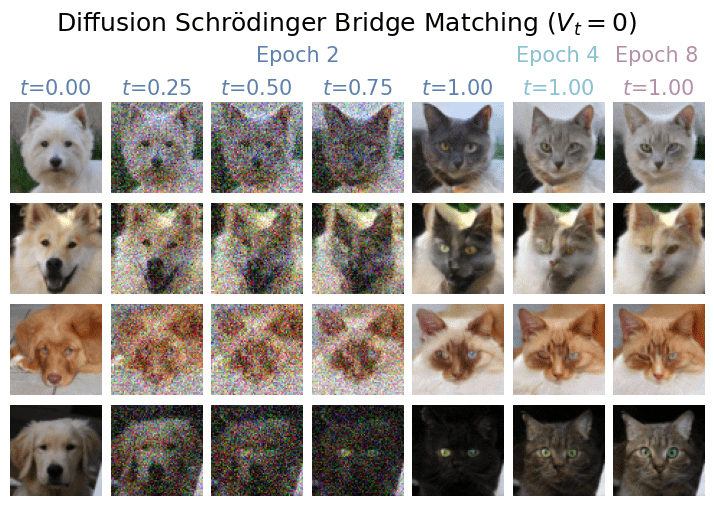}
    \includegraphics[width=0.495\textwidth]{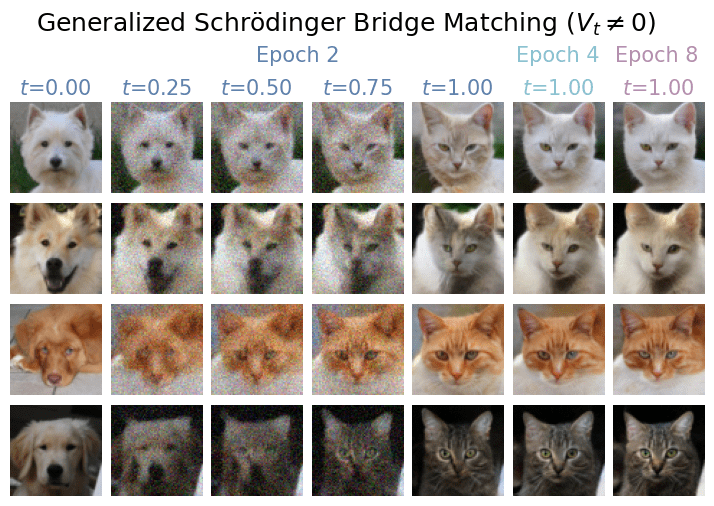}
    \vskip -0.1in
    \caption{
        Comparison between DSBM \citep{shi2023diffusion}
        and our GSBM on:
        (\textit{leftmost 5 columns}) the {generation processes} and
        (\textit{rightmost 3 columns}) their couplings $p^\theta(X_1|X_0)$ during training.
        By constructing $V_t$ via a latent space,
        GSBM exhibits \textbf{faster convergence} and \textbf{yields better couplings}.
    }
    \label{fig:afhq-result}
    \vskip -0.05in
\end{figure}

\begin{figure}[t]
    \begin{minipage}{0.75\textwidth}
        \centering
        \includegraphics[width=\linewidth]{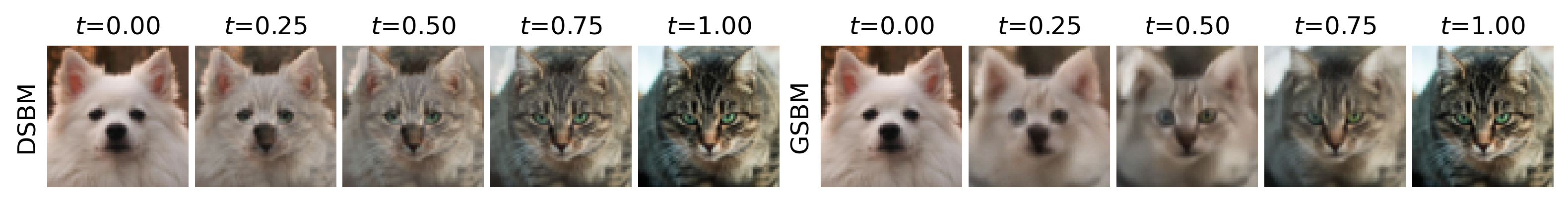}
        \vskip -0.1in
        \caption{
            Mean of $p_t(X_t|x_0, x_1)$.
            Instead of using linear interpolation as in DSBM, GSBM \emph{optimizes} $p_{t|0,1}$ w.r.t. \eq{6} where $V_t$ is defined via a latent space, thereby exhibiting \textbf{more semantically meaningful interpolations}.\looseness=-1
        }
        \label{fig:afhq-rstep}
    \end{minipage}
    \hfill
    \begin{minipage}{0.23\textwidth}
        \captionsetup{type=table}
        \caption{
            FID values of dog$\rightarrow$cat
            for DSBM and our GSBM.
        }
        \vskip -0.13in
        \label{tb:fid}
        \centering
        \setlength{\tabcolsep}{8pt}
        \begin{tabular}{cc}
            \toprule
            DSBM & GSBM \\
            \midrule
            14.16 & \textbf{12.39} \\
            \bottomrule
        \end{tabular}
    \end{minipage}
    \vskip -0.05in
\end{figure}

\subsection{Image domain interpolation and unpaired translation} \label{sec:afhq}

Next, we consider unpaired translation between dogs and cats from AFHQ~\citep{choi2020stargan}.
We aim to explore how appropriate choices of state cost $V_t$ can help encourage more natural interpolations and more semantically meaningful couplings.
While the design of $V_t$ itself is an interesting open question, here we exploit the geometry of a learned latent space.
To this end, we use a pretrained variational autoencoder~\citep{kingma2014auto}, then
define $V_t$ conditioned on an interpolation of two end points
(notice that $x$ and $z$ respectively belong to image and latent spaces):
\begin{align}
    V_t(x|x_0,x_1) = \norm{x - \mathrm{Decoder}(z_t) }^2, \quad
    z_t := I(t, \mathrm{Encoder}(x_0), \mathrm{Encoder}(x_1)).
\end{align}
Though $I(t,z_0,z_1)$ can be any appropriate interpolation,
we find the spherical linear interpolation \citep{shoemake1985animating} to be particularly effective, due to the Gaussian geometry in latent space.
As the high dimensionality greatly impedes DeepGSB,
we mainly compare with DSBM \citep{shi2023diffusion},
the special case of our GSBM when $V_t$ is degenerate.
As in DSBM, we resize images to 64$\times$64.

\Cref{fig:afhq-result} reports the qualitative comparison between the generation
processes of DSBM and our GSBM, along with their coupling during training.
It is clear that, with the aid of a semantically meaningful $V_t$,
GSBM typically converges to near-optimal coupling early in training,
and, as expected, yields more interpretable generation processes.
Interestingly,
despite being subject to the same noise level ($\sigma{=}0.5$ in this case),
the GSBM's generation processes are generally less noisy than DSBM.
This is due to the optimization of the CondSOC problem \eq{6} with our specific choice of $V_t$.
As shown in Fig.~\ref{fig:afhq-rstep}, the conditional density $p_t(X_t|x_0,x_1)$ used in GSBM
appears to eliminate unnatural artifacts observed in Brownian bridges, which rely on simple linear interpolation in pixel space.
Quantitatively, our GSBM also achieves lower FID value
as a measure of feasibility, as shown in \Cref{tb:fid}.
Finally, we note that the inclusion of $V_t$ and the solving of CondSOC increase wallclock time by \emph{a mere 0.5\%} compared to DSBM (see \Cref{tb:runtime_perc}).

\subsection{High-dimensional opinion depolarization} \label{sec:opinion}

\begin{wrapfigure}[7]{r}{0.41\textwidth}
    \vspace*{-44pt}
    \centering
    \includegraphics[width=0.41\textwidth]{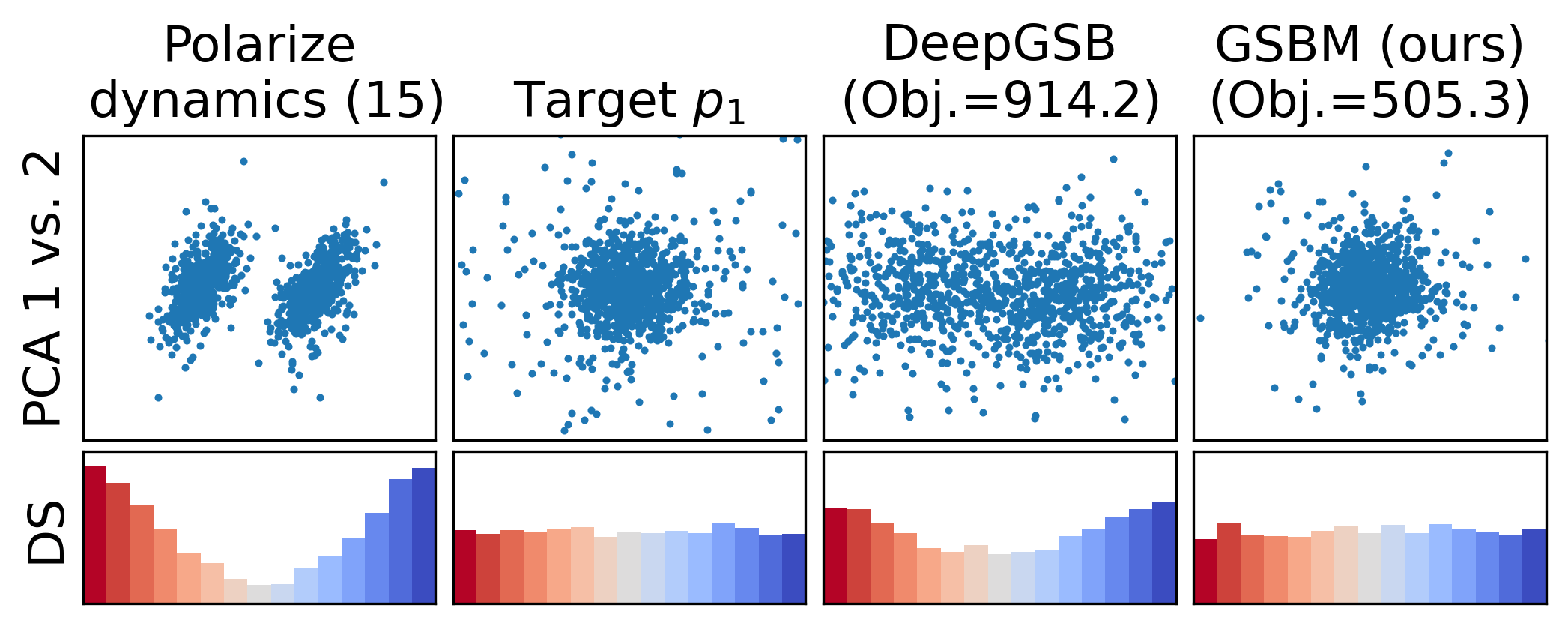}
    \vskip -0.1in
    \caption{
        Distribution of terminal opinion $X_1 \in \sR^{1000}$
        and their directional similarities (DS; see \Cref{app:opinion}).
    }
    \label{fig:opinion}
\end{wrapfigure}
Finally, we consider high-dimensional opinion depolarization,
initially introduced in DeepGSB,
where an opinion $X_t~{\in}~\R^{1000}$
is influenced by a polarizing effect \citep{schweighofer2020agent}
when evolving through interactions with the population (see \Cref{app:opinion}):
\begin{align}\label{eq:polarize_sde}
    \rd X_t = f_{\text{polarize}}(X_t; p_t) \dt + \sigma \dWt
\end{align}
Without any intervention, the opinion dynamics in \eq{polarize_sde} tend to segregate into groups with diametrically opposed views (first column of Fig.~\ref{fig:opinion}), as opposed to the desired unimodal distribution $p_1$ (second column of Fig.~\ref{fig:opinion}).
To adapt our GSBM for this task, we treat \eq{polarize_sde} as a base drift, specifically defining
$\ut(X_t) := f_{\text{polarize}}(X_t; p_t) + v_t^\theta(X_t)$,
and then solving the CondSOC \eq{6} by replacing the kinetic energy with $\int\norm{v_t^\theta}^2\dt$.
Similar to DeepGSB, we consider the same congestion cost $V_t$ defined in \eq{V-mf}.
As shown in Fig.~\ref{fig:opinion},
both DeepGSB and our GSBM demonstrate the capability to mitigate opinion segregation.
However, our GSBM achieves closer proximity to the target $p_1$, indicating stronger feasibility,
and achieve almost half the objective value \eq{3} relative to DeepGSB.

\vspace*{-2pt}
\subsection{Discussions}\label{sec:ablation}
\vspace*{-2pt}

\textbf{Effect of noise ($\sigma$).$\quad$}
In the stochastic control setting \citep{theodorou2010generalized},
the task-specific value of $\sigma$ plays a crucial role in
representing the uncertainty from environment or the error in executing control.
The optimal control thus changes drastically depending on $\sigma$.
\Cref{fig:drunkenspider} demonstrates how our GSBM correctly resolves this phenomenon,
on an example of the famous ``drunken spider'' problem discussed in \citet{kappen2005path}.
In the absence of noise ($\sigma{=}0$), it is very easy to steer through the narrow passage. When large amounts of noise is present ($\sigma{=}1.0$), there is a high chance of colliding with the obstacles, so the optimal solution is to completely steer around the obstacles.\looseness=-1

\textbf{Ablation study on path integral (PI) resampling.$\quad$}
In \Cref{tb:aba-pi},
we ablate how the objective value changes when enabling PI resampling on
different matching losses and noise $\sigma$
and report their performance (averaged over 5 independent trails) on the Stunnel task.
We observe that PI resampling tends to enhance overall performance, particularly in low noise conditions, at the expense of a slightly increased runtime of 8\%.
Meanwhile, as shown in \Cref{tb:aba-runtime}, implicit matching \eq{4} typically requires longer time overall ($\times$2.7 even in two dimensions), compared to its explicit counterpart.

\textbf{Profiling GSBM.$\quad$}
The primary algorithmic distinction between our GSBM and previous SB matching methods \citep{shi2023diffusion,peluchetti2023diffusion}
lies in how $p_{t|0,1}$ and $u_{t|0,1}$ are computed (see \Lemmaref{3}),
where GSBM involves solving an additional CondSOC problem, \ie lines 5-6 in Alg.~\ref{alg:gsbm}.
\Cref{tb:runtime_perc} suggests that these computations, uniquely attached to GSBM,
induce little computational overhead compared to other components in Alg.~\ref{alg:gsbm}.
This computational efficiency is due to our efficient variational approximation admitting simulation-free, parallelizable optimization.

\begin{figure}[t]
    \vskip -0.05in
    \centering
      \begin{minipage}{0.65\textwidth}
        \centering
        \includegraphics[width=\linewidth]{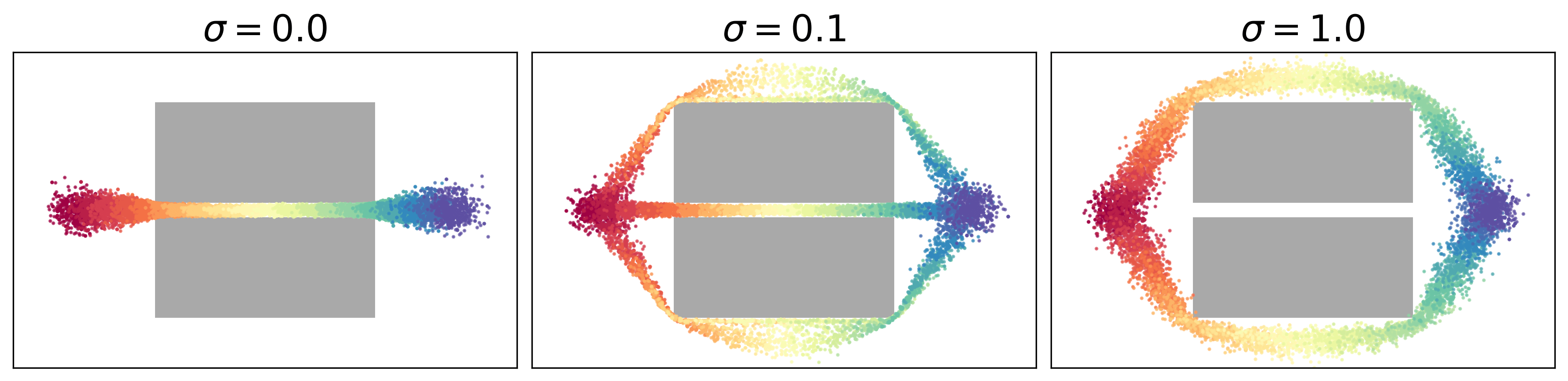}
        \vskip -0.1in
        \caption{
            Our GSBM reveals how the optimal solution drastically changes w.r.t. the levels of noise ($\sigma$) when navigating through a narrow passway surrounded by the obstacles (shown in gray).
        }
        \label{fig:drunkenspider}
    \end{minipage}
    $\text{ }$
    \begin{minipage}{0.33\textwidth}
      \captionsetup{type=table}
      \caption{
          Relative runtime between combination of matching losses and PI resampling
          on solving Stunnel, relative to $\Lexp$.
      }
      \centering
      \vskip -0.1in
      \label{tb:aba-runtime}
      \setlength{\tabcolsep}{5pt}
      \begin{tabular}{rcc}
          \toprule
              &  $\Lexp$ & $\Limp$ \\
          \midrule
            without PI  & 100\% & 276\% \\
            with PI   & 108\% & 284\% \\
        \bottomrule
      \end{tabular}
    \end{minipage}
    \vskip -0.05in
\end{figure}

\newcommand{\stdresult}[2]{{{#1}{\footnotesize~$\pm$~#2}}}

\begin{figure}[t]
    \centering
      \begin{minipage}{0.6\textwidth}
          \captionsetup{type=table}
          \caption{
              How the objective value \eq{3} changes when enabling PI resampling
              on each matching loss and $\sigma$ in Stunnel.
              Performance is improved by deceasing ($-$) the objective values.
          }
          \centering
          \vskip -0.1in
          \label{tb:aba-pi}
          \begin{tabular}{rrrr}
              \toprule
              & $\sigma=0.5$ & $1.0$ & $2.0$ \\
              \midrule
              $\Lexp$ &
              \stdresult{$-$44.6}{5.9} &
              \stdresult{$-$5.2}{2.6} &
              \stdresult{1.5}{3.5}\\[2pt]
              $\Limp$ &
              \stdresult{$-$112.4}{53.7} &
              \stdresult{$-$2.0}{8.0} &
              \stdresult{$-$0.7}{6.0} \\
            \bottomrule
          \end{tabular}
    \end{minipage}
    \hfill
    \begin{minipage}{0.38\textwidth}
      \captionsetup{type=table}
      \caption{
          Percentage of time spent in different stages
          of Alg.~\ref{alg:gsbm}, measured on the AFHQ task.
      }
      \vskip -0.1in
      \label{tb:runtime_perc}
      \centering
      \setlength{\tabcolsep}{4.5pt}
      \begin{tabular}{ccc}
          \toprule
              \texttt{match}  & Simulate $\ut$  & Solve \eq{6}  \\
              (line 3) & (line 4) & (lines 5-6) \\
          \midrule
              64.3\% & 35.2\% & 0.5\% \\
        \bottomrule
      \end{tabular}
    \end{minipage}
    \vskip -0.1in
  \end{figure}

\vspace*{-2pt}
\section{Conclusion and limitation}
\vspace*{-2pt}
We developed GSBM, a new matching algorithm that provides approximate solutions to the \GSB{} (GSB) problems.
We demonstrated strong capabilities of GSBM over prior methods in solving
crowd navigation, opinion modeling, and interpretable domain transfer.
It should be noticed that GSBM requires differentiability of $V_t$
and relies on the quadratic control cost to establish its convergence analysis,
which, despite notably improving over prior methods, remains as necessary conditions.
We acknowledge these limitations and leave them for future works.

\subsubsection*{Acknowledgements}
We acknowledge the Python community
\citep{van1995python,oliphant2007python}
and the core set of tools that enabled this work, including
PyTorch \citep{paszke2019pytorch},
functorch \citep{functorch2021},
torchdiffeq \citep{torchdiffeq},
\textsc{Jax} \citep{jax2018github},
Flax \citep{flax2020github},
Hydra \citep{Yadan2019Hydra},
Jupyter \citep{kluyver2016jupyter},
Matplotlib \citep{hunter2007matplotlib},
numpy \citep{oliphant2006guide,van2011numpy}, and
SciPy \citep{jones2014scipy}.

\bibliography{reference}

\begin{thebibliography}{83}
\providecommand{\natexlab}[1]{#1}
\providecommand{\url}[1]{\texttt{#1}}
\expandafter\ifx\csname urlstyle\endcsname\relax
  \providecommand{\doi}[1]{doi: #1}\else
  \providecommand{\doi}{doi: \begingroup \urlstyle{rm}\Url}\fi

\bibitem[Albergo \& Vanden-Eijnden(2023)Albergo and Vanden-Eijnden]{albergo2023building}
Michael~S Albergo and Eric Vanden-Eijnden.
\newblock {Building normalizing flows with stochastic interpolants}.
\newblock In \emph{International Conference on Learning Representations (ICLR)}, 2023.

\bibitem[Albergo et~al.(2023)Albergo, Boffi, and Vanden-Eijnden]{albergo2023stochastic}
Michael~S Albergo, Nicholas~M Boffi, and Eric Vanden-Eijnden.
\newblock {Stochastic interpolants: A unifying framework for flows and diffusions}.
\newblock \emph{arXiv preprint arXiv:2303.08797}, 2023.

\bibitem[Ambrosio et~al.(2008)Ambrosio, Gigli, and Savar{\'e}]{ambrosio2008gradient}
Luigi Ambrosio, Nicola Gigli, and Giuseppe Savar{\'e}.
\newblock \emph{{Gradient flows: In metric spaces and in the space of probability measures}}.
\newblock Springer Science \& Business Media, 2008.

\bibitem[Anderson(1982)]{anderson1982reverse}
Brian~DO Anderson.
\newblock {Reverse-time diffusion equation models}.
\newblock \emph{Stochastic Processes and their Applications}, 12\penalty0 (3):\penalty0 313--326, 1982.

\bibitem[Bradbury et~al.(2018)Bradbury, Frostig, Hawkins, Johnson, Leary, Maclaurin, Necula, Paszke, Vander{P}las, Wanderman-{M}ilne, and Zhang]{jax2018github}
James Bradbury, Roy Frostig, Peter Hawkins, Matthew~James Johnson, Chris Leary, Dougal Maclaurin, George Necula, Adam Paszke, Jake Vander{P}las, Skye Wanderman-{M}ilne, and Qiao Zhang.
\newblock {JAX}: composable transformations of {P}ython+{N}um{P}y programs, 2018.
\newblock URL \url{http://github.com/google/jax}.

\bibitem[Chen(2018)]{torchdiffeq}
Ricky T.~Q. Chen.
\newblock torchdiffeq, 2018.
\newblock URL \url{https://github.com/rtqichen/torchdiffeq}.

\bibitem[Chen \& Lipman(2023)Chen and Lipman]{chen2023riemannian}
Ricky T.~Q. Chen and Yaron Lipman.
\newblock {Riemannian flow matching on general geometries}.
\newblock \emph{arXiv preprint arXiv:2302.03660}, 2023.

\bibitem[Chen et~al.(2018)Chen, Rubanova, Bettencourt, and Duvenaud]{chen2018neural}
Ricky T.~Q. Chen, Yulia Rubanova, Jesse Bettencourt, and David~K Duvenaud.
\newblock {Neural ordinary differential equations}.
\newblock In \emph{Advances in Neural Information Processing Systems (NeurIPS)}, 2018.

\bibitem[Chen et~al.(2022)Chen, Liu, and Theodorou]{chen2022likelihood}
Tianrong Chen, Guan-Horng Liu, and Evangelos~A Theodorou.
\newblock {Likelihood training of Schr\"{o}dinger bridge using forward-backward SDEs theory}.
\newblock In \emph{International Conference on Learning Representations (ICLR)}, 2022.

\bibitem[Chen(2023)]{chen2023density}
Yongxin Chen.
\newblock {Density control of interacting agent systems}.
\newblock \emph{IEEE Transactions on Automatic Control}, 2023.

\bibitem[Chen \& Georgiou(2015)Chen and Georgiou]{chen2015stochastic}
Yongxin Chen and Tryphon Georgiou.
\newblock Stochastic bridges of linear systems.
\newblock \emph{IEEE Transactions on Automatic Control}, 61\penalty0 (2):\penalty0 526--531, 2015.

\bibitem[Chen et~al.(2015)Chen, Georgiou, and Pavon]{chen2015optimal}
Yongxin Chen, Tryphon Georgiou, and Michele Pavon.
\newblock {Optimal steering of inertial particles diffusing anisotropically with losses}.
\newblock In \emph{American Control Conference (ACC)}. IEEE, 2015.

\bibitem[Choi et~al.(2020)Choi, Uh, Yoo, and Ha]{choi2020stargan}
Yunjey Choi, Youngjung Uh, Jaejun Yoo, and Jung-Woo Ha.
\newblock {Stargan v2: Diverse image synthesis for multiple domains}.
\newblock In \emph{IEEE Conference on Computer Vision and Pattern Recognition (CVPR)}, 2020.

\bibitem[Cuturi(2013)]{cuturi2013sinkhorn}
Marco Cuturi.
\newblock {Sinkhorn distances: Lightspeed computation of optimal transport}.
\newblock In \emph{Advances in Neural Information Processing Systems (NeurIPS)}, 2013.

\bibitem[De~Bortoli et~al.(2021)De~Bortoli, Thornton, Heng, and Doucet]{de2021diffusion}
Valentin De~Bortoli, James Thornton, Jeremy Heng, and Arnaud Doucet.
\newblock {Diffusion Schr\"odinger bridge with applications to score-based generative modeling}.
\newblock In \emph{Advances in Neural Information Processing Systems (NeurIPS)}, 2021.

\bibitem[Dhariwal \& Nichol(2021)Dhariwal and Nichol]{dhariwal2021diffusion}
Prafulla Dhariwal and Alex Nichol.
\newblock {Diffusion models beat GANs on image synthesis}.
\newblock In \emph{Advances in Neural Information Processing Systems (NeurIPS)}, 2021.

\bibitem[Di~Marino et~al.(2017)Di~Marino, Gerolin, and Nenna]{di2017optimal}
Simone Di~Marino, Augusto Gerolin, and Luca Nenna.
\newblock {Optimal transportation theory with repulsive costs}.
\newblock \emph{Topological optimization and optimal transport}, 17:\penalty0 204--256, 2017.

\bibitem[Fortet(1940)]{fortet1940resolution}
Robert Fortet.
\newblock {R{\'e}solution d'un syst{\`e}me d'{\'e}quations de M. Schr{\"o}dinger}.
\newblock \emph{Journal de Math{\'e}matiques Pures et Appliqu{\'e}es}, 19\penalty0 (1-4):\penalty0 83--105, 1940.

\bibitem[Gaitonde et~al.(2021)Gaitonde, Kleinberg, and Tardos]{gaitonde2021polarization}
Jason Gaitonde, Jon Kleinberg, and {\'E}va Tardos.
\newblock {Polarization in geometric opinion dynamics}.
\newblock In \emph{ACM Conference on Economics and Computation}, pp.\  499--519, 2021.

\bibitem[Heek et~al.(2020)Heek, Levskaya, Oliver, Ritter, Rondepierre, Steiner, and van {Z}ee]{flax2020github}
Jonathan Heek, Anselm Levskaya, Avital Oliver, Marvin Ritter, Bertrand Rondepierre, Andreas Steiner, and Marc van {Z}ee.
\newblock {F}lax: A neural network library and ecosystem for {JAX}, 2020.
\newblock URL \url{http://github.com/google/flax}.

\bibitem[Higgins et~al.(2016)Higgins, Matthey, Pal, Burgess, Glorot, Botvinick, Mohamed, and Lerchner]{higgins2016beta}
Irina Higgins, Loic Matthey, Arka Pal, Christopher Burgess, Xavier Glorot, Matthew Botvinick, Shakir Mohamed, and Alexander Lerchner.
\newblock {$\beta$-vae: Learning basic visual concepts with a constrained variational framework}.
\newblock In \emph{International Conference on Learning Representations (ICLR)}, 2016.

\bibitem[Horace~He(2021)]{functorch2021}
Richard~Zou Horace~He.
\newblock functorch: Jax-like composable function transforms for pytorch.
\newblock \url{https://github.com/pytorch/functorch}, 2021.

\bibitem[Hunter(2007)]{hunter2007matplotlib}
John~D Hunter.
\newblock Matplotlib: A 2d graphics environment.
\newblock \emph{Computing in science \& engineering}, 9\penalty0 (3):\penalty0 90, 2007.

\bibitem[Hutchinson(1989)]{hutchinson1989stochastic}
Michael~F Hutchinson.
\newblock {A stochastic estimator of the trace of the influence matrix for Laplacian smoothing splines}.
\newblock \emph{Communications in Statistics-Simulation and Computation}, 18\penalty0 (3):\penalty0 1059--1076, 1989.

\bibitem[It{\^o}(1951)]{ito1951stochastic}
Kiyosi It{\^o}.
\newblock \emph{{On stochastic differential equations}}, volume~4.
\newblock American Mathematical Soc., 1951.

\bibitem[Jones et~al.(2014)Jones, Oliphant, and Peterson]{jones2014scipy}
Eric Jones, Travis Oliphant, and Pearu Peterson.
\newblock $\{$SciPy$\}$: Open source scientific tools for $\{$Python$\}$.
\newblock 2014.

\bibitem[Kappen(2005)]{kappen2005path}
Hilbert~J Kappen.
\newblock {Path integrals and symmetry breaking for optimal control theory}.
\newblock \emph{Journal of Statistical Mechanics: Theory and Experiment}, 2005\penalty0 (11):\penalty0 P11011, 2005.

\bibitem[Kappen \& Ruiz(2016)Kappen and Ruiz]{kappen2016adaptive}
Hilbert~Johan Kappen and Hans~Christian Ruiz.
\newblock {Adaptive importance sampling for control and inference}.
\newblock \emph{Journal of Statistical Physics}, 162\penalty0 (5):\penalty0 1244--1266, 2016.

\bibitem[Kingma \& Welling(2014)Kingma and Welling]{kingma2014auto}
Diederik~P Kingma and Max Welling.
\newblock {Auto-Encoding variational bayes}.
\newblock In \emph{International Conference on Learning Representations (ICLR)}, 2014.

\bibitem[Kluyver et~al.(2016)Kluyver, Ragan-Kelley, P{\'{e}}rez, Granger, Bussonnier, Frederic, Kelley, Hamrick, Grout, Corlay, et~al.]{kluyver2016jupyter}
Thomas Kluyver, Benjamin Ragan-Kelley, Fernando P{\'{e}}rez, Brian~E Granger, Matthias Bussonnier, Jonathan Frederic, Kyle Kelley, Jessica~B Hamrick, Jason Grout, Sylvain Corlay, et~al.
\newblock Jupyter notebooks-a publishing format for reproducible computational workflows.
\newblock In \emph{ELPUB}, pp.\  87--90, 2016.

\bibitem[Koshizuka \& Sato(2023)Koshizuka and Sato]{koshizuka2023neural}
Takeshi Koshizuka and Issei Sato.
\newblock {Neural Lagrangian Schr{\"o}dinger bridge: Diffusion modeling for population dynamics}.
\newblock In \emph{International Conference on Learning Representations (ICLR)}, 2023.

\bibitem[Legg \& Anderson(2013)Legg and Anderson]{lidar_rainier}
Nicholas Legg and Scott Anderson.
\newblock {Southwest Flank of Mt.Rainier, WA}, 2013.
\newblock URL \url{https://opentopography.org/meta/OT.052013.26910.1}.
\newblock Accessed on 2023-09-12.

\bibitem[L{\'e}onard(2013)]{leonard2013survey}
Christian L{\'e}onard.
\newblock {A survey of the Schr{\"o}dinger problem and some of its connections with optimal transport}.
\newblock \emph{Discrete and Continuous Dynamical Systems}, 2013.

\bibitem[L{\'e}onard et~al.(2014)L{\'e}onard, R{\oe}lly, and Zambrini]{leonard2014reciprocal}
Christian L{\'e}onard, Sylvie R{\oe}lly, and Jean-Claude Zambrini.
\newblock {Reciprocal processes. A measure-theoretical point of view}.
\newblock \emph{Probability Surveys}, 2014.

\bibitem[Levin(1998)]{levin1998approximation}
David Levin.
\newblock {The approximation power of moving least-squares}.
\newblock \emph{Mathematics of Computation}, 67\penalty0 (224):\penalty0 1517--1531, 1998.

\bibitem[Levine(2018)]{levine2018reinforcement}
Sergey Levine.
\newblock {Reinforcement learning and control as probabilistic inference: Tutorial and review}.
\newblock \emph{arXiv preprint arXiv:1805.00909}, 2018.

\bibitem[Li et~al.(2020)Li, Wong, Chen, and Duvenaud]{li2020scalable}
Xuechen Li, Ting-Kam~Leonard Wong, Ricky T.~Q. Chen, and David Duvenaud.
\newblock {Scalable gradients for stochastic differential equations}.
\newblock In \emph{International Conference on Artificial Intelligence and Statistics (AISTATS)}, 2020.

\bibitem[Lin et~al.(2021)Lin, Fung, Li, Nurbekyan, and Osher]{lin2021alternating}
Alex~Tong Lin, Samy~Wu Fung, Wuchen Li, Levon Nurbekyan, and Stanley~J Osher.
\newblock {Alternating the population and control neural networks to solve high-dimensional stochastic mean-field games}.
\newblock \emph{Proceedings of the National Academy of Sciences}, 118\penalty0 (31), 2021.

\bibitem[Lipman et~al.(2023)Lipman, Chen, Ben-Hamu, Nickel, and Le]{lipman2023flow}
Yaron Lipman, Ricky T.~Q. Chen, Heli Ben-Hamu, Maximilian Nickel, and Matt Le.
\newblock {Flow matching for generative modeling}.
\newblock In \emph{International Conference on Learning Representations (ICLR)}, 2023.

\bibitem[Liu et~al.(2022)Liu, Chen, So, and Theodorou]{liu2022deep}
Guan-Horng Liu, Tianrong Chen, Oswin So, and Evangelos~A Theodorou.
\newblock {Deep generalized Schr{\"o}dinger bridge}.
\newblock In \emph{Advances in Neural Information Processing Systems (NeurIPS)}, 2022.

\bibitem[Liu et~al.(2023{\natexlab{a}})Liu, Vahdat, Huang, Theodorou, Nie, and Anandkumar]{liu2023i2sb}
Guan-Horng Liu, Arash Vahdat, De-An Huang, Evangelos~A Theodorou, Weili Nie, and Anima Anandkumar.
\newblock {I{$^2$}SB: Image-to-Image Schr{\"o}dinger bridge}.
\newblock In \emph{International Conference on Machine Learning (ICML)}, 2023{\natexlab{a}}.

\bibitem[Liu(2022)]{liu2022rectified}
Qiang Liu.
\newblock {Rectified flow: A marginal preserving approach to optimal transport}.
\newblock \emph{arXiv preprint arXiv:2209.14577}, 2022.

\bibitem[Liu et~al.(2023{\natexlab{b}})Liu, Gong, and Liu]{liu2023flow}
Xingchao Liu, Chengyue Gong, and Qiang Liu.
\newblock {Flow straight and fast: Learning to generate and transfer data with rectified flow}.
\newblock In \emph{International Conference on Learning Representations (ICLR)}, 2023{\natexlab{b}}.

\bibitem[Liu et~al.(2018)Liu, Wu, and Lin]{liu2018mean}
Zhiyu Liu, Bo~Wu, and Hai Lin.
\newblock {A mean field game approach to swarming robots control}.
\newblock In \emph{American Control Conference (ACC)}. IEEE, 2018.

\bibitem[Loshchilov \& Hutter(2019)Loshchilov and Hutter]{loshchilov2019decoupled}
Ilya Loshchilov and Frank Hutter.
\newblock {Decoupled weight decay regularization}.
\newblock In \emph{International Conference on Learning Representations (ICLR)}, 2019.

\bibitem[Mazzolo \& Monthus(2022)Mazzolo and Monthus]{mazzolo2022conditioning}
Alain Mazzolo and C{\'e}cile Monthus.
\newblock {Conditioning diffusion processes with killing rates}.
\newblock \emph{Journal of Statistical Mechanics: Theory and Experiment}, 2022\penalty0 (8):\penalty0 083207, 2022.

\bibitem[Neklyudov et~al.(2023)Neklyudov, Severo, and Makhzani]{neklyudov2023action}
Kirill Neklyudov, Daniel Severo, and Alireza Makhzani.
\newblock {Action matching: A variational method for learning stochastic dynamics from samples}.
\newblock In \emph{International Conference on Machine Learning (ICML)}, 2023.

\bibitem[No{\'e} et~al.(2020)No{\'e}, Tkatchenko, M{\"u}ller, and Clementi]{noe2020machine}
Frank No{\'e}, Alexandre Tkatchenko, Klaus-Robert M{\"u}ller, and Cecilia Clementi.
\newblock {Machine learning for molecular simulation}.
\newblock \emph{Annual Review of Physical Chemistry}, 71:\penalty0 361--390, 2020.

\bibitem[Okada \& Taniguchi(2020)Okada and Taniguchi]{okada2020variational}
Masashi Okada and Tadahiro Taniguchi.
\newblock {Variational inference mpc for bayesian model-based reinforcement learning}.
\newblock In \emph{Conference on Robot Learning (CoRL)}, 2020.

\bibitem[Oliphant(2006)]{oliphant2006guide}
Travis~E Oliphant.
\newblock \emph{A guide to NumPy}, volume~1.
\newblock Trelgol Publishing USA, 2006.

\bibitem[Oliphant(2007)]{oliphant2007python}
Travis~E Oliphant.
\newblock Python for scientific computing.
\newblock \emph{Computing in Science \& Engineering}, 9\penalty0 (3):\penalty0 10--20, 2007.

\bibitem[Pardoux \& Peng(2005)Pardoux and Peng]{pardoux2005backward}
Etienne Pardoux and Shige Peng.
\newblock {Backward stochastic differential equations and quasilinear parabolic partial differential equations}.
\newblock In \emph{Stochastic Partial Differential Equations and Their Applications}. Springer, 2005.

\bibitem[Paszke et~al.(2019)Paszke, Gross, Massa, Lerer, Bradbury, Chanan, Killeen, Lin, Gimelshein, Antiga, et~al.]{paszke2019pytorch}
Adam Paszke, Sam Gross, Francisco Massa, Adam Lerer, James Bradbury, Gregory Chanan, Trevor Killeen, Zeming Lin, Natalia Gimelshein, Luca Antiga, et~al.
\newblock Pytorch: An imperative style, high-performance deep learning library.
\newblock In \emph{Advances in neural information processing systems}, pp.\  8026--8037, 2019.

\bibitem[Peluchetti(2022)]{peluchetti2022nondenoising}
Stefano Peluchetti.
\newblock {Non-Denoising forward-time diffusions}, 2022.
\newblock URL \url{https://openreview.net/forum?id=oVfIKuhqfC}.

\bibitem[Peluchetti(2023)]{peluchetti2023diffusion}
Stefano Peluchetti.
\newblock {Diffusion bridge mixture transports, Schr{\"o}dinger bridge problems and generative modeling}.
\newblock \emph{arXiv preprint arXiv:2304.00917}, 2023.

\bibitem[Peyr{\'e} \& Cuturi(2017)Peyr{\'e} and Cuturi]{peyre2017computational}
Gabriel Peyr{\'e} and Marco Cuturi.
\newblock {Computational optimal transport}.
\newblock \emph{Center for Research in Economics and Statistics Working Papers}, 2017.

\bibitem[Peyr{\'e} \& Cuturi(2019)Peyr{\'e} and Cuturi]{peyre2019computational}
Gabriel Peyr{\'e} and Marco Cuturi.
\newblock {Computational optimal transport: With applications to data science}.
\newblock \emph{Foundations and Trends{\textregistered} in Machine Learning}, 11\penalty0 (5-6):\penalty0 355--607, 2019.

\bibitem[Philippidis et~al.(1979)Philippidis, Dewdney, and Hiley]{philippidis1979quantum}
Chris Philippidis, Chris Dewdney, and Basil~J Hiley.
\newblock {Quantum interference and the quantum potential}.
\newblock \emph{Nuovo Cimento B}, 52\penalty0 (1):\penalty0 15--28, 1979.

\bibitem[Rawlik et~al.(2013)Rawlik, Toussaint, and Vijayakumar]{rawlik2013stochastic}
Konrad Rawlik, Marc Toussaint, and Sethu Vijayakumar.
\newblock {On stochastic optimal control and reinforcement learning by approximate inference}.
\newblock In \emph{International Joint Conference on Artificial Intelligence (IJCAI)}, 2013.

\bibitem[Risken(1996)]{risken1996fokker}
Hannes Risken.
\newblock {Fokker-Planck equation}.
\newblock In \emph{The Fokker-Planck Equation}, pp.\  63--95. Springer, 1996.

\bibitem[Ronneberger et~al.(2015)Ronneberger, Fischer, and Brox]{ronneberger2015u}
Olaf Ronneberger, Philipp Fischer, and Thomas Brox.
\newblock {U-Net: Convolutional networks for biomedical image segmentation}.
\newblock In \emph{International Conference on Medical Image Computing and Computer-assisted Intervention}. Springer, 2015.

\bibitem[Ruthotto et~al.(2020)Ruthotto, Osher, Li, Nurbekyan, and Fung]{ruthotto2020machine}
Lars Ruthotto, Stanley~J Osher, Wuchen Li, Levon Nurbekyan, and Samy~Wu Fung.
\newblock {A machine learning framework for solving high-dimensional mean field game and mean field control problems}.
\newblock \emph{Proceedings of the National Academy of Sciences}, 117\penalty0 (17):\penalty0 9183--9193, 2020.

\bibitem[S{\"a}rkk{\"a} \& Solin(2019)S{\"a}rkk{\"a} and Solin]{sarkka2019applied}
Simo S{\"a}rkk{\"a} and Arno Solin.
\newblock \emph{{Applied stochastic differential equations}}, volume~10.
\newblock Cambridge University Press, 2019.

\bibitem[Schr{\"o}dinger(1931)]{schr1931uber}
Erwin Schr{\"o}dinger.
\newblock \emph{{{\"U}ber die Umkehrung der Naturgesetze}}, volume~IX.
\newblock Sitzungsberichte der Preuss Akad. Wissen. Phys. Math. Klasse, Sonderausgabe, 1931.

\bibitem[Schweighofer et~al.(2020)Schweighofer, Garcia, and Schweitzer]{schweighofer2020agent}
Simon Schweighofer, David Garcia, and Frank Schweitzer.
\newblock {An agent-based model of multi-dimensional opinion dynamics and opinion alignment}.
\newblock \emph{Chaos: An Interdisciplinary Journal of Nonlinear Science}, 30\penalty0 (9):\penalty0 093139, 2020.

\bibitem[Shaul et~al.(2023)Shaul, Chen, Nickel, Le, and Lipman]{shaul2023kinetic}
Neta Shaul, Ricky T.~Q. Chen, Maximilian Nickel, Matthew Le, and Yaron Lipman.
\newblock {On kinetic optimal probability paths for generative models}.
\newblock In \emph{International Conference on Machine Learning (ICML)}, 2023.

\bibitem[Shi et~al.(2023)Shi, De~Bortoli, Campbell, and Doucet]{shi2023diffusion}
Yuyang Shi, Valentin De~Bortoli, Andrew Campbell, and Arnaud Doucet.
\newblock {Diffusion Schr{\"o}dinger bridge matching}.
\newblock In \emph{Advances in Neural Information Processing Systems (NeurIPS)}, 2023.

\bibitem[Shoemake(1985)]{shoemake1985animating}
Ken Shoemake.
\newblock {Animating rotation with quaternion curves}.
\newblock In \emph{Conference on Computer Graphics and Interactive Techniques}, 1985.

\bibitem[Sohl-Dickstein et~al.(2015)Sohl-Dickstein, Weiss, Maheswaranathan, and Ganguli]{sohl2015deep}
Jascha Sohl-Dickstein, Eric Weiss, Niru Maheswaranathan, and Surya Ganguli.
\newblock {Deep unsupervised learning using nonequilibrium thermodynamics}.
\newblock In \emph{International Conference on Machine Learning (ICML)}, 2015.

\bibitem[Song et~al.(2021)Song, Sohl-Dickstein, Kingma, Kumar, Ermon, and Poole]{song2021score}
Yang Song, Jascha Sohl-Dickstein, Diederik~P Kingma, Abhishek Kumar, Stefano Ermon, and Ben Poole.
\newblock {Score-based generative modeling through stochastic differential equations}.
\newblock In \emph{International Conference on Learning Representations (ICLR)}, 2021.

\bibitem[Theodorou(2015)]{theodorou2015nonlinear}
Evangelos Theodorou.
\newblock {Nonlinear stochastic control and information theoretic dualities: Connections, interdependencies and thermodynamic interpretations}.
\newblock \emph{Entropy}, 17\penalty0 (5):\penalty0 3352--3375, 2015.

\bibitem[Theodorou et~al.(2010)Theodorou, Buchli, and Schaal]{theodorou2010generalized}
Evangelos Theodorou, Jonas Buchli, and Stefan Schaal.
\newblock {A generalized path integral control approach to reinforcement learning}.
\newblock \emph{Journal of Machine Learning Research (JMLR)}, 11\penalty0 (Nov):\penalty0 3137--3181, 2010.

\bibitem[Todorov(2007)]{todorov2007linearly}
Emanuel Todorov.
\newblock {Linearly-solvable Markov decision problems}.
\newblock In \emph{Advances in Neural Information Processing Systems (NeurIPS)}, 2007.

\bibitem[Tong et~al.(2023)Tong, Malkin, Huguet, Zhang, Rector-Brooks, Fatras, Wolf, and Bengio]{tong2023improving}
Alexander Tong, Nikolay Malkin, Guillaume Huguet, Yanlei Zhang, Jarrid Rector-Brooks, Kilian Fatras, Guy Wolf, and Yoshua Bengio.
\newblock {Improving and generalizing flow-based generative models with minibatch optimal transport}.
\newblock In \emph{International Conference on Machine Learning (ICML), Workshop Track}, 2023.

\bibitem[Van Der~Walt et~al.(2011)Van Der~Walt, Colbert, and Varoquaux]{van2011numpy}
Stefan Van Der~Walt, S~Chris Colbert, and Gael Varoquaux.
\newblock The numpy array: a structure for efficient numerical computation.
\newblock \emph{Computing in Science \& Engineering}, 13\penalty0 (2):\penalty0 22, 2011.

\bibitem[Van~Rossum \& Drake~Jr(1995)Van~Rossum and Drake~Jr]{van1995python}
Guido Van~Rossum and Fred~L Drake~Jr.
\newblock \emph{Python reference manual}.
\newblock Centrum voor Wiskunde en Informatica Amsterdam, 1995.

\bibitem[Vargas et~al.(2021)Vargas, Thodoroff, Lawrence, and Lamacraft]{vargas2021solving}
Francisco Vargas, Pierre Thodoroff, Neil~D Lawrence, and Austen Lamacraft.
\newblock {Solving Schr\"odinger bridges via maximum likelihood}.
\newblock \emph{Entropy}, 2021.

\bibitem[Villani et~al.(2009)]{villani2009optimal}
C{\'e}dric Villani et~al.
\newblock \emph{{Optimal transport: Old and new}}, volume 338.
\newblock Springer, 2009.

\bibitem[Wang \& Deng(2018)Wang and Deng]{wang2018deep}
Mei Wang and Weihong Deng.
\newblock {Deep visual domain adaptation: A survey}.
\newblock \emph{Neurocomputing}, 312:\penalty0 135--153, 2018.

\bibitem[Wendland(2004)]{wendland2004scattered}
Holger Wendland.
\newblock \emph{{Scattered data approximation}}, volume~17.
\newblock Cambridge university press, 2004.

\bibitem[Yadan(2019)]{Yadan2019Hydra}
Omry Yadan.
\newblock Hydra - a framework for elegantly configuring complex applications.
\newblock Github, 2019.
\newblock URL \url{https://github.com/facebookresearch/hydra}.

\bibitem[Yong \& Zhou(1999)Yong and Zhou]{yong1999stochastic}
Jiongmin Yong and Xun~Yu Zhou.
\newblock \emph{{Stochastic controls: Hamiltonian systems and HJB equations}}, volume~43.
\newblock Springer Science \& Business Media, 1999.

\bibitem[Zhu et~al.(2017)Zhu, Park, Isola, and Efros]{zhu2017unpaired}
Jun-Yan Zhu, Taesung Park, Phillip Isola, and Alexei~A Efros.
\newblock Unpaired image-to-image translation using cycle-consistent adversarial networks.
\newblock In \emph{Proceedings of the IEEE international conference on computer vision}, pp.\  2223--2232, 2017.

\end{thebibliography}
\bibliographystyle{iclr2024_conference}

\newpage
\appendix

\section{Reviews on stochastic optimal control and related works} \label{app:review}

\paragraph*{Solving Generalized Schr{\"o}dinger bridge by reframing as stochastic optimal control}

The solution to a \GSB{} (GSB) problem \eq{3} can also be expressed as the solution a stochastic optimal control (SOC) problem, typically structured as
\begin{subequations}\label{eq:it-soc}
\begin{align}
    &\min_{u_t(\cdot)} \int_0^1\E_{p_t}\br{\frac{1}{2}\norm{u_t(X_t)}^2 + V_t(X_t)}\dt + \E_{p_1}\br{\phi(X_1)},\\
    &\quad\quad \text{ s.t. }
    \rd X_t = u_t(X_t) \dt + \sigma \dWt, \quad X_0 \sim \mu,
\end{align}
\end{subequations}
where we see that the terminal distribution ``hard constraint'' in GSB \eq{3} is instead relaxed into a soft ``terminal cost'' $\phi(\cdot): \R^d \rightarrow \R$.
Problems with the forms of either \eq{it-soc} or \eq{3} are known to be tied to
linearly-solvable Markov decision processes \citep{todorov2007linearly,rawlik2013stochastic},
corresponding to a \emph{tractable} class of SOC problems
whose optimality conditions---the Hamilton–Jacobi–Bellman equations---admit efficient approximation.
This is attributed to the presence of the $\ell_2$-norm control cost,
which can be interpreted as the KL divergence between controlled and uncontrolled processes.
The interpretation bridges the SOC problems to probabilistic inference \citep{levine2018reinforcement,okada2020variational},
from which machine learning algorithms, such as our GSBM, can be developed.

However, na\"ively transforming GSB problems \eq{3} into SOC problems can introduce many potential issues. The design of the terminal cost is extremely important, and in many cases, it is an \textit{intractable cost function} as we do not have access to the densities $\mu$ and $\nu$. Prior works have mainly stuck to simple terminal costs~\citep{ruthotto2020machine}, using biased approximations based on batch estimates~\citep{koshizuka2023neural}, or using an adversarial approach to learn the cost function~\citep{zhu2017unpaired,lin2021alternating}. Furthermore, this approach will necessitate differentiating through an SDE simulation, requiring \textit{high memory usage}. Though memory-efficient adjoint methods have been developed \citep{chen2018neural,li2020scalable}, they remain \textit{computationally expensive} to use at scale.\looseness=-1

In contrast, our GBSM approach only requires samples from $\mu$ and $\nu$. By enforcing boundary distributions as a hard constraint instead of a soft one, our algorithm finds solutions that satisfy feasibility much better in practice. This further allows us to consider higher dimensional problems without the need to introduce additional hyperparameters for tuning a terminal cost. Finally, our algorithm drastically reduces the number of SDE simulations required, as both the matching algorithm (Stage~1) and the CondSOC variational formulation (Stage~2) of GSBM can be done simulation-free.

\paragraph*{Comparison to related works}
\Cref{tb:1} compares to prior learning based methods that also provide approximate solutions to \eq{3} to our GSBM. Meanwhile, Fig.~\ref{fig:match} summarizes different classes of matching algorithms.

\def\OO{{\textbf{O}~$>$~\textbf{F}}}
\def\FF{{\textbf{F}~$>$~\textbf{O}}}

\begin{table}[H]
    \vskip -0.05in
   \caption{Our \textbf{GSBM} features better feasibility, requires only samples from $\mu,\nu$, has local convergence analysis, and exhibits much better scalability.
   }
   \label{tb:1}
   \centering
   \vskip -0.07in
   \setlength{\tabcolsep}{4.5pt}
   \begin{tabular}{lcccc}
     \toprule
    &
     \specialcell{{\textbf{F}easibility} \eq{2} } &
     \specialcell{Requirement from \\ distributions $\mu,\nu$} &
     \specialcell{Convergence \\ analysis} &
     \specialcell{Dimension $d$} \\
     \midrule
    NLSB {\scriptsize\citep{koshizuka2023neural}}  &
    \xmark (relaxed) & samples \& densities & \cmark (local)  & 5 \\[2pt]
    DeepGSB {\scriptsize\citep{liu2022deep}} &
    $\approx$ \textbf{F} only in limit & samples \& densities & \xmark  & 1000 \\[2pt]
     \midrule
     {\textbf{GSBM} (this work)}  & $\approx$ \textbf{F} & only samples & \cmark (local) & $\gtrsim$ 12K \\
     \bottomrule
   \end{tabular}
   \vskip -0.05in
\end{table}

\begin{figure}[H]
    \vspace*{-5pt}
    \centering
    \includegraphics[width=0.6\textwidth, trim={0 0 0 250px}, clip]{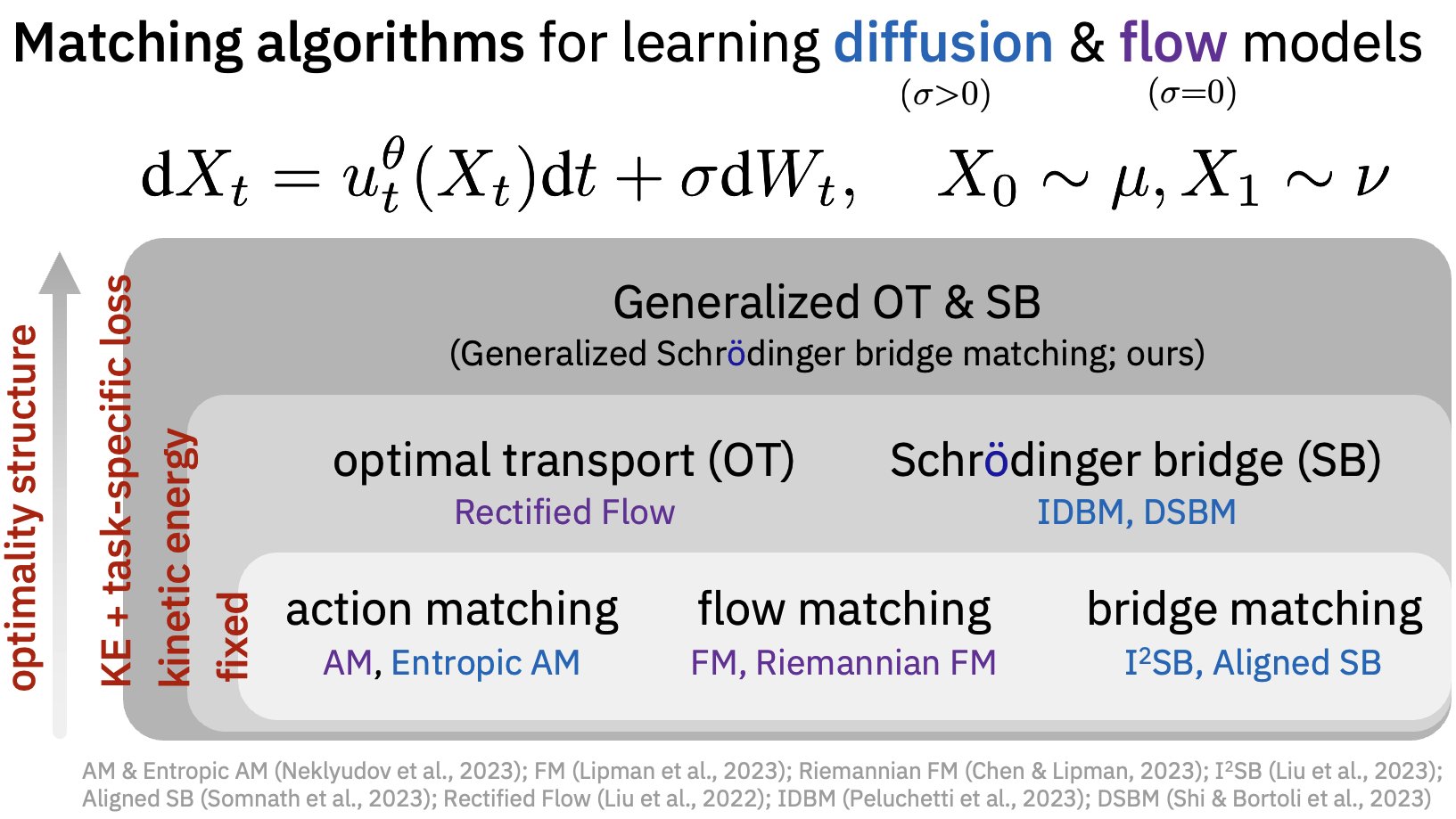}
    \vskip -0.05in
    \caption{
        Summary of different matching algorithms. Notice that OT and SB are w.r.t. $\ell_2$ costs.
    }
    \label{fig:match}
\end{figure}

\newpage

\section{Proofs} \label{app:proof}

\propone*
\begin{proof}
    By keeping $p_t$ fixed in \eq{3}, the value of $\E_{p_t}[V_t(X_t)]$ no longer relies on $\ut$, rendering \eq{3} the same optimization problem as in the implicit matching; see \eq{4} and Sec.~\ref{sec:2}. As the implicit matching \eq{4} admits a unique minimizer $\nabla s_t^\star$, we conclude the proof.
\end{proof}

\proptwo*
\begin{proof}
Let us recall the GSB problem \eq{3} in the form of FPE constraint \eq{2}:
\begin{subequations}
    \label{eq:gsbm-fpe}
    \begin{align}
        &\min_{u_t} \int_0^1\E_{p_t}\br{\frac{1}{2}\norm{u_t(X_t)}^2 + V_t(X_t)}\dt \label{eq:gsbm} \\
    \text{s.t. }
    \fracpartial{}{t}p_t(X_t) = &- \nabla \cdot (u_t(X_t)~p_t(X_t)) + \frac{\sigma^2}{2} \Delta p_t(X_t),
    \quad p_0 = \mu, p_1 = \nu.
    \label{eq:fpe}
    \end{align}
\end{subequations}
Under mild regularity assumptions \citep{anderson1982reverse,yong1999stochastic}
such that Leibniz rule and Fubini's Theorem apply,
we can separate $p_{0,1}$---as it is assumed to be fixed---out of the marginal $p_t$.
Specifically, the objective value \eq{gsbm} can be decomposed into
\begin{align*}
    \int_0^1 \E_{p_t}\br{\frac{1}{2}\norm{u_t}^2 + V_t}\dt
        &= \int_0^1 \E_{p_{0,1}}\br{\E_{p_{t|0,1}}  \br{\frac{1}{2}\norm{u_t}^2 + V_t}} \dt, && \text{\color{gray} by law of total expectation}\\
        &= \E_{p_{0,1}} \br{ \int_0^1 \E_{p_{t|0,1}}\pr{\frac{1}{2}\norm{u_t}^2 + V_t}\dt },   && \text{\color{gray} by Fubini's Theorem}
\end{align*}
which recovers \eq{6a}. Similarly, each term in the FPE \eq{fpe} can be decomposed into
\begin{align*}
    \fracpartial{}{t} p_t(X_t)
        &= \fracpartial{}{t} \int p_t(X_t|x_0, x_1) p_{0,1}(x_0,x_1) \rd x_{0,1}
        = \E_{p_{0,1}} \br{\fracpartial{}{t} p_{t|0,1}}, \\
    \nabla \cdot \pr{u_t(X_t)~p_t(X_t)}
        &= \nabla \cdot \pr{u_t(X_t)\int p_t(X_t|x_0, x_1) p_{0,1}(x_0,x_1) \rd x_{0,1}} \\
        &= \nabla \cdot \pr{\int u_t(X_t) p_t(X_t|x_0, x_1) p_{0,1}(x_0,x_1) \rd x_{0,1}} \\
        &= \E_{p_{0,1}} \br{\nabla \cdot \pr{ u_t~p_{t|0,1}}}, \\
    \Delta p_t(X_t)
        &= \nabla \cdot \pr{\nabla p_t(X_t)} \\
        &= \nabla \cdot \pr{\nabla \int p_t(X_t|x_0, x_1) p_{0,1}(x_0,x_1) \rd x_{0,1}} \\
        &= \E_{p_{0,1}} \br{\nabla \cdot \nabla p_{t|0,1}}  \\
        &= \E_{p_{0,1}} \br{\Delta p_{t|0,1}},
\end{align*}
and, finally, $p_0(x_0) = \E_{p_{0,1}}[\delta_{x_0}(x)]$ and $p_1(x_1) = \E_{p_{0,1}}[\delta_{x_1}(x)]$.
Collecting all related terms yields:
\begin{align*}
    \fracpartial{}{t} p_{t|0,1} = - \nabla \cdot \pr{ u_t~p_{t|0,1}} + \Delta p_{t|0,1}, \quad p_0 = \delta_{x_0}, p_1 = \delta_{x_1},
\end{align*}
which is equivalent to the conditional SDE in \eq{6b}.
Note that we denote $u_t$ as $u_t(X_t|x_0,x_1)$ to emphasize the fact that when $p_{t|0,1}(X_t|x_0,x_1)$ is factorized out of $p_t(X_t)$, the GSB problem \eq{3} can be factorized into a mixture of SOC problems, each with an end-point constraint $(x_0,x_1)$.
\end{proof}

\begin{remark}[PDE interpretation of Proposition~\ref{prop:2} and \eq{6}]
    \normalfont
    The optimal control to \eq{6} is given by
    $u^\star_t(X_t|x_0,x_1) = \sigma^2 \nabla \log \Psi_t(X_t|x_1)$,
    where the time-varying potential $\Psi_t(X_t|x_1)$ solves a partial differential equation (PDE)
    known as the Hamilton-Jacobi-Bellman (HJB) PDE:
    \begin{align}\label{eq:hjb}
        \fracpartial{}{t}\Psi_t(x|x_1) = -\frac{1}{2}\sigma^2 \Delta \Psi_t(x|x_1) + V_t(x) \Psi_t(x|x_1),
        \quad
        \Psi_1(x|x_1) = \delta_{x_1}(x).
    \end{align}
    In general, \eq{hjb} lacks closed-form solutions, except in specific instances.
    For example, when $V$ is quadratic, the solution is provided in \eq{u-Vquad}.
    Additionally, when $V$ is degenerate, \eq{hjb} simplifies to a heat kernel,
    and its solution corresponds to the drift of the Brownian bridge utilized in DSBM \citep{shi2023diffusion}.
    Otherwise, one can approximate its solution with the aid of path-integral theory, as shown in Prop.~\ref{prop:4}).
\end{remark}

\newcommand{\etaa}{\eta}
\newcommand{\etaat}{(\eta(1{-}t))}
\newcommand{\etaatau}{(\eta(1{-}\tau))}

\lmthree*
\begin{proof}
    This is a direct consequence of conditioned diffusion processes with quadratic killing rates
    \citep{mazzolo2022conditioning}.
    Specifically, \citet[Eq.~(84)]{mazzolo2022conditioning} give the analytic expression
    to the optimal control of \eq{6}, when $V_t := \alpha\norm{\sigma x}^2$:
    \begin{align}\label{eq:u-Vquad}
        u_t(X_t|x_0,x_1) =
        \frac{\etaa}{\sinh\etaat} x_1 - \frac{\etaa}{\tanh\etaat} X_t,\quad
        \etaa := \sigma \sqrt{2\alpha}.
    \end{align}
    As \eq{u-Vquad} suggests a linear SDE, its mean $\mu_t \in \R^d$ solves an ODE \citep{sarkka2019applied}:
    \begin{align*} %
        \fracdiff{\mu_t}{t} =
        \frac{\etaa}{\sinh\etaat} x_1 - \frac{\etaa}{\tanh\etaat} \mu_t
    \end{align*}
    whose analytic solution is given by
    \begin{align}\label{eq:mut}
        \mu_t &= \frac{ x_1 \int_0^t \pr{ \frac{\etaa g_\tau}{\sinh\etaatau} } \rd \tau + K }{g_t},
    \end{align}
    where $K$ depends on the initial condition and
    \begin{align*}
        g_t :=& \exp\pr{\int_0^t \frac{\etaa}{\tanh\etaatau} \rd \tau}
            \overset{(*)}{=} \exp\pr{ \br{\ln v_\tau}_{\sinh\etaat}^{\sinh\etaa} }
            = \frac{\sinh\etaa}{\sinh\etaat}.
    \end{align*}
    Note that $(*)$ is due to change of variable $v_\tau:= \sinh\etaatau$.
    Substituting $g_t$ back to \eq{mut}, and noticing that $x_0 = \mu_0 = \frac{x_1 \cdot 0 + K}{1} =  K$, yields the desired coefficients:
    \begin{align}\label{eq:ce}
        \mu_t = c_t x_0 + e_t x_1,\quad
        c_t = \frac{\sinh\etaat}{\sinh\etaa},\quad
        e_t
            = \cosh\etaat - c_t \cosh\etaa.
    \end{align}
    Similarly, the variance $\Sigma_t\in \R^{d\times d}$ of the SDE in \eq{u-Vquad} solves an ODE
    \begin{align*} %
        \fracdiff{\Sigma_t}{t} =
        -\frac{2\etaa}{\tanh\etaat} \Sigma_t + \sigma^2\mI_d.
    \end{align*}
    Repeating similar derivation, as in solving $\mu_t$, leads to
    \begin{align}\label{eq:gamma}
        \Sigma_t
        = \frac{\sigma^2}{\etaa} \sinh^2\etaat\pr{\coth\etaat - \coth\etaa}\mI_d
        = \frac{\sigma^2}{\etaa} \sinh\etaat e_t \mI_d.
    \end{align}
    which gives the desired $\gamma_t$ since $\Sigma_t = \gamma_t^2\mI_d$.

    We now show how these coefficients \eq{ce} and \eq{gamma} recover Brownian bridge as $\alpha$ or, equivalently, $\etaa := \sigma \sqrt{2\alpha}$ approaches the zero limit.
    \begin{align*}
        &\lim_{\etaa\rightarrow 0} c_t
        = \lim_{\etaa\rightarrow 0} \frac{e^{\eta(1-t)} - e^{-\eta(1-t)}}{e^{\eta} - e^{-\eta}}
        \overset{(*)}{=} \frac{1 + \eta(1-t)  - (1 - \eta(1-t))}{1 + \eta  - (1 - \eta)}
        = 1-t, \\
        &\lim_{\etaa\rightarrow 0} e_t
        = \lim_{\etaa\rightarrow 0} \pr{ \cosh\etaat - c_t \cosh\eta}
        = 1 - (1-t) \cdot 1 = t, \\
        &\lim_{\etaa\rightarrow 0} \gamma_t
        = \lim_{\etaa\rightarrow 0} \sigma \sqrt{\frac{\sinh\etaat}{\eta}e_t}
        \overset{(**)}{=} \sigma \sqrt{(1-t)t},
    \end{align*}
    where $(*)$ and $(**)$ are respectively due to $\exp(x) \approx 1 + x$ and
    \begin{align*}
        \lim_{\etaa\rightarrow 0} \frac{\sinh\etaat}{\etaa}
        = \lim_{\etaa\rightarrow 0} \frac{e^{\eta(1-t)} - e^{-\eta(1-t)}}{2\etaa}
        \overset{(*)}{=} \frac{1 + \eta(1-t)  - (1 - \eta(1-t))}{2\etaa}
        = 1-t.
    \end{align*}
    Hence, we recover the analytic solution to Brownian bridge.
    It can be readily seen that, if further $\sigma \rightarrow 0$,
    the solution collapses to linear interpolation between $x_0$ and $x_1$.
\end{proof}

\begin{remark}[Lemma~\ref{lm:3} satisfies the boundary conditions in \eq{6b}]
    \normalfont
    One can verify that $c_0 = 1$, $e_0 = 0$, and $\gamma_0 = 0$. Furthermore,
    we have $c_1 = 0$, $e_1 = 1$, and $\gamma_1 = 0$.
    Hence, as expected, Lemma~\ref{lm:3} satisfies the boundary condition in \eq{6b}.
\end{remark}

\propfour*
\begin{proof}
As the terminal boundary condition of \eq{6b} is pinned at $x_1$,
we can transform \eq{6} into:
\begin{subequations}\label{eq:soc}
    \begin{align}
    \min_{u_{t|0,1}} \int_0^1\E_{p_t(X_t | x_0, x_1)}&
        \br{\frac{1}{2\sigma^2}\norm{u_t(X_t|x_0, x_1)}^2 + \frac{1}{\sigma^2}V_t(X_t)}\dt
        - \log \mathbbm{1}_{x_1}(X_1),
        \\
    \text{s.t. } &\rd X_t = u_t(X_t|x_0, x_1)\dt + \sigma \dWt, \quad {X_0 = x_0}
    \end{align}
\end{subequations}
where $\mathbbm{1}_{\calA}(\cdot)$ is the indicator function of the set $\calA$. Hence, the terminal cost ``$-\log \mathbbm{1}_{x_1}(x)$'' vanishes at $x = x_1$ but otherwise explodes. \Cref{eq:soc} is a valid stochastic optimal control (SOC) problem, and
its optimal solution can be obtained as the form of path integral \citep{kappen2005path}:
\begin{align}\label{eq:pi_bb}
    p^\star(\bar X|x_0, x_1) =
    \frac{1}{Z^\prime} \exp\pr{-\int_0^1 \frac{1}{\sigma^2} V_t(X_t)~\dt} \mathbbm{1}_{x_1}(X_1) q(\bar X|x_0),
\end{align}
where
$q(\cdot|x_0)$ is the density of the Brownian motion conditioned on $X_0=x_0$,
the normalization constant $Z^\prime$ is such that \eq{pi_bb} remains as a proper distribution,
and we shorthand $\bar X \equiv X_{t\in[0,1]}$.
We highlight that \Cref{eq:soc,eq:pi_bb} %
are {essential transformation} that allow us to recover
the conditional distribution used in prior works \citep{shi2023diffusion} when $V_t:=0$ (see the remark below)

Directly re-weighting samples from $q$ using \eq{pi_bb} may have poor complexity
as most samples are assigned with zero weights due to $\mathbbm{1}_{x_1}(\cdot)$.
A more efficient alternative is to rebase the sampling distribution in \eq{pi_bb} from $q$ to
an importance sampling $r(\bar X|x_0, x_1)$ such that $X_0 = x_0$, $X_1 = x_1$,
and is absolutely continuous with respect to $q$.
Then, the remarkable results from information-theoretic SOC
\citep{theodorou2010generalized,theodorou2015nonlinear} suggest
\begin{align}\label{eq:pi_gsbm}
    p^\star(\bar X|x_0, x_1) =
    \frac{1}{Z} \exp\pr{-\int_0^1 \frac{1}{\sigma^2} V_t(X_t)~\dt} \frac{q(\bar X|x_0)}{r(\bar X|x_0, x_1)} r(\bar X|x_0, x_1),
\end{align}
where the Radon-Nikodym derivative $\frac{q}{r}$ can be computed via Girsanov's theorem \citep{sarkka2019applied}:
   \begin{align}\label{eq:gir}
      \frac{q(\bar X|x_0)}{r(\bar X|x_0, x_1)} =
         \exp\pr{-\int_0^1 \frac{1}{2\sigma^2}{\|v_t(X_t)\|^2} \dt - \int_0^1 \frac{1}{\sigma} v_t(X_t)^\top\dWt}.
   \end{align}
Substituting \eq{gir} back to \eq{pi_gsbm} concludes the proof.
\end{proof}

\begin{remark}[\Cref{eq:pi_bb} recovers Brownian bridge when $V_t:=0$]
    \normalfont
    One can verify that, when $V_t:=0$, the optimal solution
    $p^\star(\bar X|x_0, x_1) \propto q(\bar X|x_0)\mathbbm{1}_{x_1}(X_1)  = q(\bar X|x_0,x_1)$ is simply
    the Brownian motion \emph{conditioned} on the end point being $x_1$,
    which is precisely the Brownian bridge.
\end{remark}

\thmfive*
\begin{proof}
    Let $p_t^{\theta^n}$ and $u_t^{\theta^n}$ be the marginal distribution and vector field, respectively, after $n$ steps of alternating optimization according to Alg.~\ref{alg:gsbm}, \ie $p_t^{\theta^n}$ and $u_t^{\theta^n}$ satisfy the FPE. Furthermore, let $p_{0,1}^{\theta^n}$ and $p_{t|0,1}^{\theta^n}$ be the coupling and conditional distribution, respectively, defined by $u_t^{\theta^n}$, and let $p_{t|0,1}^\star$ be the solution to CondSOC \eq{6}.
    This implies that the marginal distributions at step $n+1$ are given by
    \begin{equation}\label{eq:marginal_np1}
        p_t^{\theta^{n+1}} := \int p_{0,1}^{\theta^n} p_{t|0,1}^\star \rd x_{0,1}.
    \end{equation}
    Now, under mild assumptions \citep{anderson1982reverse,yong1999stochastic} such that all distributions approach zero at a sufficient speed as $\norm{x} \rightarrow \infty$, and that all integrands are bounded, we have (note that inputs to all functions are dropped for notational simplicity):
    {\allowdisplaybreaks\begin{align*}
        \calL(\theta^n)
        &= \int\int p_t^{\theta^n} \pr{\frac{1}{2}\norm{u_t^{\theta^n}}^2 + V_t} \rd x_t \dt \\
        &= \int p_{0,1}^{\theta^n} {\br{\int\int p^{\theta^n}_{t|0,1} \pr{\frac{1}{2}\norm{u_t^{\theta^n}}^2 + V_t} \rd x_{t|0,1} \dt}} \rd x_{0,1}  && \text{\color{gray} by Fubini's Theorem} \\
        &\ge
            \int p_{0,1}^{\theta^n} {\br{\int\int p_{t|0,1}^\star \pr{\frac{1}{2}\norm{u_t^{\theta^n}}^2 + V_t} \rd x_{t|0,1} \dt}} \rd x_{0,1} &&  \text{\color{gray} by optimizing \eq{6}} \\
        &= \int\int p_t^{\theta^{n+1}} \pr{\frac{1}{2}\norm{u_t^{\theta^n}}^2 + V_t} \rd x_t \dt && \text{\color{gray} by Fubini's Theorem and \eq{marginal_np1}} \\
        &\ge \int\int p_t^{\theta^{n+1}} \pr{\frac{1}{2}\norm{u_t^{\theta^{n+1}}}^2 + V_t} \rd x_t \dt  \numberthis \label{eq:thm6} \\
        &= \calL(\theta^{n+1}),
    \end{align*}}%
    where we factorize the solution of Stage 1 by
    $p^{\theta^n}_t := \int p^{\theta^n}_{0,1} p^{\theta^n}_{t|0,1} \rd x_{0,1}$.
    The inequality in \eq{thm6} follows from the fact that $u_t^{\theta^{n+1}}$,  as the solution to the proceeding Stage~1, finds the unique minimizer that yields the same marginal as $p_t^{\theta^{n+1}}$ while minimizing kinetic energy, \ie
    \begin{align*}
        \int p_t^{\theta^{n+1}}~\norm{u_t^{\theta^{n}}}^2~\rd x_t \ge \int p_t^{\theta^{n+1}}~\norm{u_t^{\theta^{n+1}}}^2~\rd x_t.
    \end{align*}
\end{proof}

\thmsix*
\begin{proof}
    Let $p_t^\star$
    and $u_t^\star$ be the optimal solution to the GSB problem in \eq{3}, and
    suppose $p_t^\star := \int p_{t|0,1}^\star p_{0,1}^\star \rd x_0 \rd x_1$.
    It suffices to show that
    \begin{enumerate}
        \item Given $x_0,x_1 \sim p_{0,1}^\star$, the conditional distribution $p_{t|0,1}^\star$ is the optimal solution to \eq{6}.
        \item Given $p_t^\star$, both matching algorithms (\Algref{iml} and \ref{alg:exp}) return $u_t^\star$.
    \end{enumerate}
    The first statement follows directly from the Forward-Backward SDE (FBSDE)
    representation of \eq{3}, initially derived in \citet[Theorem 2]{liu2022deep}.
    The FBSDE theory suggests that the (conditional) optimal control bridging
    any $x_0,x_1 \sim p_{0,1}^\star$ satisfies a BSDE \citep{pardoux2005backward}
    that is uniquely associated with the HJB PDE of \eq{6}, \ie \eq{hjb}.
    This readily implies that $p_{t|0,1}^\star$ is the solution to \eq{6}.

    Since $u_t^\star$ is known to be a
    gradient field \citep[Eq.~(3)]{liu2022deep}, it must be with the solution returned by the
    implicit matching algorithm (\Algref{iml}), as $\Limp$ returns the unique gradient field that matches $p_t^\star$.
    On the other hand, since the explicit matching loss \eq{5} can be interpreted as a
    Markovian projection \citep{shi2023diffusion}, \ie
    it returns the closest (in the KL sense) Markovian process to
    the reciprocal path measure \citep{leonard2013survey,leonard2014reciprocal}
    defined by the solution to \eq{6}.
    Since the solution to \eq{6}, as proven in the first statement, is simply $p_{t|0,1}^\star$,
    the closest Markovian process is by construction $u_t^\star$.
    Hence, the second statement also holds, and we conclude the proof.
    It is important to note that, while $\Lexp$ and $\Limp$ do not share the same minimizer in general,
        they do at the equilibrium $p_t^\star$.
\end{proof}

\section{Additional derivations and discussions}\label{app:other-derivation}

\subsection{Explicit matching loss}\label{app:bm}

\paragraph*{How \eq{5} preserves the prescribed $p_t$.}

A rigorous derivation of \eq{5} can be found in, \eg \citet[Proposition 2]{shi2023diffusion},
called \emph{Markovian projection}. Here, we provide an alternative derivation
that follows closer to the one from flow matching \citep{lipman2023flow}.
\begin{restatable}{lm}{lmseven}\label{lm:7}
    Let the marginal $p_t$ be constructed from a mixture of conditional probability paths, \ie
    $p_t(x) := \E_{p_{0,1}}[p_t(x|x_0, x_1)]$, where $p_t(x|x_0, x_1) \equiv p_{t | 0, 1}$
    is the time marginal of the SDE, $\rd X_t = u_t(X_t|x_0,x_1) \dt + \sigma \dWt$, $X_0=x_0$, $X_1=x_1$, then
    the SDE drift that satisfies the FPE prescribed by $p_t$ is given by
    \begin{align}\label{eq:bm-u}
        u^\star_t(x) = \frac{\E_{p_{0,1}}[u_t(x|x_0,x_1)p_{t|0,1}(x|x_0,x_1)]}{p_t(x)}.
    \end{align}
\end{restatable}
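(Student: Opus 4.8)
The plan is to verify directly that the claimed drift $u^\star_t$ reproduces the marginal $p_t$ through the Fokker--Planck equation, by averaging the Fokker--Planck equations satisfied by the individual conditional paths against the fixed coupling $p_{0,1}$.

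First I would record that each conditional path is, by hypothesis, the law of the SDE $\rd X_t = u_t(X_t|x_0,x_1)\dt + \sigma\dWt$, so on the open interval $t\in(0,1)$ it obeys its own FPE,
\[
\partial_t p_{t|0,1} = -\nabla\cdot\bigl(u_t(\cdot|x_0,x_1)\,p_{t|0,1}\bigr) + \tfrac{\sigma^2}{2}\Delta p_{t|0,1}.
\]
Second, I would integrate this identity against $p_{0,1}(x_0,x_1)$, which is time-independent. Under the same mild regularity assumptions used elsewhere in \Cref{app:proof} (so that Leibniz's rule and Fubini's theorem let us move $\partial_t$, $\nabla$, and $\Delta$ through $\int\cdot\,p_{0,1}\,\rd x_0\rd x_1$), the left side becomes $\partial_t p_t$, the diffusion term becomes $\tfrac{\sigma^2}{2}\Delta p_t$, and the drift term becomes $-\nabla\cdot\bigl(\E_{p_{0,1}}[u_t(\cdot|x_0,x_1)\,p_{t|0,1}]\bigr)$.

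Third, I would read off the drift: interpreting the numerator of \eq{bm-u} as $\E_{p_{0,1}}[u_t(x|x_0,x_1)]\equiv\int u_t(x|x_0,x_1)\,p_t(x|x_0,x_1)\,p_{0,1}(x_0,x_1)\,\rd x_0\rd x_1$ (so that it is consistent with the $p_t(x)$ in the denominator), we have the pointwise identity $\E_{p_{0,1}}[u_t(x|x_0,x_1)\,p_{t|0,1}] = u^\star_t(x)\,p_t(x)$ by construction. Substituting gives
\[
\partial_t p_t = -\nabla\cdot(u^\star_t\,p_t) + \tfrac{\sigma^2}{2}\Delta p_t,
\]
which is exactly the FPE prescribed by $p_t$ with drift $u^\star_t$; the boundary marginals are $p_0 = \int\delta_{x_0}\,p_{0,1}\,\rd x_0\rd x_1$ and $p_1 = \int\delta_{x_1}\,p_{0,1}\,\rd x_0\rd x_1$, i.e.\ the two marginals of $p_{0,1}$, as required. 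Moreover, I would note in closing that $\Lexp(\theta)$ in \eq{5} is, for each $t$, a weighted least-squares regression of $u^\theta_t(X_t)$ onto the random target $u_t(X_t|x_0,x_1)$ under the joint law of $(x_0,x_1,X_t)$; its unconstrained pointwise minimizer is the conditional expectation $\E[u_t(X_t|x_0,x_1)\mid X_t=x]$, which by Bayes' rule equals $u^\star_t(x)$ in \eq{bm-u} --- hence the minimizer of $\Lexp$ indeed preserves $p_t$.

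The main obstacle is justifying the interchange of differentiation and integration --- passing $\partial_t$, $\nabla$, and $\Delta$ through $\int\cdot\,p_{0,1}\,\rd x_0\rd x_1$ --- which is precisely where the regularity/decay hypotheses (dominated convergence, sufficient decay at infinity) do the work; everything else is bookkeeping. A secondary subtlety worth flagging is that for a doubly-pinned bridge the conditional drift $u_t(\cdot|x_0,x_1)$ is singular as $t\to 1$, so the averaged FPE should be stated on $(0,1)$ and the endpoint marginals handled separately as distributional limits.
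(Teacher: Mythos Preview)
Your proposal is correct and follows essentially the same approach as the paper: verify directly that $u^\star_t$ satisfies the FPE for $p_t$ by writing the conditional FPE for each $p_{t|0,1}$, integrating against $p_{0,1}$, and passing $\partial_t$, $\nabla$, $\Delta$ through the integral. Your write-up is in fact more careful than the paper's---you correctly clarify that the numerator of \eq{bm-u} must be read as $\int u_t(x|x_0,x_1)\,p_{t|0,1}\,p_{0,1}\,\rd x_{0,1}$ (the paper's notation is a bit loose here), and your remarks on the regularity needed for the interchange and the endpoint singularity at $t\to 1$ are apt additions the paper omits.
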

\begin{proof}
    It suffices to check that $u^\star_t(x)$ satisfies the FPE prescribed by $p_t$:
    \begin{align*}
        \fracpartial{}{t}p_t(x)
        &= \int \pr{\fracpartial{}{t} p_{t | 0, 1}} p_{0,1} \rd x_{0,1} \\
        &=
        \int \pr{ - \nabla \cdot \pr{ u_{t | 0, 1}~p_{t | 0, 1} } } p_{0,1} \rd x_{0,1}
        + \int \pr{ \frac{1}{2}\sigma^2 \Delta p_{t | 0, 1} } p_{0,1} \rd x_{0,1}
        \\
        \overset{\text{\eq{bm-u}}}&{=} - \nabla \cdot \pr { u^\star_t~p_t } + \frac{1}{2}\sigma^2 \Delta p_t.
    \end{align*}
\end{proof}
An immediate consequence of Lemma~\ref{lm:7} is that
\begin{align}\label{eq:bm-loss-v2}
    \Lexp(\theta)
    = \E_{p_t} \br{\frac{1}{2}\norm{u_t^\theta(X_t) - u^\star_t(X_t)}^2} + \calO(1),
\end{align}
where $\calO(1)$ is independent of $\theta$.
Hence, the $u_t^{\theta^\star}$ preserves the prescribed $p_t$.

\paragraph*{Relation to implicit matching \eq{4}.}
Both explicit and implicit matching losses are associated to some regression objectives, except w.r.t. different targets, \ie $u^\star_t$ in \eq{bm-loss-v2} \textit{vs.} $\nabla s^\star_t$ in Sec.~\ref{sec:2}.
As pointed out in \citet{neklyudov2023action},
the two targets relate to each other via the Helmholtz decomposition \citep[Lemma 8.4.2]{ambrosio2008gradient},
which suggests that $u^\star_t = \nabla s^\star_t + w_t$ where $w_t$ is the divergence-free vector field, \ie $\nabla\cdot(w_t p_t) = 0$.
Though this implies that $\Lexp$ only upper-bounds the kinetic energy,
its solution sequential, by alternating between solving \eq{6} in Stage 2,
remains well-defined from the measure perspective \citep{peluchetti2022nondenoising,peluchetti2023diffusion,shi2023diffusion}.
Specifically, the sequential performs alternate projection between
reciprocal path measure defined by the solution to \eq{6}, in the form $p_t := \E_{p_{0,1}}[p_{t|0,1}]$,
and the Markovian path measure, and admits convergence to the optimal solution for standard SB problems.

\subsection{Gaussian probability path}\label{app:gpath}

\paragraph*{Derivation of analytic conditional drift in \eq{8}.}

Recall the Gaussian path approximation in \eq{7}:
\begin{align*}
    X_t = \mu_t + \gamma_t Z, \quad Z \sim \calN(0,\mI_d),
\end{align*}
which immediately implies the velocity vector field
and the score function \citep{sarkka2019applied,albergo2023stochastic}:
\begin{align*}
    \partial_t X_t
    = \partial_t \mu_t + \frac{\partial_t \gamma_t}{\gamma_t} \pr{X_t - \mu_t}, \quad
    \nabla \log p_t(X_t)
    = -\frac{1}{\gamma_t^2} \pr{X_t - \mu_t}.
\end{align*}
We can then construct the conditional drift:
\begin{align}\label{eq:8v2}
    u_t(X_t|x_0, x_1)
    = \partial_t X_t + \frac{\sigma^2}{2} \nabla \log p_t(X_t)
    = \partial_t \mu_t + \frac{1}{\gamma_t}\pr{\partial_t \gamma_t - \frac{\sigma^2}{2\gamma_t}} (X_t - \mu_t).
\end{align}
Notice that \eq{8v2} is of the form of a gradient field due to the linearity in $X_t$.
One can verify that substituting the Brownian bridge,
$\mu_t := (1-t)x_0 + t x_1$ and $\gamma_t := \sigma \sqrt{t(1-t)}$,
to
\eq{8v2} indeed yields the desired drift $\frac{x_1 - X_t}{1-t}$.

\paragraph*{Efficient simulation with analytic covariance function (\Cref{ft:4}).} %

Proposition~\ref{prop:4} requires samples from the ``\emph{joint}'' distribution in path space. This requires sequential simulation, which can scale poorly to higher-dimensional applications.
Fortunately, there exists efficient computation to \eq{6b} with the (linear) conditional drift $u_{t|0,1}$ given by \eq{8}, as the analytic solution to \eq{6b} reads
\begin{align}\label{eq:gt}
    X_t = e^{g_t} \pr{x_0 + \int_0^t e^{-g_\tau} b_\tau \rd \tau
                            + \int_0^t e^{-g_\tau} \sigma \rd W_s
                        },\quad
    g_t := \int_0^t a_\tau \rd \tau,
\end{align}
where $a_t := \frac{1}{\gamma_t}\pr{\partial_t \gamma_t - \frac{\sigma^2}{2\gamma_t}} \in \R$ is defined as in \eq{8},
$b_t := \partial_t \mu_t - a_t \mu_t$, and
$\int\rd W_s$ is the Ito stochastic integral \citep{ito1951stochastic}.
The covariance function between two time steps $s,t$, such that $0\le s < t \le 1$, can then be computed by
\begin{align*}
    \Cov(s, t) &= e^{g_s + g_t}
    \langle \int_0^s e^{-g_\tau} \sigma \rd W_\tau, \int_0^t e^{-g_\tau} \sigma \rd W_\tau \rangle \\
    \overset{(i)}&{=} e^{g_s + g_t} \langle \int_0^s e^{-g_\tau} \sigma \rd W_\tau, \int_0^s e^{-g_\tau} \sigma \rd W_\tau \rangle \\
    \overset{(ii)}&{=} e^{g_s + g_t} \int_0^s e^{-2g_\tau} \sigma^2 \dt \\
    \overset{(iii)}&{=} e^{g_t - g_s}~\gamma_s^2,
\end{align*}
where $(i)$ is due to the independence of the Ito integral between $[0,s]$ and $[s,t]$,
$(ii)$ is due to the Ito isometry, and, finally,
$(iii)$ is due to substituting $\gamma_s^2 = \Var(s) := e^{2g_s} \int_0^s e^{-2g_\tau} \sigma^2 \dt$.

Repeating the same derivation for $t,s$ such that $0\le t < s \le 1$,
the covariance function of \eq{6b} between any given two timesteps $s,t \in [0,1]$
can be written cleanly as
\begin{align}\label{eq:cov}
    \Cov(s, t) = \gamma_{\min(s,t)}^2 e^{g_{\max(s,t)} - g_{\min(s,t)}},
\end{align}
which, crucially, requires only solving an ``1D'' ODE, $\rd g_t = a_t \dt, g_0 = 0$.
We summarize the algorithm in Alg.~\ref{alg:covsample} with an example of Brownian bridge in Fig.~\ref{fig:cov}.
We note again that all computation is parallelizable over the batches.

\begin{figure}[t]
    \centering
    \begin{minipage}{0.38\textwidth}
        \centering
        \includegraphics[width=0.8\linewidth]{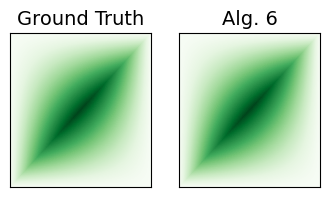}
        \vskip -0.1in
        \caption{
            How Alg.~\ref{alg:covsample} faithfully recovers the covariance matrix of, in this case, Brownian bridge.
        }
        \label{fig:cov}
    \end{minipage}
    \hfill
    \begin{minipage}{0.6\textwidth}
       \begin{algorithm}[H]
          \caption{\texttt{CovSample} (line 1 in Alg.~\ref{alg:pi})}
          \label{alg:covsample}
          \begin{algorithmic}
          \Require SDE \eq{6b} with $\mu_t, \gamma_t$
          \State Discretize $\bar t := [t_1, \cdots, t_K]$,~~$0 < t_1 < \cdots < t_K < 1$
          \State Solve $g_{\bar t}$ according the 1D ODE in \eq{gt}
          \State Compute covariance matrix $C_{\bar t} \in \R^{K\times K}$ by \eq{cov}
          \State Perform Cholesky decomposition $ C_{\bar t} = L_{\bar t} L_{\bar t}^T$
          \State Sample
            $X_{\bar t} = \mu_{\bar t} + L_{\bar t} Z_{\bar t} $,
            $Z_{\bar t} \in \R^{K\times d}$,
            $[Z_{\bar t}]_{ij} \sim \calN(0,1)$
          \State \Return $X_{\bar t}$
          \end{algorithmic}
       \end{algorithm}
    \end{minipage}
\end{figure}

\subsection{Analytic solution to \eq{6} for quadratic $V$ and $\sigma=0$}

Recall the finite-horizon and continuous-time linear quadratic regulator, defined as:
\begin{align}
    \min_{u_t} x_1^\top F x_1 + \int_0^1\br{ x_t^\top Q x_t + u_t^\top R u_t} \dt
    \quad \text{ s.t. } \dot x_t = A x_t + B u_t,
\end{align}
whose optimal control $u_t^\star = - R^{-1}B P_t x_t$ can be obtained by solving a Riccati differential equation:
\begin{align*}
    -\dot P_t = A^\top P_t + P_t A - P_t B R^{-1} B^\top P_t + Q, \quad P_1 = F,
\end{align*}
or, equivalently, a Lyapunov differential equation (one can verify that $H_t = P_t^{-1}$ for all $t$):
\begin{align}
    \dot H_t = A H_t + H_t A^\top - B R^{-1} B^\top + H_t Q H_t, \quad H_1 = F^{-1}.
    \label{eq:lde}
\end{align}
The end-point constraint can be accounted by setting $F \rightarrow \infty$, as suggested in \citet{chen2015stochastic}, yielding $H_1 = 0$. With that, the solution to \eq{6} for quadratic $V(x) := \alpha \|x\|^2$ and $\sigma:=0$ can be obtained by solving the matrix ODE in \eq{lde} with $A=0$, $B=\mI_d$, $R=\frac{1}{2}\mI_d$, and $Q=\alpha \mI_d$, which admits an analytic solution in the form of hyperbolic tangent functions, similar to \eq{u-Vquad}.

\section{Experiment details} \label{app:exp}

\subsection{Experiment setup}\label{app:setup}

\begin{table}[H]
    \vskip -0.05in
    \caption{
        Hyperparameters of the \texttt{SplineOpt} (Alg.~\ref{alg:spline}) for each task.
    }
    \label{tb:hyperparam}
    \centering
    \vskip -0.07in
    \begin{tabular}{lcccccc}
        \toprule
        & Stunnel & Vneck   & GMM     & Lidar   & AFHQ    & Opinion \\
        \midrule
        Number of control pts. $K$       & 30 & 30 & 30 & 30  & 8 & 30 \\[2pt]
        Number of gradient steps $M$       & 1000 & 3000 & 2000 & 200  & 100 & 700 \\[2pt]
        Number of samples $|i|$     & 4 & 4 & 4 & 4  & 4 & 4 \\[2pt]
        Optimizer & SGD &  SGD & SGD & mSGD & Adam & SGD    \\[2pt]
        \bottomrule
    \end{tabular}
    \vskip -0.05in
\end{table}

\paragraph{Baselines.}
All experiments on DeepGSB are run with their official implementation\footnote{
    \url{https://github.com/ghliu/DeepGSB}, under Apache License.
} and default hyperparameters.
We adopt the ``actor-critic'' parameterization as it generally yields better performance,
despite requiring additional value networks.
On the other hand, we implement DSBM by ourselves, as our GSBM can be made equivalent to DSBM by disabling the optimization of the CondSOC problem in \eq{6} and returning the analytic solution of Brownian bridges instead. This allows us to more effectively ablate the algorithmic differences, ensuring that any performance gaps are attributed to the presence of $V_t$.
All methods, including our GSBM, are implemented in PyTorch \citep{paszke2019pytorch}.\vspace*{-0.5em}

\paragraph{Network architectures.}
For crowd navigation (Sec.~\ref{sec:crowd-nav}) and opinion depolarization (Sec.~\ref{sec:opinion}), we adopt the same architectures from DeepGSB, which consists of 4 to 5 residual blocks with sinusoidal time embedding.
For the AFHQ task, we consider the U-Net \citep{ronneberger2015u} architecture implemented by
\citet{dhariwal2021diffusion}.\footnote{
    \url{https://github.com/openai/guided-diffusion}, under MIT License.
}
All networks are trained from scratch, without utilizing any pretrained checkpoint, and optimized with AdamW \citep{loshchilov2019decoupled}.\vspace*{-0.5em}

\paragraph{Task-specific noise level ($\sigma$).}
For crowd navigation tasks with mean-field cost, we adopt $\sigma = \{1.0, 2.0\}$,
whereas the opinion depolarization task uses $\sigma=0.5$.
These values are inherited from DeepGSB.
On the other hand, we use $\sigma=1$ and $0.5$ respectively for LiDAR and AFHQ tasks.

\paragraph{GSBM hyperparameters.}
\Cref{tb:hyperparam} summarizes the hyperparameters used in the spline optimization. By default, the generation processes are discretized into 1000 steps, except for the opinion depolarization task, where we follow DeepGSB setup and discretize into 300 steps.\vspace*{-0.5em}

\paragraph{GSBM implementation (forward \& backward scheme).}
In practice, we employ the same ``forward and backward'' scheme proposed in DSBM \citep{shi2023diffusion}, parameterizing \emph{two} drifts, one for the forward SDE and another for the backward.
During odd epochs, we simulate the coupling (line 4 in Alg.~\ref{alg:gsbm}) from the forward drift, solve the corresponding CondSOC problem \eq{6}, then match the resulting $p_t$ with the backward drift.
Conversely, during even epochs, we follow the reverse process, matching the forward drift with the $p_t$ obtained from backward drift.
The forward-backward alternating scheme generally improves the performance, as the forward drift always matches the ground-truth terminal distribution $p_1 = \nu$ (and vise versa for the backward drift).\vspace*{-0.5em}

\newpage

\begin{wrapfigure}[18]{r}{0.35\textwidth}
    \begin{minipage}{\linewidth}
        \centering
        \includegraphics[width=\linewidth]{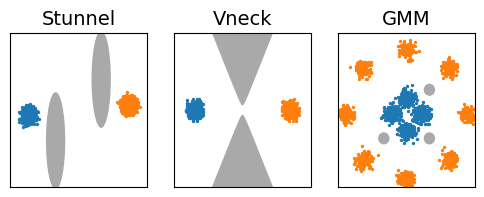}
        \vskip -0.1in
        \caption{
            {\cgreenblue{Initial}} and {\color{orange}terminal} distributions
            for each crowd navigation task with mean-field state cost.
            Obstacles are marked gray.
        }
        \label{fig:crowd_nav_2d_p0_p1}
        \vskip 0.1in
    \end{minipage}
    \begin{minipage}{\linewidth}
      \vspace{8pt}
      \captionsetup{type=table}
      \caption{
          Multiplicative factors of the state cost for each crowd navigation task: $V_t(x) = \lambda_\text{obs} L_\text{obstacle}(x) + \lambda_\text{int} L_\text{interact}(x; p_t)$.
      }
      \centering
      \vskip -0.1in
      \label{tb:mf}
      \setlength{\tabcolsep}{5pt}
      \begin{tabular}{lrrr}
          \toprule
              & Stunnel & Vneck   & GMM \\
          \midrule
            $\lambda_\text{obs}$   & 1500 & 3000 & 1500 \\
            $\lambda_\text{int}$   & 50 & 8  & 5 \\
        \bottomrule
      \end{tabular}
    \end{minipage}
\end{wrapfigure}
\paragraph{Crowd navigation setup.}
All three mean-field tasks---Stunnel, Vneck, and GMM---are adopted from \mbox{DeepGSB},
as shown in Fig.~\ref{fig:crowd_nav_2d_p0_p1}.
We slightly modify the initial distribution of GMM to testify fully multi-model distributions.
As for the mean-field interaction cost \eq{V-mf}, we consider entropy cost for the Vneck task, and congestion cost for the Stunnel and GMM tasks.
We adjust the multiplicative factors between $L_\text{obstacle}$ and $L_\text{interact}$ to ensure that noticeable changes occur when enabling mean-field interaction; see \Cref{tb:mf}.
These factors are much larger than the ones considered in DeepGSB ($\lambda_2 < 3$).
In practice, we soften the obstacle cost for differentiability, similar to prior works \citep{ruthotto2020machine,lin2021alternating}, and approximate $p_t(x) \approx \sum_{(x_0,x_1)} p_t(x|x_0,x_1)$ as the mixture of Gaussians.\vspace*{-0.5em}

\paragraph{AFHQ setup.}

As our state cost $V_t$ is defined via a latent space, we pretrain a $\beta$-VAE \citep{higgins2016beta} with
$\beta=0.1$. On the other hand, the spherical linear interpolation (Slerp; \citet{shoemake1985animating}) refers to constant-speed rotational motion along a great circle arc:
\begin{align}
    I_{\text{Slerp}}(t,z_0,z_1) :=
        \frac{\sin((1-t)~\Omega)}{\sin\Omega} z_0 + \frac{\sin(t~\Omega)}{\sin\Omega} z_1,\qquad
        \Omega = \arccos(\langle z_0,z_1 \rangle).
\end{align}

\subsection{Opinion depolarization}\label{app:opinion}

\begin{figure}[H]
    \vskip -0.1in
    \centering
    \captionsetup{justification=centering}
    \includegraphics[width=0.63\textwidth]{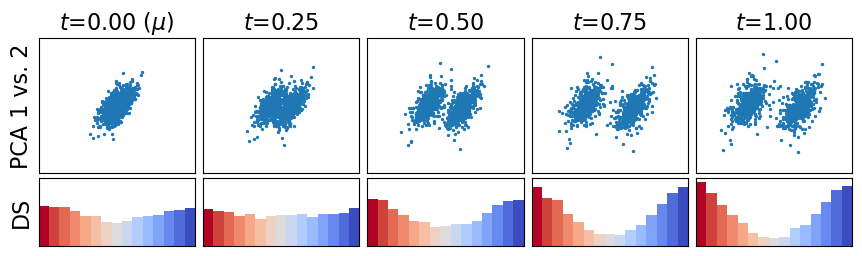}
    \vskip -0.05in
    \caption{Simulation of the polarize drift \eq{f-opinion} in $\R^{1000}$.}
    \label{fig:polarize}
    \vskip -0.1in
\end{figure}

\paragraph{Polarize drift in \eq{polarize_sde}.}
We use the same polarize drift from DeepGSB, based on the party model \citep{gaitonde2021polarization}.
At each time step $t$, all agents receive the same random information $\xi_t\in \R^d$ sampled independently of $p_t$, then react to this information according to
\begin{align}
    f_\text{polarize}(x; p_t, \xi_t) := {\E_{y\sim p_t}\br{a(x,y,\xi_t)\bar{y}}},
    ~~a(x,y,\xi_t) :=
    \begin{cases}
        1  & \text{if } \sign(\langle x,\xi_t \rangle) = \sign(\langle y,\xi_t \rangle) \\
        -1 & \text{otherwise}
    \end{cases},
    \label{eq:f-opinion}
\end{align}
where $\bar y := \nicefrac{y}{\norm{y}^{\frac{1}{2}}}$ and $a(x,y,\xi_t)$---the agreement function---indicates whether the two opinions $x$ and $y$ agree on the information $\xi_t$.
Hence, \eq{f-opinion} suggests that the agents are inclined to be receptive to opinions they agree with while displaying antagonism towards opinions they disagree with.
This is known to yield polarization, as shown in \Cref{fig:polarize}.

\paragraph{Directional similarity in \Cref{fig:opinion,fig:polarize}.}

Directional similarity is a standard visualization for opinion modeling \citep{schweighofer2020agent} that counts the histogram of cosine angle between pairwise opinions.
Hence, flatter directional similarity suggests less polarize opinion distribution.\looseness=-1

\subsection{Approximating geometric manifolds with LiDAR data}\label{app:lidar_details}

Since LiDAR data is a collection of point clouds in $[-5, 5]^3 \subset \R^3$, we use standard methods for treating it more as a Riemannian manifold. For every point in the ambient space, we define the projection operator by first taking a $k$-nearest neighbors and fitting a 2D tangent plane. Let $\mathcal{N}_k(x) = \{x_1,\dots, x_k\}$ be the set of $k$-nearest neighbors for a query point $x \in \R^3$ in the ambient space. We then fit a 2D plane through a moving least-squares approach~\citep{levin1998approximation,wendland2004scattered},
\begin{equation}
    \argmin_{a,b,c} \frac{1}{k} \sum_{i=1}^k w(x, x_i) (a x_i^{(x)} + b x_i^{(y)} + c - x_i^{(z)})^2
\end{equation}
where the superscripts denotes the $x, y, z$ coordinates and we use the weighting $w(x, x_i) = \exp\{ - \norm{x - x_i} / \tau \}$ with $\tau = 0.001$. We solve this through a pseudoinverse and obtain the approximate tangent plane $ax + by + c = z$. When $k \rightarrow \infty$, this tangent plane is smooth; however, we find that using $k=20$ works sufficiently well in our experiments, and our GSBM algorithm is robust to the value of $k$. The projection operator $\pi(x)$ is then defined using the plane,
\begin{equation}
    \pi(x) = x - \left(\frac{x^\top n + c}{\norm{n}^2}\right) n, \quad\quad \text{ where } n =
    \begin{bmatrix}
        a & b & -1
    \end{bmatrix}^\top.
\end{equation}
This projection operator is all we need to treat the LiDAR dataset as a manifold. Differentiating through $\pi$ will automatically project the gradient onto the tangent plane. This will ensure that optimization through the state cost $V_t$ will be appropriated projected onto the tangent plane, allowing us to optimize quantities such as the height of the trajectory over the manifold.

The exact state cost we use includes an additional boundary constraint:
\begin{equation}
    V(x) = \lambda \Big[ \underbrace{ \norm{\pi(x) - x}^2 }_{L_\text{manifold}} + \underbrace{\exp(\pi(x)^{(z)})}_{L_\text{height}} + \underbrace{\sum_{p \in \{x^{(x)}, x^{(y)}\}}\text{sigm}(\nicefrac{p - 5}{0.1}) + (1 - \text{sigm}(\nicefrac{p + 5}{0.1})}_{L_\text{boundary}} \Big],
\end{equation}
where ``sigm'' is the sigmoid function, used for relaxing the boundary constraint. The loss $L_\text{boundary}$ simply ensures we don't leave the area where the LiDAR data exists. We set $\lambda = 5000$. The gradient of $L_\text{manifold}(x)$ moves $x$ in a direction that is orthogonal to the tangent plane while the gradient of $L_\text{height}(x)$ moves $x$ along directions that lie on the tangent plane.
If we only have $L_\text{manifold}$, then CondSOC essentially solves for geodesic paths and we recover an approximation of Riemannian Flow Matching \citep{chen2023riemannian} (parameterized in the ambient space) when $\sigma \rightarrow 0$.
The state cost $L_\text{height}$ depends on $\pi(x)$ as this ensures we travel down the mountain slope when optimizing for height. Other state costs can of course also be considered instead of $L_\text{height}$.

\subsection{Additional experiment results}\label{app:result}

\begin{figure}[H]
    \vskip -0.1in
    \centering
    \includegraphics[width=0.55\textwidth]{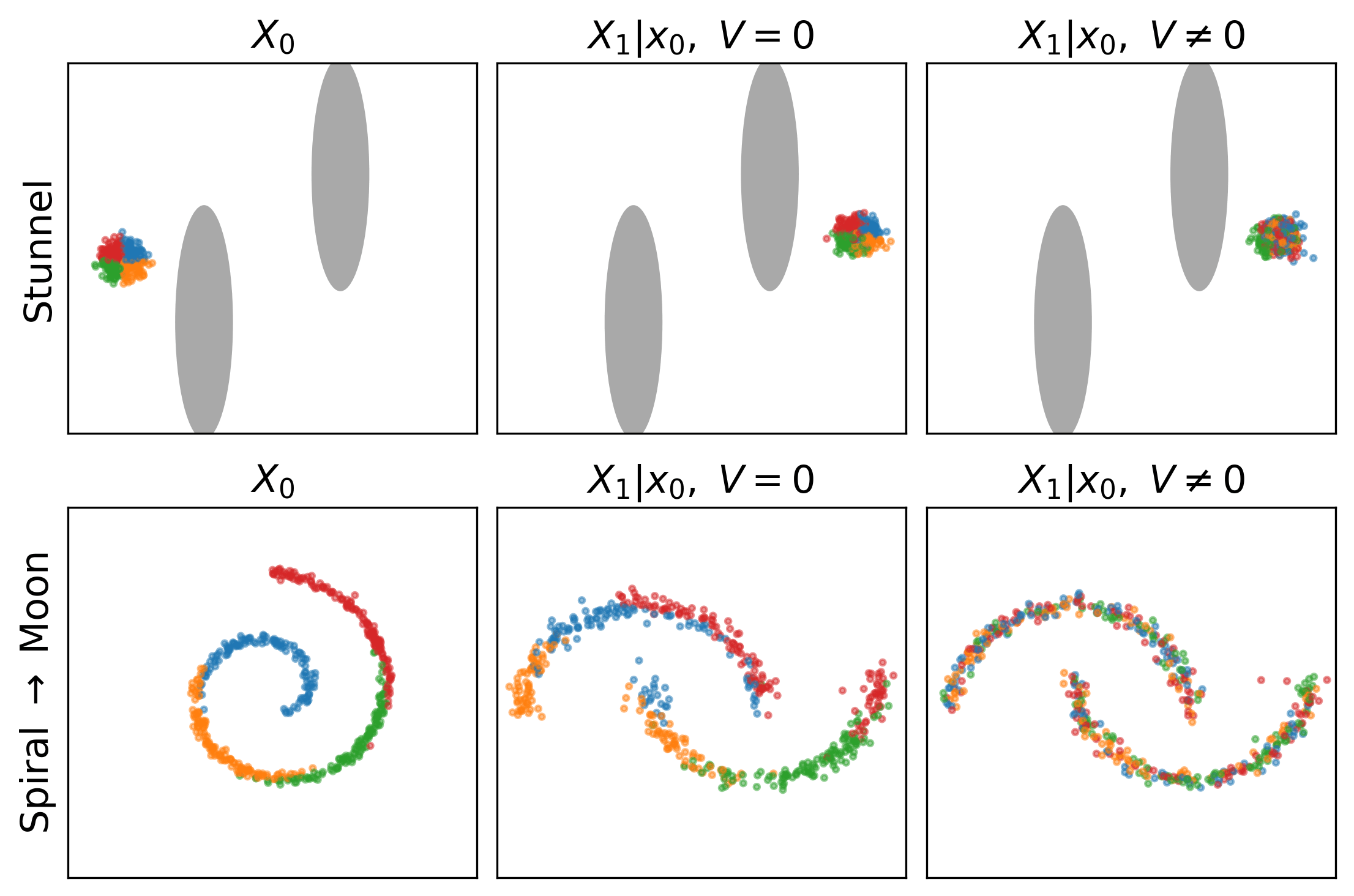}
    \vskip -0.05in
    \caption{
        Illustration of how different couplings, $X_1|x_0$, emerge with nontrivial state costs $V(x)$.
    }
    \label{fig:coupling}
\end{figure}

\textbf{Nontrivial $V(x)$ induces different couplings.}$\quad$
\Cref{fig:coupling} illustrates how different couplings, $X_1|x_0$, emerge with nontrivial state costs $V(x)$.
Specifically, we color different regions of the initial distribution (leftmost column) with different colors and track their pushforward maps.
We consider the same $V(x)$, \ie obstacle and congestion costs, for Stunnel (top row) and adopt quadratic cost for Spiral $\rightarrow$ Moon (bottom row). In these two particular cases, having nontrivial $V(x)$ encourages stronger mixing, thereby yielding a different coupling (rightmost column) compared to the one induced without $V(x)$ (middle column).

\begin{figure}[H]
    \vskip -0.1in
    \centering
    \includegraphics[width=0.9\textwidth]{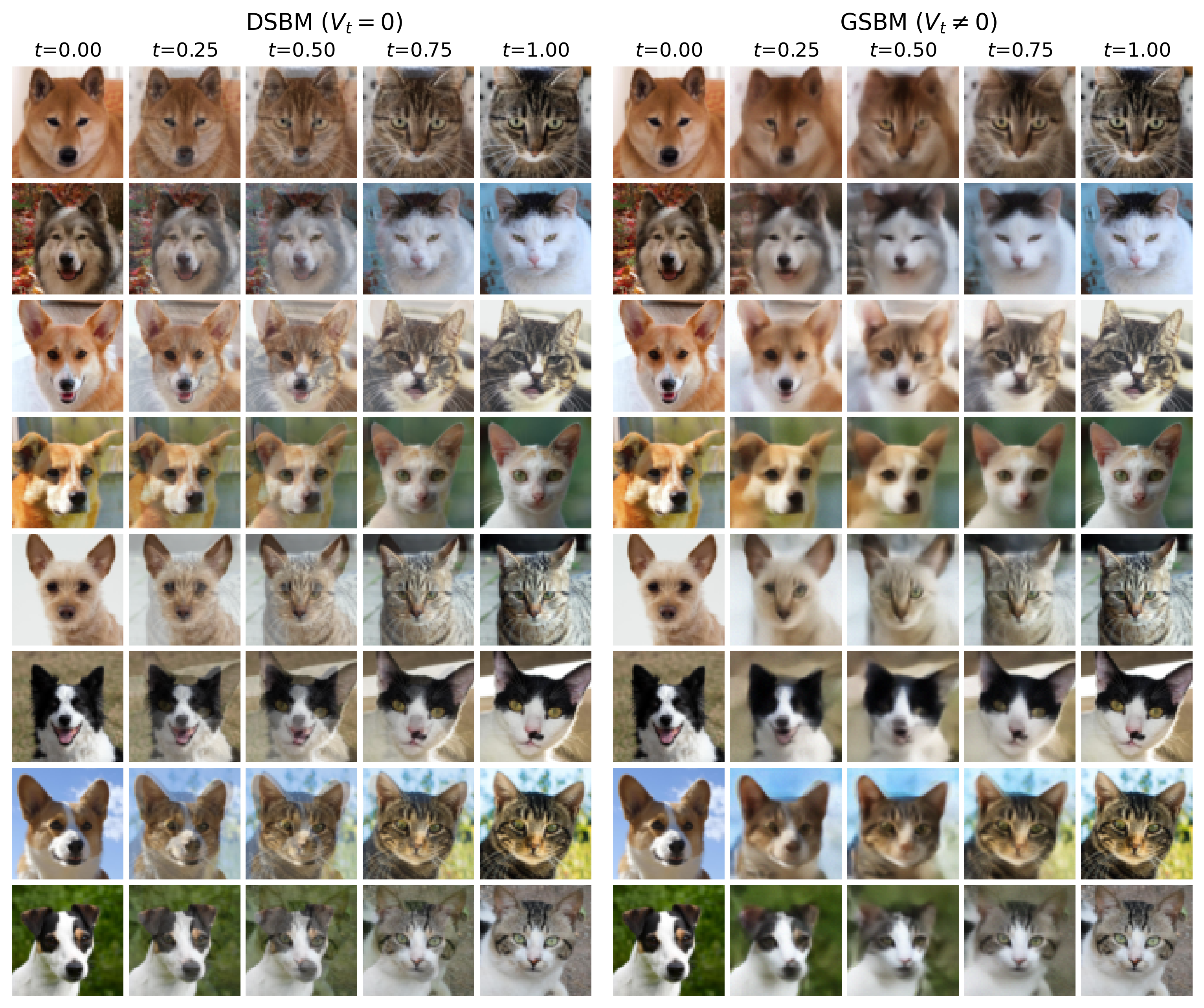}
    \vskip -0.05in
    \caption{
        Additional comparison between DSBM~\citep{shi2023diffusion} and our GSBM through the mean of $p_t(X_t|x_0, x_1)$ with randomly sampled $(x_0, x_1)$.  While DSBM simply performs linear interpolation between $x_0$ and $x_1$, our GSBM \emph{optimizes} w.r.t. \eq{6} where $V_t$ is defined via a latent space, thereby exhibiting \textbf{more semantically meaningful interpolations}.\looseness=-1
    }
    \label{fig:afhq-result-appendix}
    \vskip -0.12in
\end{figure}

\begin{table}[H]
    \caption{
        Quantitative comparison between NLSB \citep{koshizuka2023neural} and our \textbf{GSBM}. Notice that our GSBM consistently ensures feasibility. In contrast, as NLSB approaches the GSB problem by transforming it into a stochastic optimal control (SOC) problem with a soft terminal cost (see \Cref{app:review}), it may overly emphasize on optimality at the expense of feasibility.
    }
    \label{tb:nlsb}
    \centering
    \vskip -0.07in
    \begin{tabular}{lrrrrrr}
        \toprule
        & \multicolumn{3}{c}{\cyellow{\textbf{Feasibility}} $\calW(p_1^\theta,\nu)$ }
        & \multicolumn{3}{c}{ \cgreenblue{\textbf{Optimality}} \eq{3}} \\
        \cmidrule(lr){2-4} \cmidrule(lr){5-7}
        & Stunnel & Vneck   & GMM     & Stunnel & Vneck   & GMM \\
        \midrule
        NLSB {\small\citep{koshizuka2023neural}}
         & 30.54 & 0.02 & 67.76 & 207.06  & 147.85 & 4202.71 \\[2pt]
        \textbf{GSBM (ours)}             &  0.03 & 0.01 &  4.13 & 460.88  & 155.53 &  229.12 \\
        \bottomrule
    \end{tabular}
\end{table}

\begin{figure}[H]
    \vskip -0.12in
    \centering
    \includegraphics[width=\textwidth]{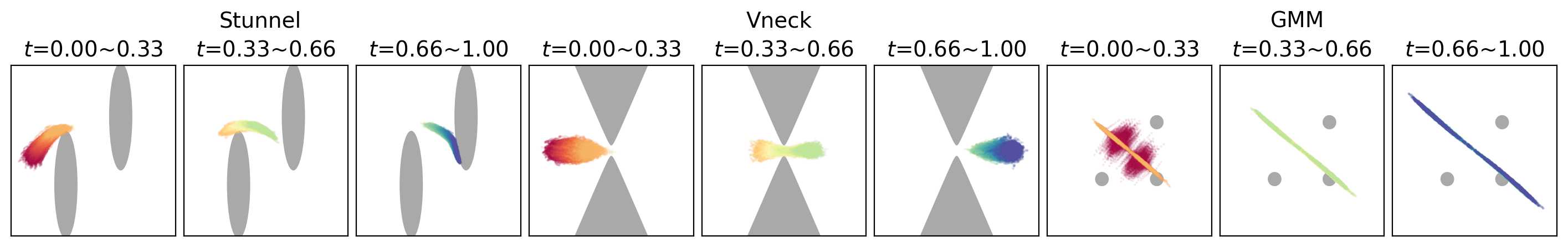}
    \vskip -0.05in
    \caption{
        Performance of NLSB \citep{koshizuka2023neural} on the same crowd navigation tasks. Notice how NLSB was trapped in some local minima on Stunnel and completely failed on GMM.
    }
    \label{fig:nlsb}
    \vskip -0.1in
\end{figure}

\textbf{Additional interpolation results on AFHQ.}$\quad$
\Cref{fig:afhq-result-appendix} provides comparison results between DSBM~\citep{shi2023diffusion} and our GSBM on $p_t(X_t|x_0, x_1$ with randomly sampled $(x_0, x_1)$.

\textbf{Additional comparisons on crowd navigation with mean-field cost.}$\quad$
\Cref{tb:nlsb,fig:nlsb} report the performance of NLSB\footnote{
    \url{https://github.com/take-koshizuka/nlsb}, under MIT License.
} \citep{koshizuka2023neural}---an adjoint-based method for approximating solutions to the same GSB problem \eq{3}.
\Cref{fig:app-diverge} provides additional comparison between DeepGSB \citep{liu2022deep} and our GSBM.
It should be obvious that our GSBM outperforms both methods.

\begin{figure}[H]
    \vskip -0.05in
    \centering
    \includegraphics[width=0.98\textwidth]{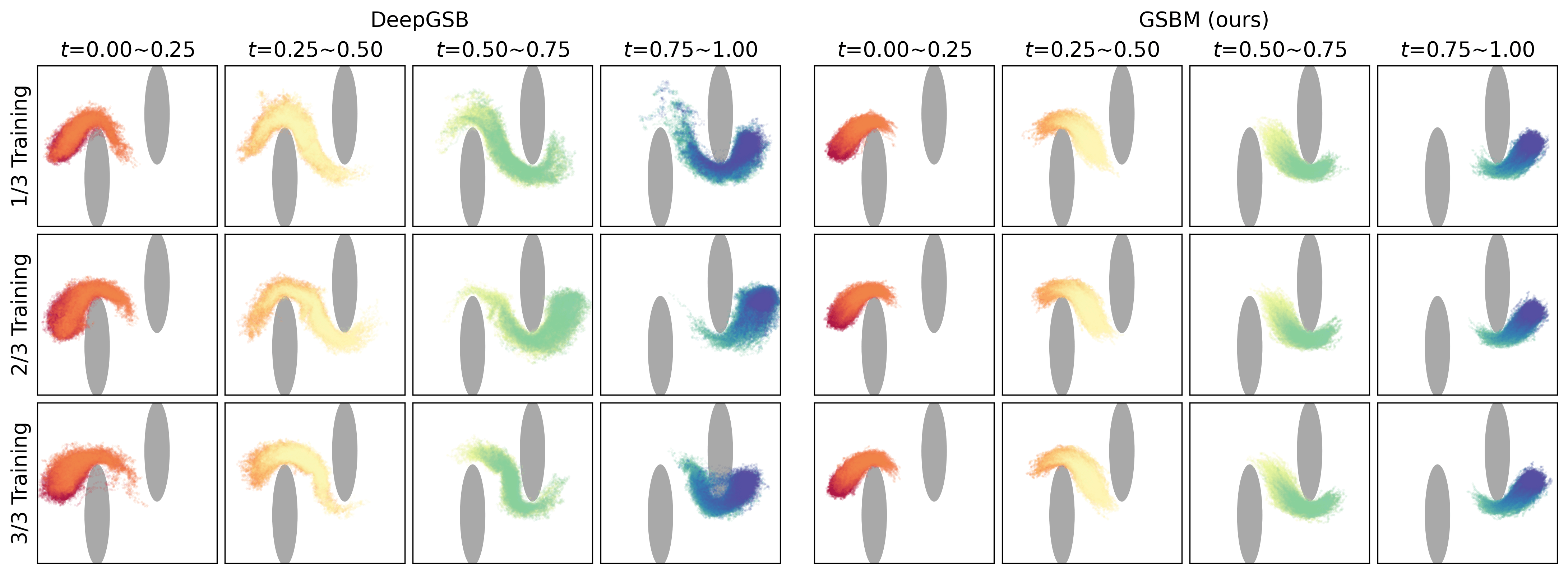}
    \includegraphics[width=0.98\textwidth]{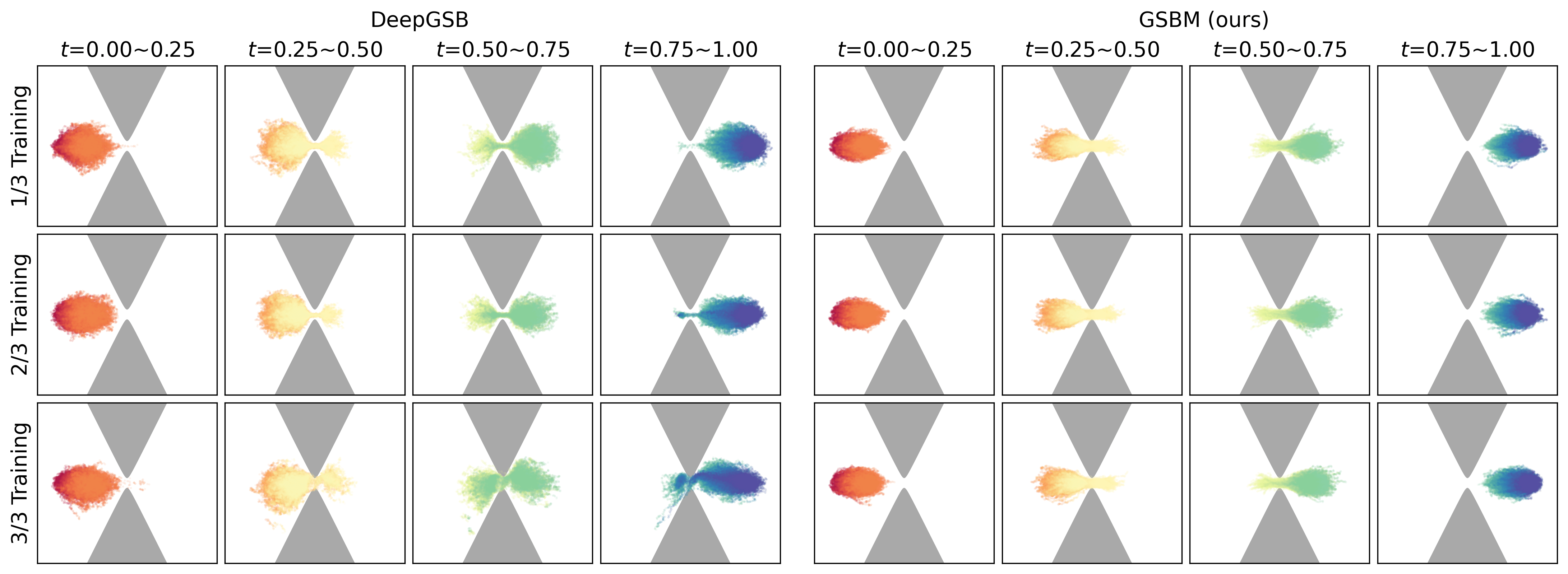}
    \includegraphics[width=0.98\textwidth]{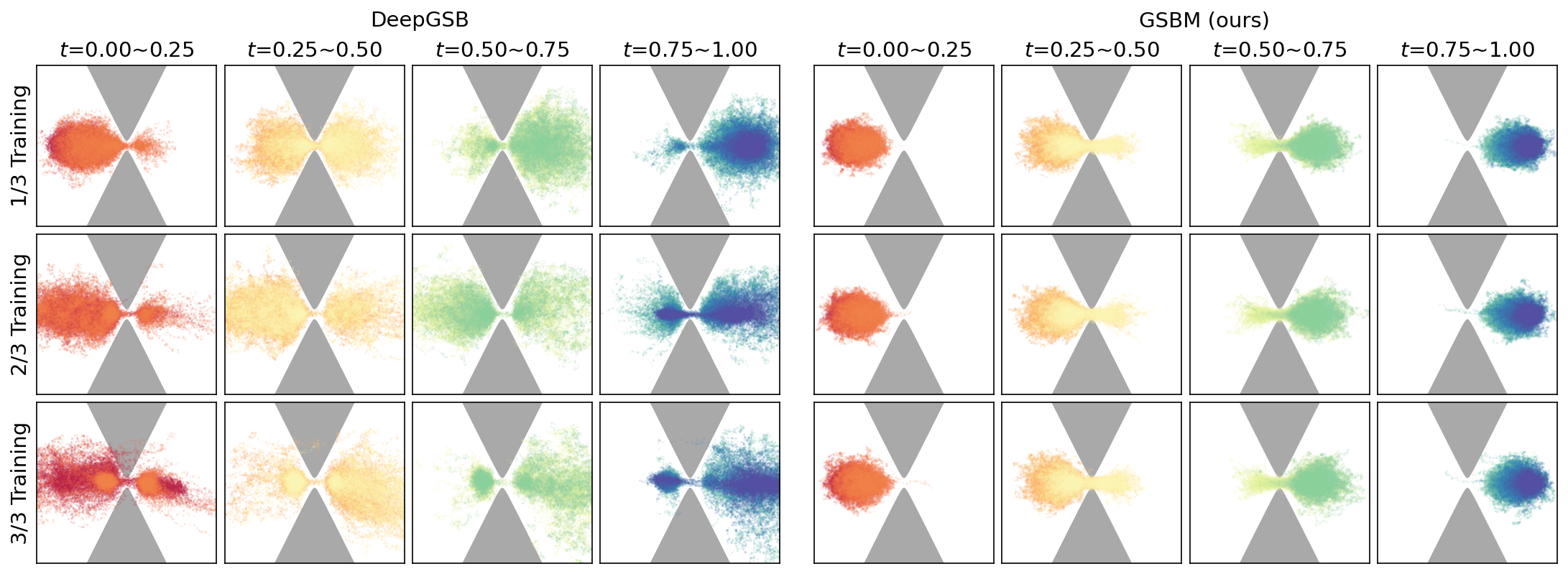}
    \vskip -0.1in
    \caption{
        Additional comparison betwee DeepGSB \citep{liu2022deep} our GSBM on (\textit{top to bottom}) Stunnel with $\sigma=2.0$ and Vneck with $\sigma=\{1.0,2.0\}$.
        Notice again how the training of DeepGSB exhibits relative instability and occasional divergence.
    }
    \label{fig:app-diverge}
    \vskip -0.05in
\end{figure}

\end{document}